\title{Stabilizing RNN Gradients through Pre-training}
\author{%
Luca Herranz-Celotti \quad Jean Rouat \\
NECOTIS, Universit\'e de Sherbrooke, Canada\\
\texttt{\{luca.celotti,jean.rouat\}@usherbrooke.ca}
}
\date{February 2022}
\begin{document}

\maketitle

\begin{abstract}
    Numerous theories of learning propose to prevent the gradient from  exponential growth with depth or time, to stabilize and improve training. Typically, these analyses are conducted on feed-forward fully-connected neural networks or simple single-layer recurrent neural networks, given their mathematical tractability. In contrast, this study demonstrates that pre-training the network to local stability can be effective whenever the architectures are too complex for an analytical initialization. Furthermore, we extend known stability theories to encompass a broader family of deep recurrent networks, requiring minimal assumptions on data and parameter distribution, a theory we call the Local Stability Condition (LSC). Our investigation reveals that the classical Glorot, He, and Orthogonal initialization schemes satisfy the LSC when  applied to feed-forward fully-connected neural networks. However, analysing  deep recurrent networks, we identify a new additive source of exponential explosion that emerges from counting gradient paths in a rectangular grid in depth and time. We propose a new approach to mitigate this issue, that consists on giving a weight of a half to the time and depth contributions to the gradient, instead of the classical weight of one. Our empirical results confirm that pre-training both feed-forward and recurrent networks, for differentiable, neuromorphic and state-space models to fulfill the LSC, often results in improved final performance. This study contributes to the field by providing a means to stabilize networks of any complexity. Our approach can be implemented as an additional step before pre-training on large augmented datasets, and as an alternative to finding stable initializations analytically.
\end{abstract}

\section{Introduction}
\label{sec:intro}

% reference on how to structure the paper
% https://www.nature.com/scitable/topicpage/scientific-papers-13815490/

% First, provide some context to orient those readers who are less familiar with your topic and to establish the importance of your work.

% The initialization literature aims at mitigating the negative effect that gradient explosion has on learning \cite{hochreiter1991untersuchungen, bengio1994learning, hochreiter1997long, hochreiter2001gradient, glorot2010understanding, orthogonal_initialization, he2015delving}. However, it remains an open issue how to properly initialize deep recurrent networks of $d$ layers ($d$-RNN), given that the literature has focused on simple architectures that are either deep but shallow in time (FFN for feed-forward networks) \cite{glorot2010understanding,orthogonal_initialization,he2015delving,roberts2022principles, defazio2022scaling}, or shallow in layers and deep in time ($1$-RNN) \cite{bengio1994learning,hochreiter1997long, hochreiter2001gradient, arjovsky2016unitary, pascanu2013difficulty}. 
Despite all the efforts to mitigate the negative effect that gradient explosion has on learning \cite{hochreiter1991untersuchungen, bengio1994learning, hochreiter1997long, hochreiter2001gradient, glorot2010understanding, orthogonal_initialization, he2015delving}, how to properly initialize deep recurrent networks ($d$-RNN) remains an open problem. In fact, the literature has focused on simple architectures that are either deep but shallow in time (FFN for feed-forward networks) \cite{glorot2010understanding,orthogonal_initialization,he2015delving,roberts2022principles, defazio2022scaling}, or shallow in layers and deep in time ($1$-RNN) \cite{bengio1994learning,hochreiter1997long, hochreiter2001gradient, pascanu2013difficulty, arjovsky2016unitary,  kerg2019non}, unintentionally missing the effect that the combined depth in layers and in time have on learning. 
Meanwhile, in the era of Transformer-based architectures \cite{vaswani2017attention, brown2020language}, the RNN hidden state has been introduced back from oblivion to allow the attention networks to observe longer sequences \cite{dai2019transformer, katharopoulos2020transformers, poli2023hyena, peng2023rwkv}, renewing the interest in such a fundamental piece of the learning tool-kit. Similarly, a new line of research shows great promise in classification problems for very long sequences, by using clever RNNs,  known as state-space models, where the Legendre Memory Unit, the S4, S5 and Linear Recurrent Unit, are some examples \cite{hangos2004analysis, LMU, gu2022efficiently,smith2023simplified, lru}.
Moreover, a good understanding of RNN dynamics is impactful in many scientific communities that make use of parametrized dynamical systems. This is the case for example when modeling biological neurons in computational neuroscience, or in neuromorphic computing, where recurrency is necessary to take advantage of highly energy efficient devices \cite{henderson2020towards,blouw2019benchmarking, 9395703, lapique1907recherches, izhikevich2003simple}. 
However, these neuron definitions are often complex, making standard parameter initialization strategies from deep learning sub-optimal, and designing a specific initialization strategy for each, cumbersome. Hence, there is a need for a tool to stabilize a wide variety of network definitions before training, and therefore, for broad theories of $d$-RNN stability to justify such a tool.

% Second, state the need for your work, as an opposition between what the scientific community currently has and what it wants.

% Third, indicate what you have done in an effort to address the need (this is the task).
To address such concerns, here we propose a pre-training to stability method that is simple to apply to a wide variety of architectures and neuron definitions, including among others differentiable and neuromorphic neurons, such as the LSTM, GRU and ALIF neurons \cite{hochreiter1997long, chung2015gated, lapique1907recherches, lsnn}. Additionally, we extend existing initialization theories, such as Glorot, Orthogonal and He \cite{glorot2010understanding, orthogonal_initialization, he2015delving}, to be able to describe the stability of a wider family of $d$-RNN, and we quantify how the variance of the parameter update depends on time and depth. In contrast with classical descriptions \cite{hochreiter1991untersuchungen, bengio1994learning, kerg2019non}, we find two sources of gradient exponential explosion. One is  multiplicative, often described in the FFN and $1$-RNN literature, the other is additive, and is  consequence of adding all the gradient paths in the time and depth grid. To reduce the variance from exponential to linear growth, we propose one sufficient condition, the Local Stability Condition (LSC), and we propose to weight the depth and time components of the gradient as a half with $d$-RNNs, given that the classical weight of one leads to the additive exponential explosion. 
% Also, we prove that beyond commonly used upper bounds to the variance, lower bounds can be defined. However, while we show that stabilizing lower and upper bounds simultaneously has a positive effect on the performance of FFNs, it is challenging to achieve convergence on both bounds with RNNs. 
Finally, we show experimentally that our pre-training to stability to attain the LSC, leads to better final performance for a broad range of $d$-RNNs, for both, differentiable and neuromorphic models \cite{hochreiter1997long, chung2015gated, lapique1907recherches, lsnn}.

Our main contributions are summarized below:

\begin{itemize}[noitemsep,nolistsep]
    \item We propose a pre-training to local stability method, that removes the need for architecture specific initializations, Sec.~\ref{sec:pretrain};
    \item We propose the Local Stability Condition (LSC) as an extension of existing stability theories, to describe the gradient stability of a wide family of multi-layer recurrent networks, with minimal assumptions on parameter and data distribution, Sec.~\ref{sec:lsc};
    \item We show that a new form of exponential explosion arises, due to the addition of gradient paths in the time and depth grid, and we propose a method to suppress it, Sec.~\ref{sec:lsc};
    \item We prove that the Glorot \cite{glorot2010understanding}, He \cite{he2015delving}, and Orthogonal \cite{orthogonal_initialization} initializations are special cases of the LSC, Sec.~\ref{sec:convislsc};
    \item We show on feed-forward and recurrent architectures, on differentiable, neuromorphic and state-space models, that pre-training  to satisfy the LSC can improve final performance experimentally, Sec.~\ref{sec:results}.
\end{itemize}

\section{What makes deep RNNs different \\ from shallow RNNs and deep FFNs?}

\subsection{General deep RNN and Notation}

Given that neither Glorot, Orthogonal and He, nor neuron specific 1-RNN theories describe $d$-RNN, we want to generalize those established theories of gradient stability to a wider family of $d$-RNN. Let us consider a stack of general recurrent layers, that will make the mathematical analysis simpler:
\begin{equation}\label{eq:general_sys2}
  \begin{aligned}
    \boldsymbol{h}_{t,l} =& \ g_h( \boldsymbol{h}_{t-1,l}, \boldsymbol{h}_{t-1,l-1},  \boldsymbol{\theta}_l) \\
    \hat{\boldsymbol{o}}_t =& \ g_o(\boldsymbol{h}_{t,L},  \boldsymbol{\theta}_o)
  \end{aligned}
\end{equation}
\noindent where we index time and layer with $t\in\{0, \cdots, T\},l\in \{0,\cdots, L\}$, and the hidden state $\boldsymbol{h}_{t,l}$ depends on the previous time-step and layer in a general form $g_h$. Somewhat in contrast with Machine Learning notation, we use causal time indices, more in line with Dynamical Systems notation, so, terms on the left of the equation are at least one time step ahead than terms on the right if the same variable name appears both sides. Notice that whenever the neuron of interest is defined with several hidden states, such as the LSTM or the ALIF, in our notation $\boldsymbol{h}_{t,l}$ will represent the concatenation of all of them. We represent as $\boldsymbol{h}_{t,0}$,  the task input data fed to the first layer, $l=1$. We perform the analysis for the most general form of $g_h, g_o$, with the only assumption that they are first order differentiable, or augmented with a surrogate gradient wherever they were not \cite{esser2016convolutional, zenke2018superspike, lsnn}. 
We denote vectors as lower case bold $\boldsymbol{a}$, matrices as upper case $A$, and their elements as lower case $a$. The variable $\hat{\boldsymbol{o}}_t$ represents the network output, where the hat means that it is the network approximation of the true task output $\boldsymbol{o}_t$.
If $\boldsymbol{h}_{t,l}\in\mathbb{R}^{n_l}$, we call $\boldsymbol{\theta}_l\in\mathbb{R}^{m_l}$ all the learnable parameters in layer $l$, where the number of parameter elements $m_l$ does not need to coincide with the layer width $n_l$, and similarly for $\boldsymbol{\theta}_o\in\mathbb{R}^{m_o}$. 
We call the matrices $M_k\in\{\partial \boldsymbol{h}_{t,l}/\partial \boldsymbol{h}_{t-1,l}, \partial \boldsymbol{h}_{t,l}/\partial \boldsymbol{h}_{t-1,l-1}\}_{t,l}$, the \textit{transition derivatives}, from one state to the following in time or in depth. We call $\rho(M_k)$  the transition derivative radius \cite{belitskii2013matrix, horn1990}, which is the largest modulus of its eigenvalues and it is often used to describe a matrix magnitude.

\subsection{The Local Stability Condition (LSC)}
\label{sec:lsc}

We say that the network satisfies the Local Stability Condition (LSC) when the expectation of each transition derivative radius is one or a half. We prove in App.~\ref{app:lsc} the following

\begin{restatable}[Local Stability Condition, with radii $\E\rho\in \{0.5, 1\}$]{theorem}{ulscboth}
\label{thm:ulscboth} Be the multi-layer RNN in eq.~\ref{eq:general_sys2}. Setting the radii of every transition derivative $M_k$ to $\E\rho=1$ gives an upper bound to the parameter update variance that increases with time and depth as the binomial coefficient $\frac{1}{T}\binom{T + L +2}{T}$. Instead, setting the radii to $\E\rho=0.5$ gives an upper bound that increases \mbox{linearly as $T$}.
\end{restatable}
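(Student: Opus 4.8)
The plan is to begin with the backpropagation-through-time-and-depth expansion of the gradient. Differentiating the (time-normalised) loss $\mathcal{L}=\frac{1}{T}\sum_{t}\ell(\hat{\boldsymbol{o}}_t,\boldsymbol{o}_t)$ with respect to the parameters $\boldsymbol{\theta}_l$ of layer $l$ of the system in eq.~\ref{eq:general_sys2}, the chain rule produces a sum, over every output time $t$ and every time $s\le t$ at which $\boldsymbol{\theta}_l$ is used, of a term
\[
\frac{1}{T}\,\frac{\partial\ell}{\partial\hat{\boldsymbol{o}}_t}\,\frac{\partial\hat{\boldsymbol{o}}_t}{\partial\boldsymbol{h}_{t,L}}\left(\ \sum_{P\,:\,(t,L)\to(s,l)}\ \ \prod_{M_k\in P}M_k\right)\frac{\partial\boldsymbol{h}_{s,l}}{\partial\boldsymbol{\theta}_l},
\]
where $P$ runs over the monotone lattice paths that descend, in the rectangular depth--time grid, from node $(t,L)$ to node $(s,l)$ by recurrent steps $\partial\boldsymbol{h}_{t,l}/\partial\boldsymbol{h}_{t-1,l}$ and depth steps $\partial\boldsymbol{h}_{t,l}/\partial\boldsymbol{h}_{t-1,l-1}$, each path contributing the ordered product of the transition derivatives $M_k$ it visits. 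I would record here that the worst case for the update magnitude is the first layer, $l=1$, whose parameters sit at the end of the longest and most numerous paths.

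Next I would pass from the gradient to the parameter update variance $\mathrm{Var}(\Delta\boldsymbol{\theta}_l)\propto\E\|\partial\mathcal{L}/\partial\boldsymbol{\theta}_l\|^{2}$. Expanding the square gives a double sum over pairs of paths; invoking the minimal distributional hypotheses (zero-mean parameters, independence across layers, and an inequality bounding the expected operator size of each $M_k$ by its radius $\E\rho(M_k)$), the cross terms are controlled and, after absorbing the bounded output and parameter Jacobians into a constant, the variance is dominated by a \emph{weighted path count}
\[
\mathrm{Var}(\Delta\boldsymbol{\theta}_l)\ \le\ C\sum_{t}\ \ \sum_{s\le t}\ \ \sum_{P\,:\,(t,L)\to(s,1)}\ \ \prod_{M_k\in P}\E\rho(M_k),
\]
so that under the LSC every factor equals $r=\E\rho\in\{1,\tfrac12\}$ and a path with $\ell$ transition edges weighs $r^{\ell}$; accounting for the two bracketing Jacobians as extra grid edges is what produces the ``$+2$'' in the exponent, and the loss normalisation the $1/T$ in front of the $\E\rho=1$ bound.

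The rest is combinatorics. For $r=1$ each path weighs $1$, so the sum is simply the number of descending lattice paths from an output at time $t$ to the first-layer parameters; performing the sum first over the endpoint time $s$ and then over the output time $t$ are two applications of the hockey-stick identity $\sum_{a=0}^{n}\binom{a+m}{m}=\binom{n+m+1}{m+1}$, which telescope to a binomial coefficient of shape $\binom{T+L+2}{T}$; since $\binom{T+L+2}{T}\ge 2^{\min(T,L)}$ this grows exponentially once depth and time grow together (roughly $4^{T}$ for $T=L$), giving the stated $\frac{1}{T}\binom{T+L+2}{T}$. For $r=\tfrac12$ a path with $\ell$ edges weighs $2^{-\ell}$, and the inner sum over endpoints $s$ and paths $P$ is now a truncated negative-binomial series $\sum_{d\ge 0}\binom{d+(L-1)}{L-1}(\tfrac12)^{d+(L-1)}\le 2$, a constant independent of $t$ and of $L$: the geometric decay exactly cancels the lattice-path growth in \emph{both} the depth and the time directions. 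The only surviving sum is then the one over the $T$ output time steps --- which carries no transition derivative and therefore no geometric discount --- so the bound grows only linearly in $T$.

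I expect the main obstacle to be the reduction carried out in the second step: going from the exact variance, a double sum over pairs of paths whose transition derivatives are \emph{not} independent (every recurrent Jacobian inside a layer involves the same $\boldsymbol{\theta}_l$, and two paths can share an arbitrary sub-path), to the clean single weighted path count with a multiplicative $\prod_k\E\rho(M_k)$ bound. This needs, first, that the cross-path covariances be non-positive or negligible --- which is exactly where the zero-mean and across-layer independence hypotheses are used, and why the theorem can be stated under minimal assumptions on parameter and data distribution; and second, that the expected size of a chained product of transition derivatives be controlled by the product of their radii, which is delicate because $\rho(\cdot)$ is submultiplicative only under extra structure (e.g.\ approximate normality of the Jacobians at initialisation). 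Making this last point rigorous is the crux; by comparison the lattice-path bookkeeping behind the exponent $T+L+2$, and the check that in the $\E\rho=\tfrac12$ regime the residual output-time sum genuinely cannot be discounted, are routine.
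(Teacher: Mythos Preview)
Your proposal has a genuine gap at exactly the point you flag as ``the crux'': the passage from the gradient expansion to the weighted path count $\sum_P\prod_k\E\rho(M_k)$. You propose to expand the squared norm, kill cross-path covariances via zero-mean/independence, and then bound the size of a chain by the product of expected radii. None of these three moves is what the paper does, and the last one does not go through: the spectral radius is \emph{not} submultiplicative on general matrices, and the standard inequality runs the wrong way ($\rho(A)\le\|A\|$, not $\|A\|\le\rho(A)$). The paper sidesteps both issues. First, it never expands the square: using the small-update assumption $\|\Delta\boldsymbol{\theta}_l\|_F\le1$ it bounds $\Var\Delta\theta_l$ by the \emph{first} moment $\E\|\Delta\boldsymbol{\theta}_l\|_M$ and then applies sub-additivity/sub-multiplicativity of a norm once, landing on $\sum_c\prod_k a_k$ with $a_k=\|M_k\|_M$. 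Second, it never factors an expectation of a product: it applies the arithmetic--geometric mean inequality $\prod_k a_k\le(\tfrac1n\sum_k a_k)^n$ and then a Bernstein weak law of large numbers (under a ``Decaying Covariance'' hypothesis on the $a_k$, not your independence hypothesis) to replace $\tfrac1n\sum_k a_k$ by $\E a_k$ in probability. The link to radii comes last, through the existence of a submultiplicative norm with $\|A\|_M\le\rho(A)+\epsilon$. This AGMI\,+\,Bernstein device is the missing idea in your outline.

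For $\E\rho=\tfrac12$ the paper again does something different from your negative-binomial sum. It works directly on the Pascal recursion $J^{T,L}_{t',l}=J^{T,L}_{t'+1,l}M+J^{T,L}_{t'+1,l+1}M'$: taking norms, replacing each $\|J\|$ by $\hat J=\max\|J\|$ and each $\E\|M\|$ by $\E\rho$, the two contributions sum to $\hat J(\E\rho_{\text{time}}+\E\rho_{\text{depth}})=\hat J$, so $\E\|J^{T,L}_{t',l}\|$ is bounded by a constant uniformly in $t',l$. The remaining sums $\tfrac1T\sum_t\sum_{t'\le t}1=(T{+}1)/2$ then give the linear bound. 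Your negative-binomial computation is correct combinatorics and would in fact yield a tighter (constant) estimate on the inner sum, but it presupposes the per-path $\prod_k\E\rho(M_k)$ bound you have not justified; the paper's recursive argument needs only sub-additivity and sub-multiplicativity of a norm together with $\E\rho_{\text{time}}+\E\rho_{\text{depth}}=1$. Your hockey-stick count for the $\E\rho=1$ case matches the paper's parallel-summation argument, and the ``$+2$'' comes from the two successive summations over $t'$ and $t$, not from the bracketing Jacobians.
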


First of all, we see that for $\E\rho=1$, the bound to the variance of the parameter update grows with depth $L$, while it does not for $\E\rho=0.5$. Moreover, this result is particularly intriguing because $\E\rho=1$ is known to annihilate the multiplicative sources of exponential explosions in FFN and $1$-RNN \cite{bengio1994learning,hochreiter1997long, hochreiter2001gradient, arjovsky2016unitary, pascanu2013difficulty, glorot2010understanding, he2015delving, orthogonal_initialization}, so, the variance is sub-exponential whenever $L$ or $T$ are fixed and the other tends to infinity. However, we observe that a $d$-RNN brings a new source of exponential explosion that is additive, and comes from an exponentially increasing number of gradient paths when we take $T,L\rightarrow\infty$ simultaneously, as shown in Fig.~\ref{fig:subexp} and proven in Lemma~\ref{thm:binomialgrad} App.~\ref{app:counting}. Instead, setting $\E\rho=0.5$ results in a variance that still increases with time and depth, but linearly, i.e. sub-exponentially, even when $T,L\rightarrow\infty$ together. 
% This choice of radius weights equally the information coming from the depth dimension than the information coming from the time dimension, which can be seen as the most non-informative decision to make at initialization in a bayesian sense.  
% In Bayesian terms, this selection of radius assigns equal weight to the information from both the depth and time dimensions, which can be considered to be the most uninformative prior at initialization.
This choice of radius effectively results in a gradient that is locally averaged through time and depth with the most uninformative and entropic prior \cite{jaynes1968prior}. Also notice that if we call \mbox{$\rho_{time}=\rho(\partial \boldsymbol{h}_{t,l}/\partial \boldsymbol{h}_{t-1,l})$} and $\rho_{depth}=\rho(\partial \boldsymbol{h}_{t,l}/\partial \boldsymbol{h}_{t-1,l-1})$, the bound is still valid if $\E\rho = (\E\rho_{time}+\E\rho_{depth})/2=0.5$, and thus we can decide to give a stronger weight to the time or depth component, as long as we diminish the weight to the other so as to meet the overall $0.5$ expectation.

Note that Thm.~\ref{thm:ulscboth} provides an upper bound. The proof relied on using matrix norms that are sub-additive and sub-multiplicative \cite{belitskii2013matrix, horn1990}. However, we prove that for general matrices there is no matrix function that is super-additive and super-multiplicative  (see Thm.~\ref{thm:ssn}), and hence, a lower bound cannot be found in general using a similar methodology. Nevertheless, if we restrict ourselves to positive semi-definite matrices, the determinant is super-additive and multiplicative. Therefore, a lower bound to the variance can be satisfied for that restricted family of matrices (see Thm.~\ref{thm:llsc}). Notice that the positive semi-definite requirement, could be in practice encouraged through a loss term during pre-training. Satisfying only the lower bound would therefore require all the transition derivatives to have determinant of one, whereas satisfying both the upper and lower bounds would require all their eigenvalues to be one. However,  experimentally we found it difficult to stabilize through pre-training the lower bound or both bounds simultaneously, and despite being an interesting direction for future work, we chose to focus on stabilizing the upper bound.

% both the upper and lower bounds simultaneously is relatively easy for simple FFNs, we had difficulties with RNNs to have all the eigenvalues of the transition derivatives converge to one, possibly because the neuron definition is more complex. Furthermore, we found that satisfying only the lower bound is hard for both FFNs and RNNs, given the overflow instabilities of determinants.
% even if experimentally we find that stabilizing both the upper and lower bounds simultaneously is relatively easy for simple FFNs, we had difficulties with RNNs to have all the eigenvalues of the transition derivatives converge to one, possibly because the neuron definition is more complex. Furthermore, we found that satisfying only the lower bound is hard for both FFNs and RNNs, given the overflow instabilities of determinants.

\subsection{Conventional initialization methods are LSC}
\label{sec:convislsc}

It is remarkable that our proposed LSC is not at odds with existing theories.
In fact, we prove in App.~\ref{sec:equivalence} that the three major initialization schemes in the Deep Learning literature, are particular cases of the LSC when applied to FFN:

\begin{restatable}{corollary}{glohe}
\label{thm:glohe}Glorot \cite{glorot2010understanding} and Orthogonal \cite{orthogonal_initialization} initializations for linear FFN, and He \cite{he2015delving} initialization for ReLU FFN, are special cases of the $LSC$ for $\E\rho=1$.
\end{restatable}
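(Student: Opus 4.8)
The plan is to recognize that a feed-forward fully-connected network is just the degenerate case of the general system in eq.~\ref{eq:general_sys2} carrying no time dependence: if $g_h$ does not depend on its first argument, then $\boldsymbol{h}_{t,l}=\boldsymbol{h}_l=g_h(\boldsymbol{h}_{l-1},\boldsymbol{\theta}_l)$, the time transition derivative $\partial\boldsymbol{h}_{t,l}/\partial\boldsymbol{h}_{t-1,l}$ is absent, and the only surviving family of transition derivatives is the layer-to-layer Jacobian $J_l=\partial\boldsymbol{h}_{t,l}/\partial\boldsymbol{h}_{t-1,l-1}$. The LSC then collapses to the single requirement $\E\rho(J_l)=1$ for every $l$, and the corollary reduces to computing $\E\rho(J_l)$ under each of the three classical schemes. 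So the first step is this reduction, making precise the identification of the FFN with the $\E\rho=1$, time-trivial instance of Thm.~\ref{thm:ulscboth}.

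For the linear FFN, $g_h(\boldsymbol{h}_{l-1},\boldsymbol{\theta}_l)=W_l\boldsymbol{h}_{l-1}$ and hence $J_l=W_l$. The Orthogonal case is immediate: every singular value and every eigenvalue modulus of an orthogonal $W_l$ equals one, so $\rho(W_l)=1$ deterministically, hence in expectation. For Glorot, I would re-derive the two variance-propagation identities that define the scheme — the forward ``gain'' through $J_l$ and the backward gain through $J_l^{\!\top}$ are $n_l\,\mathrm{Var}(w)$ and $n_{l-1}\,\mathrm{Var}(w)$ respectively for i.i.d.\ mean-zero entries, obtained by a one-line computation of $\E\|W_l\boldsymbol v\|^2/\|\boldsymbol v\|^2$ — and observe that Glorot's choice $\mathrm{Var}(w)=2/(n_{l-1}+n_l)$ is exactly the value making the average of these two gains equal to one (and, for square layers, making each separately one). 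The remaining work is to connect this gain to the transition-derivative radius $\rho$ used in App.~\ref{app:lsc}: for the relevant averaged / normal-matrix notion the gain is $\E\rho(W_l)$ (up to the square), so Glorot's condition is literally the $\E\rho=1$ instance of the LSC; for square layers one may additionally invoke the circular law, $\rho(W_l)\to\sigma\sqrt n=1$, to recover the statement for the genuine spectral radius.

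For the ReLU FFN, $\boldsymbol{h}_l=\mathrm{ReLU}(W_l\boldsymbol{h}_{l-1})$, so by the chain rule $J_l=D_l W_l$ with $D_l=\mathrm{diag}\!\big(\mathbf{1}[W_l\boldsymbol{h}_{l-1}>0]\big)$ a $0/1$ diagonal matrix. At initialization the pre-activations are symmetric about zero, so each diagonal entry is Bernoulli$(1/2)$ and, in expectation, half of the rows are annihilated; propagating a fixed vector then gives a gain $\tfrac12\,n_{l-1}\,\mathrm{Var}(w)$, and He's choice $\mathrm{Var}(w)=2/n_{l-1}$ is precisely the one cancelling the extra factor $\tfrac12$, so $\E\rho(J_l)=1$ again. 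I expect the main obstacle to be exactly this identification between the abstract radius $\rho$ in the LSC and the variance-propagation gain: for rectangular layers eigenvalues are undefined and one must argue through singular values or an averaged quadratic form, and for ReLU the mask $D_l$ is not independent of $W_l$, so one has to verify that the expectation nonetheless factorizes as $\E[D_l]$ times a quadratic form in $W_l$. The random-matrix statements (circular law, Marchenko--Pastur) are only needed to sharpen the almost-sure behaviour and are inessential to the expectation-level claim, which is what the LSC requires.
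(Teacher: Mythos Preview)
Your high-level strategy matches the paper's: reduce the FFN to the time-trivial instance of eq.~\ref{eq:general_sys2}, so that the LSC becomes the single layer-wise constraint $\E\rho(J_l)=1$, and then verify this constraint under each of the three schemes. The Orthogonal case is handled identically in both.

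For Glorot and He, however, you take a detour that the paper avoids, and the detour contains the gap you yourself flag. Your argument computes the \emph{variance-propagation gain} $\E\|W\boldsymbol v\|^2/\|\boldsymbol v\|^2$ (or its ReLU-masked analogue) and then asserts that this quantity coincides with $(\E\rho)^2$ ``for the relevant averaged / normal-matrix notion''. But that identification is exactly what needs proof: the gain is a second-moment trace quantity, whereas $\rho$ is the largest eigenvalue modulus, and for generic random matrices these do not agree. Your fallback to the circular law is the right instinct, but it is only asymptotic and only for square layers, and you leave it as a remark rather than the engine of the argument.

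The paper's proof bypasses the gain computation entirely and goes straight to the spectral radius via Lemma~\ref{thm:kost}: using Kostlan's theorem it shows that for an $n\times n$ matrix with i.i.d.\ radially symmetric mean-zero entries, $\E\rho=\sqrt{2}\,\Gamma((n{+}1)/2)/\Gamma(n/2)\approx\sqrt n$ \emph{in expectation for every $n$}, not just in the large-$n$ limit. Homogeneity then gives $\E\rho(M)\approx\sqrt{n_l\,\Var(m_{ij})}$, and setting this equal to~$1$ with $m_{ij}=w_{l,i,j}H(y_{l-1,j})^k$ (where $k=0$ for linear, $k=1$ for ReLU, and $\E H=1/2$ under He's symmetry assumption) immediately yields $\Var w_l=2^k/n_l$. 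So Lemma~\ref{thm:kost} is precisely the missing bridge you were looking for between the entrywise variance and the expected spectral radius; with it in hand there is no need to route through forward/backward gains, Marchenko--Pastur, or the mask-independence worry you raise, since the paper just reads off $\Var(m_{ij})$ for the product entry and plugs it in.
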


\begin{figure*}
    \centering
    \includegraphics[width=\textwidth]{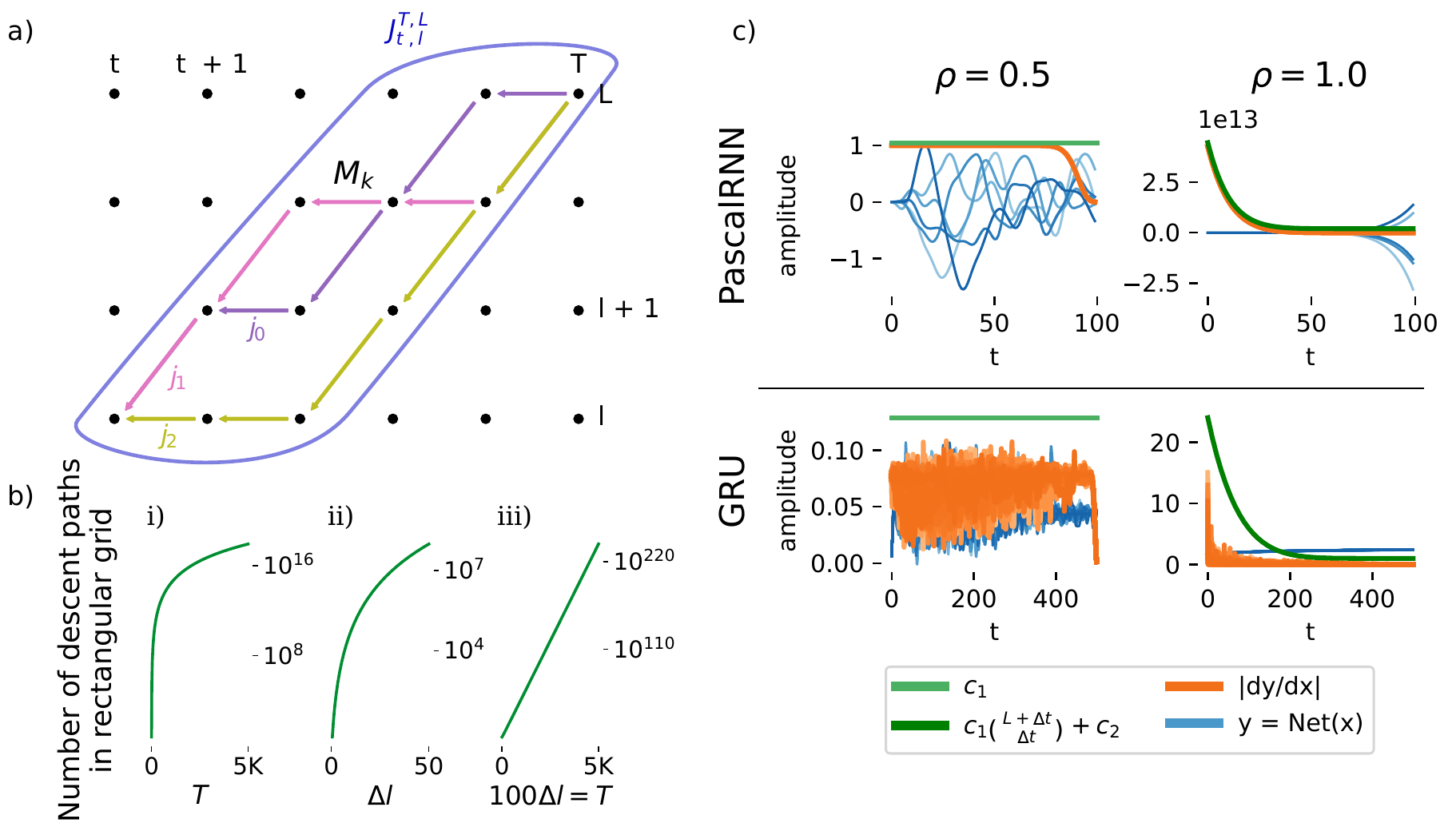}
    \caption{\textbf{Stabilizing to $\rho=1$ results in additive explosion while $\rho=0.5$ does not.} a) In a $d$-RNN gradients need to traverse both the time and the depth dimension when updating the learnable parameters. A transition derivative $M_k$ represents only one arrow in the time and depth grid, and there are several multiplicative chains $j_i$ to be considered, since the parameter update is going to use {\small$J^{T,L}_{t,l}$}, the sum of all the multiplicative chains from $T,L$ down to $t,l$. b) However, the number of paths $j_i$ is described by the binomial coefficient {\small$\binom{\Delta l + \Delta t}{\Delta t}$}, and therefore increases exponentially when time and depth tend to infinity simultaneously, as in iii) and proven in App.~\ref{app:counting}. In fact, an exponential growth looks like a straight line in a semi-log plot, as in iii). Instead, the aforementioned binomial coefficient grows only polynomially when either time or depth are kept fixed, as in i) and ii).  c) We confirm experimentally our theoretical analysis, on a toy network and on the LSC pre-trained GRU: $\rho=1$ reveals an explosion of additive origin (right panels), while $\rho=0.5$ is able to stabilize gradients through time (left panels).
    The upper panels show
    network output (blue), derivative (orange), and our derivative bounds (green), for the toy network that we define as the PascalRNN, $\boldsymbol{h}_{t,l}=\rho\boldsymbol{h}_{t-1,l}+\rho\boldsymbol{h}_{t-1,l-1}$, of depth 10 and gaussian input of mean zero and standard deviation 2, and lower panels show a LSC pre-trained GRU network of depth 7 and the SHD task as input. Both upper bounds to the derivative under $\rho\in\{0.5,1\}$, are part of the proof for Thm.~\ref{thm:ulscboth}, and $c_1, c_2$, defined in Thm.~\ref{thm:ulscboth} proof, are task and network dependent constants, that do not depend on time nor depth. Notice that the growth of the derivative and of the bound is backwards in time since backpropagation accumulates gradients backwards in operations, from $T,L$ to $0,0$. This confirms that standard FFN theories ($\rho=1$) cannot be directly applied to $d$-RNN, since they result in an unexpected additive gradient exponential explosion that is not accounted for.}
    \label{fig:subexp}
\end{figure*}

For the previous Corollary, we needed to prove the following Lemma on random matrices

\begin{restatable}[Expected Spectral Norm]{lemma}{kost}\label{thm:kost}
	Given a $n\times n$ real random matrix, with elements drawn from a radially symmetric point process with mean zero and variance one, its $k$ eigenvalues in expectation have modulus $\E|\lambda_k|=\sqrt{2}\Gamma((k+1)/2)/\Gamma(k/2)$ for any $k$, with vanishing variance as $k$ increases.
\end{restatable}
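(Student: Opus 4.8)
The plan is to reduce the statement to the Gaussian case and then read the moduli off a triangular factorization. First I would argue that the claim is \emph{universal}: only the mean and variance of the entries appear in the conclusion, so a Lindeberg-type replacement that Gaussianizes the entries one at a time, combined with the universality of the circular law and its edge/local refinements, should leave $\E|\lambda_k|$ and $\mathrm{Var}(|\lambda_k|)$ unchanged to the order claimed. Radial symmetry and the normalization $\E m_{ij}=0$, $\E m_{ij}^2=1$ are exactly what makes such a swap legitimate. Hence it suffices to treat $M$ with i.i.d.\ $\mathcal N(0,1)$ entries.

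For a real Gaussian $M$, factor $M=QR$ by Gram--Schmidt with $Q$ orthogonal and $R$ upper triangular with positive diagonal. By the Bartlett decomposition, the entries of $R$ are independent, the strictly-upper ones standard Gaussian and the diagonal ones satisfying $R_{jj}\sim\chi_{\,n-j+1}$; re-indexing $k=n-j+1$, the unordered set of diagonal magnitudes of the triangular factor is $\{\chi_1,\dots,\chi_n\}$. I would read the $|\lambda_k|$ of the statement as these magnitudes: they equal the eigenvalue moduli when $M$ is normal, and otherwise are the standard computable surrogate for the spectral magnitudes (the alternative, Kostlan's exact description of the Ginibre eigenvalue moduli as independent Gammas, gives the same $\sqrt k$ growth and the same concentration, which is all the downstream corollary uses; for $k=n$ both match the spectral radius $\rho(M)\asymp\sqrt n$). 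With $|\lambda_k|\stackrel{d}{=}\chi_k$, the mean is a Gamma integral:
\[
\E|\lambda_k|=\E\chi_k=\frac{1}{2^{k/2}\Gamma(k/2)}\int_0^{\infty}u^{(k-1)/2}e^{-u/2}\,du=\sqrt{2}\,\frac{\Gamma\!\big((k+1)/2\big)}{\Gamma(k/2)},
\]
which is the claimed formula.

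For the variance, use $\E\chi_k^2=k$, so $\mathrm{Var}(|\lambda_k|)=k-2\,\Gamma((k+1)/2)^2/\Gamma(k/2)^2$; a Stirling expansion of $\Gamma((k+1)/2)/\Gamma(k/2)$ gives $(\E\chi_k)^2=k-\tfrac12+O(1/k)$, hence $\mathrm{Var}(|\lambda_k|)=\tfrac12+O(1/k)$ stays bounded while $\E|\lambda_k|=\Theta(\sqrt k)$. Therefore the squared coefficient of variation $\mathrm{Var}(|\lambda_k|)/(\E|\lambda_k|)^2\to 0$, i.e.\ $|\lambda_k|/\sqrt k\to 1$ in probability as $k$ grows — the ``vanishing variance'' statement — and in particular $\rho(M)=|\lambda_n|$ concentrates around $\sqrt n$, which is precisely what Cor.~\ref{thm:glohe} needs.

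The main obstacle is the universality step: the clean chi-distributional identity is special to the Gaussian ensemble, so transferring the \emph{exact} Gamma-function formula (not merely its $\sqrt k$ asymptotics and concentration) to an arbitrary radially symmetric entry law requires a quantitative comparison theorem for individual eigenvalue statistics, which is substantially more delicate than the empirical circular law; realistically one settles for the formula being exact in the Gaussian case and asymptotically correct in general. A secondary point to be careful about is the identification of $|\lambda_k|$ with the triangular-factor diagonal (or with Kostlan's ordered moduli) for genuinely non-normal $M$, which is exact only for normal matrices and otherwise holds at the level of the first two moments and of the $k\to\infty$ concentration used later.
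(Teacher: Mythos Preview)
Your main argument has a genuine gap: the diagonal entries $R_{jj}$ of the Bartlett/QR factor are \emph{not} the moduli of the eigenvalues of $M$, and not even for normal $M$ in general. A simple counter-example is $M=\begin{pmatrix}1&1\\1&1\end{pmatrix}$, which is symmetric with eigenvalues $\{0,2\}$, yet its QR decomposition has $R$-diagonal $(\sqrt 2,0)$. The $R_{jj}$ are Gram--Schmidt lengths of the columns (equivalently, ratios of successive principal-minor volumes), which carry singular-value type information, not spectral information. So the identification ``$|\lambda_k|=R_{kk}$'' you rely on is unjustified, and the Bartlett route does not reach the stated conclusion about eigenvalue moduli.

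The paper's proof is essentially the ``alternative'' you mention in passing: it invokes Kostlan's theorem directly, which says that for such ensembles the set $\{|\lambda_1|,\dots,|\lambda_n|\}$ is equal in distribution to $\{\chi_1,\dots,\chi_n\}$; the chi mean formula and the variance computation then follow immediately. There is no QR step and no separate universality argument --- the radial-symmetry hypothesis is exactly what Kostlan's result uses. Your universality worry about transferring the \emph{exact} Gamma formula beyond the Gaussian case is a fair caveat, but the paper simply quotes Kostlan under that hypothesis rather than attempting a Lindeberg swap.

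One point where your proposal is \emph{more} careful than the paper: you correctly note that $\mathrm{Var}(\chi_k)=k-2\,\Gamma((k+1)/2)^2/\Gamma(k/2)^2\to\tfrac12$, not $0$, and that the right reading of ``vanishing variance'' is vanishing coefficient of variation, $\mathrm{Var}(|\lambda_k|)/(\E|\lambda_k|)^2\to 0$. The paper's limit $k-(\sqrt k)^2=0$ glosses over this; your interpretation is the one that actually supports the concentration $\rho(M)\approx\sqrt n$ used in the downstream corollary.
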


\noindent where $\Gamma(\cdot)$ is the gamma function. Lemma~\ref{thm:kost} is a version of the strong circular law \cite{tao2010random} for random matrix eigenvalues that is valid  in expectation for any matrix size. 

\subsection{Pre-train to stability}
\label{sec:pretrain}

\noindent\textbf{Gradient Descent Pre-training.}
A limitation of current initialization methods is that every time a new architecture is introduced, a new theoretically justified initialization should be calculated to ensure network stability, and avoid the gradient exponential explosion.
To address this, we suggest a pre-training approach where the network is trained to meet the LSC using the desired task inputs with randomized output classes, as a means of preparing the network for training.
In the preparation phase, we minimize only the mean squared error loss between the radii $\rho$ of all the transition derivatives $M_k$, and a target radius $\rho_t$, at each time step in the task:
\begin{align}    \mathcal{L}_{preparation} = \sum_{\forall M_k} \Big(\rho(M_k)-\rho_t\Big)^2
\end{align}
Similarly, \cite{li2019orthogonal} suggests pre-training the weight matrices to be orthogonal but not the transition derivatives themselves, and \cite{wangenhancing} suggests a pre-training to orthogonality only for the output linear layer. 
% When we consider FFN pre-training  in Fig.~\ref{fig:ffnplot}, we study three scenarios: 1) we compare stabilizing the radius to one, \textit{up} for upper bound, 2) stabilizing determinant to one, \textit{low} for lower bound, and 3) stabilizing both upper and lower bound, \textit{up-low}, by encouraging all the eigenvalues of the transition derivatives to one. In these cases we used an auxiliary loss to encourage the transition derivatives to be positive semi-definite. However, on the RNN we only optimized the radius since the other alternatives did not converge to the target value. 

\noindent\textbf{Weight Multiplier and Shuffled Pre-training.}
However, gradient descent pre-training results in $M_k$ satisfying the LSC on average, with a wide variance. In order to reduce such variance, after applying the gradient update, we multiply the neuron weights by $\kappa=clip(\rho_t/\rho(M_k))$, which increases the scale of the weights if the radius is smaller than the target, and viceversa. We clip the multiplicative factor between 0.85 and 1.15, to reach the target value gradually and to reduce oscillations. At each pre-training step, we multiply the input matrices of layer $l$ by the layer transition derivative $\kappa$ of layer $l$, and we multiply the recurrent matrices by the time transition derivative $\kappa$ of layer $l$. 
To improve the randomness of the pre-trained model in an attempt to capture the statistics of the task and not to memorize the pre-training set, we shuffle each learnable tensor after applying $\kappa$. We consider pre-training complete, if the following three criteria are met: (i) $|\rho(M_k)-\rho_t|\leq\epsilon$ with $\epsilon=0.02$, (ii) the standard deviation of $\rho(M_k)$ across $k$ in the current pre-training step is below $0.2$, and (iii) a 10 steps exponential moving average of that standard deviation is below $0.2$ also. These thresholds are chosen to make sure that the distance between $\rho\rightarrow0.5$ and $\rho\rightarrow1$ is statistically significant.

\subsection{Neural Networks}

\noindent\textbf{FFNs.} To prove that our pre-training to stability method is applicable to a wide variety of neural networks we apply it first to simple 30 layers FFNs. They are defined as $\boldsymbol{h}_{l} = a( W_{l}\boldsymbol{h}_{l-1} + \boldsymbol{b}_l)$, with $W_l, \boldsymbol{b}_l$ learnable matrices and biases, with three different activations $a$: ReLU, sine and cosine. 
% We do so since for ReLU we have a theoretically justified initialization (He \cite{he2015delving}), while for sine and cosine we do not, making a pre-training to stability strategy more desirable in this case. 
While applications of the sine and cosine activations do exist \cite{vaswani2017attention, sitzmann2020implicit, poli2023hyena}, here our interest is on using unusual activations as a proxy for a real world scenario, where we do not have a theoretically analyzed architecture, and we therefore lack of a so called correct initialization. When we apply the LSC to FFN the target is always $\rho_t=1$.  Notice that we use such a deep network to test the ability of each approach to stabilize it. Being only a simple FFN, it will not achieve state-of-the-art results. Our intention is also to proceed similarly as it was done to introduce He initialization \cite{he2015delving}, where a 30-layers ReLU network initialized with He was compared with one initialized with Glorot. In contrast,  
% with them, instead of showing 1 learning rate and 1 seed in 1 dataset and 1 activation, 
we show in Fig.~\ref{fig:ffnplot} results for 7 learning rates, 4 seeds, 3 datasets and 3 activations, therefore 252 times more data than in Fig.~3 in \cite{he2015delving}.

\noindent\textbf{RNNs.} Then, we study our LSC preparation method on six different RNNs: four are fully differentiable, and two are non-differentiable but upgraded with a surrogate gradient to be able to train them through back-propagation. The fully differentiable networks are: the LSTM \cite{hochreiter1997long}, the GRU \cite{chung2015gated}, and two fully-connected simple RNN, defined as $\boldsymbol{h}_{t,l} = a(W_{rec,l}\boldsymbol{h}_{t-1,l} + W_{in,l}\boldsymbol{h}_{t-1,l-1} + \boldsymbol{b}_l)$ either with the activation $a$ being the sigmoid $\sigma$, or  $ReLU$, where $W_{rec,l}, W_{in,l}, \boldsymbol{b}_l$ are learnable matrices and biases. As non-differentiable architectures we used two variants of the ALIF neuron \cite{lsnn, lapique1907recherches, yin2021accurate, gerstner2014neuronal}, a simplification of a biologically plausible neuron that is often used in Computational Neuroscience. The exact equations and initializations used are reported in App.~\ref{app:morenets}. We call ALIF$_{+}$ the configuration that is initialized with $\beta$ positive, and ALIF$_{\pm}$ the same initialization with $\beta$ both positive and negative. The variable $\beta$ is also known as \textit{spike frequency adaptation}, and when negative, firing encourages further firing, while if positive firing discourages further firing \cite{sfa_darjan}.

The RNN $\sigma$, RNN $ReLU$ and GRU have only one hidden state, while LSTM and ALIF have two, so we concatenate them into one state $\boldsymbol{h}_{t,l}$ to calculate $M_k$.
The RNN we train have depth $L=2$ and $5$. The layer width is task and neuron model specific, and is chosen to have a comparable amount of parameters across neuron types  (App.~\ref{app:trainingdetails}).
We pre-train the differentiable architectures to satisfy  $\rho_t=1$ and $\rho_t=0.5$. Instead, in the case of non-differentiable architectures we only set $\rho_t=1$, given that it was difficult to converge to $\rho_t=0.5$, not only with two surrogate gradient shapes, the derivative of the fast sigmoid \cite{zenke2018superspike} and a multi-gaussian \cite{yin2021accurate}, but even when learnable dampening and sharpness were used. The default initialization of the recurrent matrix is always Orthogonal, as often suggested \cite{Le2015ASW, li2018independently, henaff2016recurrent}.

\subsection{The Datasets}

We train the FFN on MNIST \cite{mnist},  CIFAR10 and CIFAR100 datasets \cite{Krizhevsky2009}. We describe in the following the datasets used for the RNN.
More details can be found in App.~\ref{app:trainingdetails}.

\noindent\textbf{Spike Latency MNIST (sl-MNIST):} the MNIST digits \citep{mnist} pixels (10 classes) are rescaled between zero and one, presented as a flat vector, and each vector value $x$ is transformed into a spike timing using the transformation $T(x) = \tau_{eff}\log(\frac{x}{x-\vartheta})$ for $x>\vartheta$ and  $T(x) = \infty$ otherwise, with $\vartheta=0.2, \tau_{eff}=50 \text{ms}$ \cite{zenke2021remarkable}. The input data is a sequence of $50$ms, $784$ channels ($28\times 28$), with one spike per row.

\noindent\textbf{Spiking Heidelberg Digits (SHD):} is a dataset developed to benchmark spiking neural networks \cite{cramer2020heidelberg}. It is based on the Heidelberg Digits (HD) audio dataset which comprises 20 classes, of ten spoken digits in English and German, by 12 individuals. These audio signals are encoded into spikes through an artificial model of the inner ear and parts of the ascending auditory pathway.

\noindent\textbf{PennTreeBank (PTB):} is a language modelling task. The PennTreeBank dataset \cite{ptb}, is a large corpus of American English texts. We perform next time-step prediction at the word level. The vocabulary consists of 10K words.
% , which we consider as 10K classes. 
% The one hot encoding of words can be seen as a spiking representation, even if it is the standard representation in the non neuromorphic literature.

\subsection{Experimental set-up}

We pre-train on the train set, with the AdaBelief optimizer \cite{zhuang2020adabelief}, with a learning rate of $3.14\cdot e^{-3}$ and a weight decay of $1\cdot e^{-4}$ until convergence to the target, for both the FFN and RNN.
We train all our FFN and RNN networks with crossentropy loss and AdaBelief optimizer \cite{zhuang2020adabelief}. AdaBelief hyper-parameters are set to the default, as in \cite{radford2018improving, zenke2021remarkable}. We always use the optimizer as a Lookahead Optimizer \cite{zhang2019lookahead}, with synchronization period of 6 and a slow step size of 0.5. We early stop on validation, with a patience of 10 epochs, and report 4 random seeds mean and standard deviation on the test set. 
% We use no GPUs neither for FFNs nor for RNNs. 
On PTB we use perplexity as a metric \cite{jelinek1977perplexity}, the lower the better, while on sl-MNIST and SHD, we use what we call the mode accuracy, the higher the better: the network predicts the target at every timestep, and the chosen class is the one that fired the most for the longest. For more training details see App.~\ref{app:trainingdetails}.

\begin{figure*}    
\includegraphics[width=\textwidth]{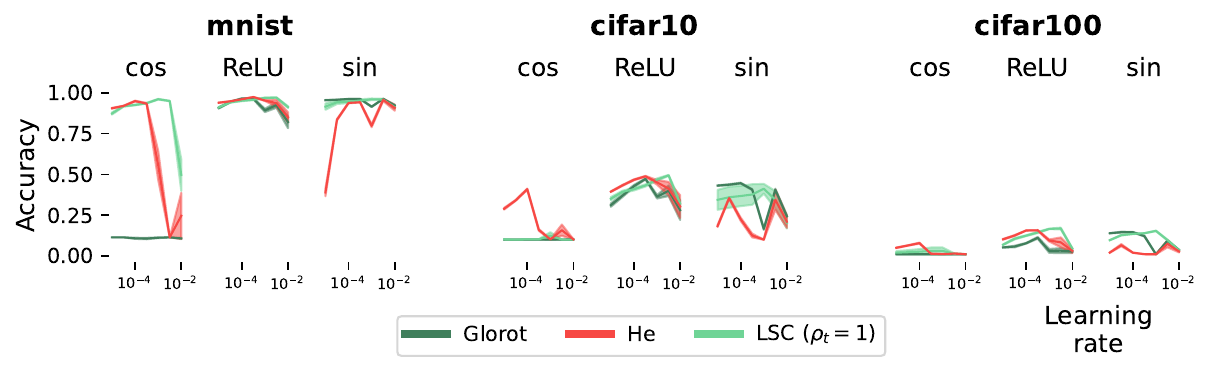}
\caption{\textbf{Bounds stabilization through pre-training enhances FFN learning.} We investigate the effect that pre-training  to achieve our Local Stability Condition (LSC) has on learning, for 30-layer FFNs. Such depth is not necessary to solve MNIST, and our interest is rather in confirming that we are able to stabilize gradients in very deep networks.
Specifically, we stabilize the upper bound and we compare our results to two well-known initialization strategies for FFN, Glorot and He.
Interestingly, even when the theoretically justified He initialization is available, such as for ReLU FFNs, the LSC can match or outperform it, even if it was initialized as Glorot before pre-training. In general, no theoretically justified initialization strategies are available for new architectures or unconventional activations, such as sine and cosine. Therefore, stability pre-training becomes a convenient approach to enhance learning.
Notably, stabilizing to LSC tends to outperforms all other alternatives in most scenarios. }

\label{fig:ffnplot}
\end{figure*}

\section{Results}
\label{sec:results}

\subsection{Pre-training FFN to LSC improves training}

We can see in Fig.~\ref{fig:ffnplot}, the effect on performance of pre-training FFNs to satisfy the LSC,  and we compare it to the classical Glorot and He initializations. We repeat the training with different learning rates, and report test accuracy after early stopping on validation loss. We find that 
pre-training to LSC, outperforms the rest in most scenarios. It is especially interesting that even if on ReLU it tends to match in performance the theoretically equivalent He, it does so at a higher learning rate, with a very similar accuracy/learning rate curve but shifted in the $x$-axis to the right. The reason that there is no exact match, could be consequence of the stochasticity introduced by the mini-batch gradient based pre-training. For the cosine activation, it improves over Glorot, but performs worse than He on CIFAR10 and CIFAR100, which could be consequence of the cosine having a derivative of zero around input zero, which leads to a radius of the transition derivative that has a strong tendency to zero. However, notice that we always initialized the network as Glorot before pre-training to LSC, and the pre-training systematically improved performace with respect to Glorot. This seems to indicate that LSC always improves it's starting initialization.

\subsection{Additive source of exponential explosion in $d$-RNN}

In contrast with FFNs, when we consider deep RNNs, there is a new source of gradient exponential explosion that is additive.
We see in Fig.~\ref{fig:subexp}a) three ways the gradient has to descend through layers and time in a $d$-RNN, from $L,t$ down to $l, t$. Each arrow represents one transition derivative $M_k$. This gradient grid is analogous to the Pascal Triangle grid,  and as such, it hides an additive source of exponential explosion, since the number of paths increases exponentially as we move further away from the upper right corner. The number of shortest paths from the upper right corner to the lower left corner in the grid can be counted with the binomial coefficient $\binom{\Delta t + \Delta l}{\Delta t}$, where $\Delta l = L-l, \Delta t = T-t$. In fact, to prove Thm~\ref{thm:ulscboth} for $\E\rho=1$, we count how many gradient paths descend from $T,L$ down to $0, l$, and mathematically prove in  Lemma~\ref{thm:binomialgrad}  App.~\ref{app:counting} that there are sub-exponentially many when we increase either time or depth leaving the other fixed, and exponentially many when both are increased together. Fig.~\ref{fig:subexp}b) confirms experimentally that the binomial coefficient that counts the number of shortest gradient paths in a grid of sides $\Delta l$ and $T$, grows sub-exponentially i) with $T$ when $\Delta l$ is fixed to $\Delta l=5$ and ii) with $\Delta l$ when $T$ is fixed to $T=5$ and grows exponentially iii) when both are increased together with a fixed ratio $T/\Delta l=100$. In fact, an exponential growth results in a straight line in a semi-log plot, as in the rightmost panel. In Fig.~\ref{fig:subexp}c) we see how a simple $d$-RNN that we call PascalRNN, and GRU illustrate Thm.~\ref{thm:ulscboth} validity. We define as PascalRNN the toy network $\boldsymbol{h}_{t,l}=\rho\boldsymbol{h}_{t-1,l}+\rho\boldsymbol{h}_{t-1,l-1}$, and we show the experimental curves for $L=10$ and $T=100$, with 6 gaussian samples as input of mean zero and standard deviation of 2. We  see that the derivative follows exactly the binomial coefficient for $\rho=1$, and follows the constant $1$ for $\rho=0.5$, also found in the proof of Thm.~\ref{thm:ulscboth}. The shift between the orange and green curves has been introduced manually to ease visualization, but were completely overlapping otherwise. Similarly, we see that the bounds are respected in the panel below, where a $L=7$ stack of GRU neurons was pretrained on the SHD task to satisfy either  $\rho\in\{0.5,1\}$.
Therefore, both networks support the claim that $\E\rho=0.5$ encourages a variance of the gradient that is constant through time, while $\E\rho=1$ does not.

\begin{figure}
    \centering
    \includegraphics[width=.5\textwidth]{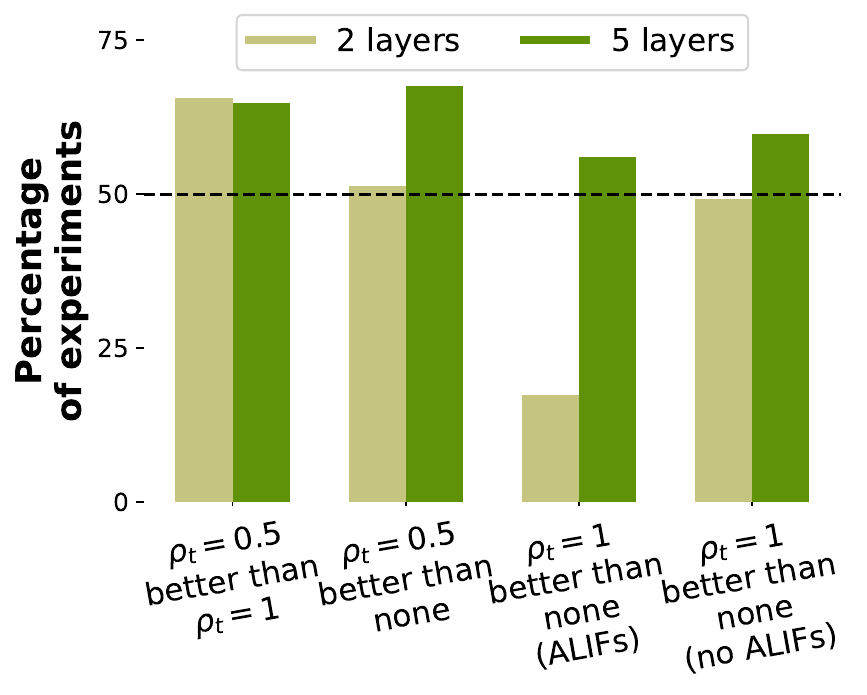}
    \caption{\textbf{Pre-training  to LSC $\rho_t=0.5$ has a stronger impact on deeper $d$-RNNs.} We pre-train and train the $\sigma$-RNN, $ReLU$-RNN, GRU, LSTM networks, on the sl-MNIST, SHD and PTB tasks, for both $\rho_t\in\{0.5, 1\}$, and compare with the non pre-trained case ($none$ in the plot), for 4 random seeds each. We compare $d$-RNN networks with depth $L\in\{2,5\}$, since $\rho_t=0.5$ is expected by our theoretical analysis to have a stronger stabilizing effect for deeper networks. We find indeed that more than 63\% of the times $\rho_t=0.5$ gave better performance than $\rho_t=1$ for both depths. No pre-training matched the rate of $\rho_t=0.5$ for depth $d=2$, but $\rho_t=0.5$ outperformed no pre-training 70\% of the times with deeper networks. With the non differentiable networks ALIF$_+$ and ALIF$_\pm$, it was often impossible to converge to a target  $\rho_t=0.5$, but we see that the deeper the network, the more favorable it was to pre-train to  $\rho_t=1$.}
    \label{fig:rnnlscs}
\end{figure}

\subsection{Pre-training $d$-RNNs to LSC improves training}

Figure \ref{fig:rnnlscs} illustrates the impact on final performance of $d$-RNN network pre-training to satisfy the LSC, and how depth influences the result.  Our results indicate that pre-training for stability improves performance in most scenarios. For differentiable networks, a target radius of $\rho_t=0.5$ consistently produces superior outcomes, over $\rho_t=1$ and over default initialization, and more markedly so for deeper networks. 
On the other side, even if the non-differentiable spiking architectures struggle to converge to $\rho_t=0.5$, they still benefit from a pre-training to $\rho_t=1$.

\subsection{Pre-training State-Space Models to LSC improves training}\label{sec:lruresults}

The Linear Recurrent Unit (LRU) \cite{lru} is one of the most recent models in the family of state-space models, designed to tackle effectively the classification of very long sequences of up to 16K elements. We study the effect of our pre-training to LSC on this model and we call the default initialization the one suggested in the original publication.
We see in Fig.~\ref{fig:lru}, the curves for width 128 networks, after a learning rate grid search with the default initialization, and in Tab.~\ref{tab:lru} after a Gaussian Processes hyper-parameter search over learning rate, width, time steps per batch, dropout, and optimizer with the default initialization, see App.~\ref{app:morenets}. The LSC is applied on the default optimal without any further hyper-parameter search.
A reasonable critique to LSC is, can we just clip the gradients instead, to avoid gradient explosion? However we see in Tab.~\ref{tab:lru} and Fig.~\ref{fig:lru} that gradient clipping does not contribute significantly to performance, if the network is reasonably initialized, while the contribution of the correct initialization is more significant. In fact, clipping typically only rescales the gradient, but  the parameter update still gives an exponential preference to times far in the past, if it did so before clipping. We see that initializing to $\rho_t=1$ gives the worst performance on PTB, for both  3 and 6 depth. We make the distinction between $\rho_t=0.5$ and $\overline{\rho}_t=0.5$, where the $\rho_t=0.5$ means that the radiuses of both time and depth transition derivatives have a mean of 0.5, while $\overline{\rho}_t=0.5$ means that the time component has a mean of $T/(T+L)$ and the depth component a mean of $L/(T+L)$, which still gives an overall mean of 0.5, and still results in Thm.~\ref{thm:ulscboth} bounds. We see that best test perplexity is usually achieved by $\rho_t=0.5$. As expected from our theory, the deeper the network, the stronger the effect of a good initialization. Interestingly, the LRU had already a $\E\rho=0.525$ by default, with higher weight to the time component than to the depth component, which suggests that experimentally the literature is gravitating also towards $\E\rho=0.5$ unwittingly.

\begin{figure}[h]
    \centering
    \includegraphics[width=.5\textwidth]{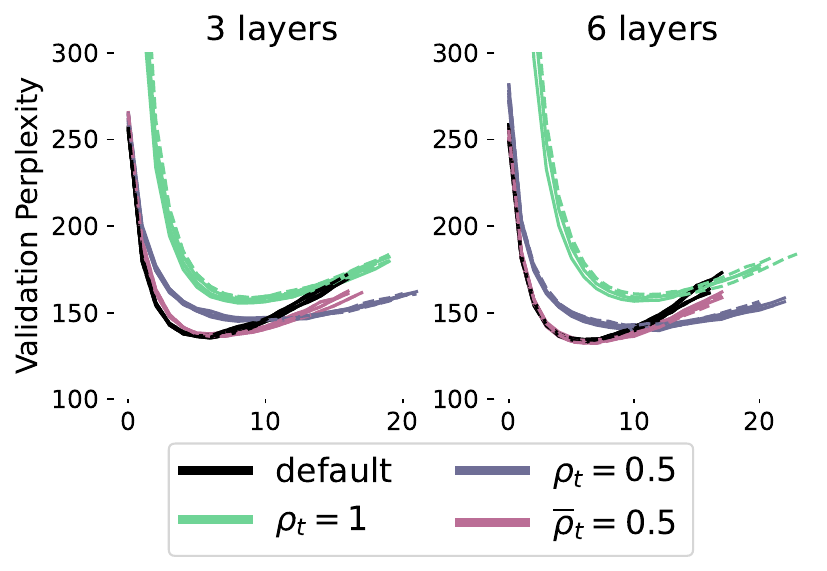}
    \caption{\textbf{Radius of 0.5 outperforms a radius of 1 on state-space models.} We train the LRU state-space model on the PTB task, we see that clipping (dashed) does not have a significant effect, while $\rho_t=1$ is outperformed by $\rho_t=0.5$. Interestingly by default the LRU has an initialization close to $\rho_t=0.5$. We use $\overline{\rho}_t=0.5$ to denote that the time component of the gradient is given a stronger weight than the depth component.}
    \label{fig:lru}
\end{figure}

\begin{table}[h]
\centering
\resizebox{.5\textwidth}{!}{
\begin{tabular}{lcc}
\toprule
 \textbf{Depth 3 LRU} & \textbf{\shortstack{validation \\ perplexity} } & \textbf{\shortstack{test \\ perplexity} } \\
\midrule
default & $139.34~\pm~27.10$ & $110.95~\pm~26.61$ \\
$\rho=1$ & $229.67~\pm~32.02$ & $186.60~\pm~27.56$ \\
$\rho=0.5$ & $131.84~\pm~2.58$ & $103.18~\pm~2.14$ \\
$\overline{\rho}_t=0.5$ & $\textbf{126.57}~\pm~\textbf{1.52}$ & $\textbf{97.94}~\pm~\textbf{1.02}$ \\
default + clip & $139.35~\pm~27.11$ & $110.69~\pm~26.80$ \\
$\rho=1$ + clip & $329.42~\pm~237.82$ & $295.01~\pm~250.27$ \\
$\rho=0.5$ + clip & $\textbf{132.06}~\pm~\textbf{2.30}$ & $\textbf{102.69}~\pm~\textbf{0.60}$ \\
$\overline{\rho}_t=0.5$ + clip & $133.79~\pm~16.02$ & $104.89~\pm~12.46$ \\
\bottomrule
\toprule
 \textbf{Depth 6 LRU} & \textbf{\shortstack{validation \\ perplexity} } & \textbf{\shortstack{test \\ perplexity} } \\
\midrule
default & $\textbf{125.29}~\pm~\textbf{0.82}$ & $\textbf{97.59}~\pm~\textbf{0.78}$ \\
$\rho=1$ & $545.41~\pm~386.85$ & $535.67~\pm~465.11$ \\
$\rho=0.5$ & $132.12~\pm~1.49$ & $103.17~\pm~1.36$ \\
$\overline{\rho}_t=0.5$ & $133.23~\pm~18.00$ & $106.45~\pm~17.02$ \\
default + clip & $147.62~\pm~39.26$ & $119.04~\pm~37.88$ \\
$\rho=1$ + clip & $596.69~\pm~409.26$ & $602.50~\pm~515.43$ \\
$\rho=0.5$ + clip & $\textbf{129.94}~\pm~\textbf{0.46}$ & $\textbf{101.40}~\pm~\textbf{0.48}$ \\
$\overline{\rho}_t=0.5$ + clip & $132.74~\pm~18.52$ & $105.48~\pm~17.96$ \\
\bottomrule
\end{tabular}
}
\caption{\textbf{Pre-training has a stronger effect than gradient clipping.} Validation and test results for the LRU state-space model trained on the PTB language modeling task. A local radius of $1$ worsens performance, while a radius of $0.5$, improves performance, more so also considering that the defaul LRU initialization had a radius of $0.525$, very close to the theoretical $0.5$.}
\label{tab:lru}
\end{table}

% \clearpage
% \newpage
\section{Discussion and Conclusions}

Our study has demonstrated the efficacy of pre-training to local stability as a useful strategy for preparing neural networks prior to training. This approach is particularly beneficial when dealing with architectures for which the appropriate initialization hyper-parameters are not known a priori from mathematical analysis. In the traditional approach, the practitioner must determine the optimal initialization analytically \cite{glorot2010understanding,orthogonal_initialization,he2015delving,roberts2022principles}, which can be time-consuming and challenging, especially for complex architectures. Consequently, practitioners often resort to trial and error to identify the best initialization strategy for a given architecture, resulting in repeated training from scratch. In contrast, our pre-training technique is applicable to any architecture, whether old or new, and requires only a single pre-training session to achieve stability, obviating the need for multiple rounds of trial and error.

Secondly, we have extended existing theories \cite{glorot2010understanding,orthogonal_initialization,he2015delving,roberts2022principles, defazio2022scaling, bengio1994learning,hochreiter1997long, hochreiter2001gradient, arjovsky2016unitary, pascanu2013difficulty}, such as Glorot, He, and Orthogonal initializations, to propose the Local Stability Condition (LSC). This new theory allows for the description of the gradient stability of a wide range of multi-layer recurrent networks with minimal assumptions on the distributions of parameters and data. Our approach recovers classical results in fully-connected feed-forward architectures \cite{glorot2010understanding,he2015delving,orthogonal_initialization} and provides a way to work with both upper and lower bounds to the variance of the parameter update. While it is more common to work with upper bounds, we have shown that lower bounds are also possible, although they may not be as easy to stabilize in practice.

Finally, we discovered that $\rho_t=0.5$ is a more desirable target local radius for deep RNNs than $\rho_t=1$. This finding contradicts the conventional line of reasoning \cite{hochreiter1991untersuchungen, bengio1994learning, kerg2019non} that focused on shallow RNNs, and used the time dimension as the effective depth, which leads to only finding multiplicative sources of gradient exponential explosion. Instead, in deep RNNs, depth in combination with time, introduce another source of gradient exponential explosion that is additive and cannot be effectively mitigated by a local radius of $\rho_t=1$, but can be addressed by targeting $\rho_t=0.5$. Given our computational constraints, we were not able to explore deeper networks, which is therefore left for future work, but our experimental results indicate that the deeper the network the more desirable $\rho_t=0.5$ becomes for $d$-RNNs. Interestingly, the $\rho_t=1$ polynomial explosion was observed empirically before \cite{kerg2019non, arjovsky2016unitary}, but to our knowledge we give the first theoretical account for it, by observing that the binomial coefficient bound grows polynomially when depth is fixed.

\section{Acknowledgments}

We thank Guillaume Bellec for many careful reads and insights on ALIF networks, and Emmanuel Calvet and Matin Azadmanesh for their feedback on the text. We thank Wolfgang Maass for the opportunity to visit their lab and Y. Bengio for commenting on a first abstract. We thank Terence Tao for suggesting to look at Kostlan Theorem to prove Lemma \ref{thm:kost} and for the key insights to prove Theorem \ref{thm:ssn}. We thank Bilal Piot, Aaron Courville and Razvan Pascanu for their corrections and insights on latest trends.
Luca Herranz-Celotti was supported by the Natural Sciences and Engineering Research Council of Canada through the Discovery Grant from professor Jean Rouat, FRQNT and by CHIST-ERA IGLU. We thank Compute Canada for the clusters used to perform the experiments and NVIDIA for the donation of two GPUs.

\bibliographystyle{unsrt} 
\bibliography{references}

\onecolumn

% \clearpage
\renewcommand{\thesection}{\Alph{section}}
\renewcommand{\theHsection}{A\arabic{section}}

% \begin{appendices}
\beginsupplement

\section*{Appendix}

\vspace{5cm}
\section{Proving the upper bounds on the gradient update variance of a $d$-RNN}
\label{app:lsc}

Let's restate here the  general deep RNN system we consider

% \begin{equation}\begin{aligned}
%     \boldsymbol{h}_{t,l} =& g_e( \boldsymbol{h}_{t-1,l}, \boldsymbol{h}_{t-1,l-1},  \theta_l) \\
%     \boldsymbol{x}_{t,l} =& g_x(\boldsymbol{h}_{t,l})\\
%     \hat{\boldsymbol{o}}_t =& W_R \boldsymbol{x}_{t,L}
% \end{aligned}\end{equation}

\begin{equation}
\tag{\ref{eq:general_sys2}}
  \begin{aligned}
    \boldsymbol{h}_{t,l} =& \ g_h( \boldsymbol{h}_{t-1,l}, \boldsymbol{h}_{t-1,l-1},  \boldsymbol{\theta}_l) \\
    \hat{\boldsymbol{o}}_t =& \ g_o(\boldsymbol{h}_{t,L},  \boldsymbol{\theta}_o)
  \end{aligned}
\end{equation}

\noindent and be 

\begin{equation}\begin{aligned}
    a_k \in & \,\bigcup_{t, l}\Big\{\norm{\frac{\partial \boldsymbol{h}_{t, l}}{\partial \boldsymbol{h}_{t-1, l}}}_M, \norm{\frac{\partial \boldsymbol{h}_{t, l}}{\partial \boldsymbol{h}_{t-1, l-1}}}_M\Big\}
\end{aligned}\end{equation}

\noindent where $\norm{\cdot}_M$ is any sub-multiplicative matrix norm \cite{belitskii2013matrix, horn1990}, and therefore a scalar function of a matrix. We introduce the concept of Decaying Covariance, since it will be used at the end of the proof of  Thm.~\ref{thm:lsc}:

\begin{restatable}[Decaying Covariance]{definition}{deccov}
\label{thm:deccov} The set of random variables $\{x_k\}_k$ with $k\in\mathbb{N}$ has Decaying Covariance with $k$ if $\Var x_k<\infty$ and if the covariance of $x_k$ with $x_{k'}$ tends to zero as $|k-k'|\rightarrow\infty$.
\end{restatable}

We show in Fig. \ref{fig:dc1} and \ref{fig:dc2} App. \ref{app:deccov}, that the $\{a_k\}_k$  for the LSTM and the ALIF networks, both with default initialization and after LSC pre-training, on three different tasks, have covariances that are either very close to zero or decay to zero, and thus have Decaying Covariance. This observation supports assuming Decaying Covariance as a reasonable property of most $d$-RNN. Now we proceed to state one of the most important theorems in the article and prove it. This will be the main stepping stone in the proof of Thm.~\ref{thm:ulscboth} in the main text. Even if we only need $q=1$ to prove the remaining Theorems and Corollaries, we prove Thm.~\ref{thm:lsc} for any power of the matrix norm $a_k^q$, for the sake of generality, where $q$ is a real number larger or equal to one.

\begin{restatable}[Local Stability Condition, with matrix norms $\E a_k^q=1$]{theorem}{lsc}
\label{thm:lsc} Be the multi-layer RNN in eq. \ref{eq:general_sys2}. 
The  Local Stability Condition (LSC), $\E a_k^q=1$, for one choice of the real $q\geq1$, is sufficient for an upper bound to the element-wise parameter update variance that in probability increases sub-exponentially, either with time or depth. This result is conditional on $\{a_k^q\}_k$ having Decaying Covariance with increasing distance in time and depth.
\end{restatable}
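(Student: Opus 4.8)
The plan is to unfold the gradient-descent update of $\boldsymbol{\theta}_l$ through backpropagation in time and depth, bound it by a sum over paths in the $(t,l)$ grid, and then control the resulting random sum with the moment condition $\E a_k^q=1$, the path count of Lemma~\ref{thm:binomialgrad}, and the Decaying Covariance hypothesis. First I would write the element-wise update as
\begin{equation*}
\Delta\theta_l^{(i)} = -\eta\sum_{t}\Big(\frac{\partial\mathcal L}{\partial\boldsymbol h_{t,l}}\Big)^{\!\top}\frac{\partial\boldsymbol h_{t,l}}{\partial\theta_l^{(i)}},
\end{equation*}
and expand each $\partial\mathcal L/\partial\boldsymbol h_{t,l}$ by the chain rule into a sum over every downstream node $(t',l')$ and every backward path from $(t',l')$ to $(t,l)$ in the time--depth grid, each summand being a product of transition derivatives $M_k$ pre-multiplied by the output Jacobian $\partial\mathcal L/\partial\hat{\boldsymbol o}_{t'}\cdot\partial\hat{\boldsymbol o}_{t'}/\partial\boldsymbol h_{t',L}$ and post-multiplied by $\partial\boldsymbol h_{t,l}/\partial\theta_l^{(i)}$.

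Applying any sub-multiplicative matrix norm, the norm of a length-$n$ path product is at most $\prod_{k\in p}a_k$; applying sub-additivity (the triangle inequality), the norm of the sum over paths is at most the sum of these products; and the output and local parameter Jacobian norms, which depend on neither $t$ nor $l$, collapse into task- and network-dependent constants $c_1,c_2$. This yields the pathwise bound $\big|\Delta\theta_l^{(i)}\big|\le c_1 + c_2\sum_{p}\prod_{k\in p}a_k$, with $p$ ranging over backward grid paths reaching layer $l$. I would then pass to the variance via $\Var\,\Delta\theta_l^{(i)}\le\E\big[(\Delta\theta_l^{(i)})^2\big]$ and expand the square into sums of $\E\big[\prod_{k\in p}a_k\prod_{k'\in p'}a_{k'}\big]$ over pairs of paths.

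To close these terms, I would use that $q\ge1$ makes $x\mapsto x^q$ convex, so Jensen gives $\E a_k\le(\E a_k^q)^{1/q}=1$, and, more generally, a power-mean / H\"older estimate lets the single $q$-th moment condition bound the per-path products; the Decaying Covariance of $\{a_k^q\}_k$ is what prevents correlations along and across paths from amplifying these products, and --- through a Chebyshev-type estimate showing that the path-averaged $\frac{1}{n}\sum_{k\in p}a_k^q$ concentrates at $1$ --- it is also what upgrades the conclusion from ``in expectation'' to ``in probability''. Finally I would invoke Lemma~\ref{thm:binomialgrad}: the number of backward paths down to $(0,l)$ equals the binomial coefficient $\frac{1}{T}\binom{T+L+2}{T}$ (equivalently $\binom{\Delta t+\Delta l}{\Delta t}$ per target node), which is polynomial, hence sub-exponential, in $T$ when $L$ is fixed and in $L$ when $T$ is fixed; multiplying the per-path bound by this count yields the claimed bound.

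The step I expect to be the main obstacle is controlling products of many \emph{dependent} nonnegative random variables $a_k$ along a long path using only a pairwise (Decaying Covariance) assumption together with the single moment condition $\E a_k^q=1$: converting these into ``$\prod_{k\in p}a_k$ is at most sub-exponential in the path length, in probability'' is delicate, and it is presumably why the conclusion is stated in probability --- obtained via a weak law of large numbers / Chebyshev argument on $\frac{1}{n}\sum_{k\in p}a_k^q$ rather than an almost-sure or pointwise statement. A secondary bookkeeping point is verifying that the output Jacobian and the local parameter Jacobian $\partial\boldsymbol h_{t,l}/\partial\boldsymbol{\theta}_l$ really do contribute only $t,l$-independent constants, so that all of the $T,L$ dependence is isolated in the path sum and the counting lemma applies cleanly.
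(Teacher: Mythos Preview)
Your overall skeleton matches the paper's, but there is one genuine missing ingredient and one detour that the paper avoids.

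The missing ingredient is the bridge from the path product $\prod_{k\in p} a_k$ to the empirical average $\frac{1}{n}\sum_{k\in p} a_k^q$ on which your weak-law/Chebyshev argument can act. The paper supplies this bridge via the arithmetic--geometric mean inequality, $\prod_{k=1}^n a_k \le \big(\tfrac{1}{n}\sum_k a_k\big)^n$, followed by Jensen (convexity of $x\mapsto x^q$) to raise the inside to the $q$-th power: $\big(\tfrac{1}{n}\sum_k a_k\big)^n \le \big(\tfrac{1}{n}\sum_k a_k^q\big)^{n/q}$. Only after this can Bernstein's weak law for dependent variables be invoked: under Decaying Covariance, $\tfrac{1}{n}\sum_k a_k^q \xrightarrow{p} 1$, so each path contributes $1$ in probability rather than something exponential in $n$. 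Your use of Jensen in the direction $\E a_k\le(\E a_k^q)^{1/q}=1$ is correct but does not control $\prod_k a_k$ for dependent factors; a ``power-mean/H\"older estimate'' does not close this gap either, since H\"older would demand $n$-th moments. You correctly flagged this step as the main obstacle --- AGMI is the resolution.

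The detour is your plan to square and expand into pairs of paths $\E\big[\prod_{k\in p}a_k\prod_{k'\in p'}a_{k'}\big]$. The paper sidesteps this: it writes $\Var\Delta\theta_l\le\tfrac{1}{m_l}\E\|\Delta\boldsymbol\theta_l\|_F^2$, invokes a \emph{small-updates} assumption $\|\Delta\boldsymbol\theta_l\|_F\le 1$ to drop to $\tfrac{1}{m_l}\E\|\Delta\boldsymbol\theta_l\|_F$, and then uses norm equivalence to pass to any sub-multiplicative norm. This keeps the path sum to the first power, so Lemma~\ref{thm:binomialgrad} applies directly to a single $\sum_t\sum_{t'\le t}\sum_c 1$ rather than to a sum over path pairs, which would square the binomial count and leave cross-path correlations that Decaying Covariance alone does not handle. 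Your bookkeeping instinct on the output and parameter Jacobians is correct: the paper simply takes a max over $t,t'$ of $\|\partial\mathcal L/\partial\hat{\boldsymbol o}_t\|\,\|\partial\hat{\boldsymbol o}_t/\partial\boldsymbol h_{t,L}\|\,\|d\boldsymbol h_{t',l}/d\boldsymbol\theta_l\|$ and absorbs it into a constant $c_{\mathcal Loh}$.
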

\begin{proof}
We define a general loss function that compares the output predited by the network $\hat{\boldsymbol{o}}$, to the true output in the dataset $o$, as an average over $T$ time-steps $\overline{\mathcal{L}}(\boldsymbol{o}, \hat{\boldsymbol{o}}) = \frac{1}{T}\sum_t \mathcal{L}(\boldsymbol{o}_t, \hat{\boldsymbol{o}}_t)$ with $t\in [0, \cdots, T]$, where $\mathcal{L}$ can be the mean squared error, cross-entropy or any loss choice.
The average over the samples in a mini-batch is ommitted for cleanliness. We  call
$J^{t,L}_{t',l}=\partial \boldsymbol{h}_{t,L}/\partial \boldsymbol{h}_{t',l}$ the sum of all the chains of derivatives that go from time $t$ and layer $L$, down to time $t'$ and layer $l$. 
% Since our neuron model is described by two discretized differential equations, the voltage and the threshold, and both depend on each other, a change in one will affect the other, and that is why we will need the crossed derivatives, voltage with respect to threshold, threshold with respect to voltage, to understand how they affect the global change in the network. 
Noticing that for a multilayer stack of RNNs, the gradient can only arrive to $t',l$ from one  time step in the future $t+1$, either from one layer above $l+1$ or from the same layer $l$, then $J^{t,L}_{ t',l}$ satisfies what we call, a Grid Derivative Pascal Triangle
\begin{equation}\begin{aligned} \label{eq:gdpt}
    J^{T,L}_{t',l} = J^{T,L}_{t'+1,l}\frac{\partial \boldsymbol{h}_{t'+1,l}}{\partial \boldsymbol{h}_{t',l}} + J^{T,L}_{t'+1,l+1}\frac{\partial \boldsymbol{h}_{t'+1,l+1}}{\partial \boldsymbol{h}_{t',l}}
\end{aligned}\end{equation}
\noindent which comes from $d\boldsymbol{h}_{t,l} = \partial g_h/\partial \boldsymbol{h}_{t-1,l} d \boldsymbol{h}_{t-1,l} +\partial g_h/\partial \boldsymbol{h}_{t-1,l-1}d\boldsymbol{h}_{t-1,l-1}+ \partial g_h/\partial \boldsymbol{\theta}_ld\boldsymbol{\theta}_l$, the layer $l$ differential at time $t$.
% We show in Lemma \ref{thm:generalizedpascal} why Eq.~\ref{eq:gdpt} generalizes the Pascal Triangle. 
As done before for one layer recurrent network \cite{bellec2020solution, metz2021gradients}, we develop the parameter update rule into its sub-components, extending it to the multi-layer setting
\begin{equation}\begin{aligned}
    \Delta\theta_{l} =& -\eta \frac{\partial \overline{\mathcal{L}}}{\partial \boldsymbol{\theta}_{l}}\\
    =& -\frac{\eta}{T}\sum_t\frac{\partial \mathcal{L}}{\partial \boldsymbol{\theta}_{l}}\\
    =& -\frac{\eta}{T}\sum_t\frac{\partial \mathcal{L}}{\partial \hat{\boldsymbol{o}}_t}\frac{\partial\hat{\boldsymbol{o}}_t}{\partial \boldsymbol{\theta}_{l}}\\
    =& -\frac{\eta}{T}\sum_t\frac{\partial \mathcal{L}}{\partial \hat{\boldsymbol{o}}_t}\frac{\partial\hat{\boldsymbol{o}}_t}{\partial\boldsymbol{h}_{t,L}}\frac{d\boldsymbol{h}_{t,L}}{d \boldsymbol{\theta}_{l}}
    \\
    =& \cdots\\
    =& -\frac{\eta}{T}\sum_t\frac{\partial \mathcal{L}}{\partial \hat{\boldsymbol{o}}_t}\frac{\partial\hat{\boldsymbol{o}}_t}{\partial\boldsymbol{h}_{t,L}}
    \Big(
    \sum_{t'\leq t}J^{t,L}_{ t',l}\frac{d\boldsymbol{h}_{t',l}}{d \boldsymbol{\theta}_{l}}
    \Big)\\
    =& -\frac{\eta}{T}\sum_t\sum_{t'\leq t}\frac{\partial \mathcal{L}}{\partial \hat{\boldsymbol{o}}_t}\frac{\partial\hat{\boldsymbol{o}}_t}{\partial\boldsymbol{h}_{t,L}}
    J^{t,L}_{ t',l}\frac{d\boldsymbol{h}_{t',l}}{d \boldsymbol{\theta}_{l}} \label{eq:graddecomp}
\end{aligned}\end{equation}
% It is usual to study how the variance of the parameter update increases with layers and time. This is done to find initializations that avoid an exponential explosion of the representation and of the gradient, which represents a principled method to improve performance of learning architectures. However, a recurrent network presents many correlations and shared weights across time that do not appear in fully connected feed-forward networks, that will make the analysis more difficult.
% Here we use a generalized notion of variance, that will provide us with more flexibility and with mathematical tractability.

The following development consists of two major ideas. First, we use matrix norms to turn chains of matrix multiplications into chains of scalar multiplications, as done before to study simple one-layer recurrent architectures in e.g. \cite{bengio1994learning,hochreiter1997long, hochreiter2001gradient, arjovsky2016unitary, pascanu2013difficulty}. Second, we use the arithmetic-geometric mean inequality and a Bernstein Theorem \cite{bernshtein1918loi, kozlov2005weighted, cacoullos2012exercises}, restated below as Theorem \ref{thm:bern}, to bound expectations of correlated functions with expectations of uncorrelated functions. 
It will come at the price of having to analyze an exact upper bound on the variance, an inequality instead of an equality. However, if we find an exact upper bound that does not increase exponentially, nothing that is contained within it can explode exponentially either.

We notice that we can upper bound the element-wise variance of the parameter update with any choice of matrix norm, by (1) considering $\Delta \boldsymbol{\theta}_l$ as a matrix in $\mathbb{R}^{m_l\times 1}$, (2) using the Frobenius norm $\norm{A}_F=(\sum_{ij} a_{ij}^2)^{1/2}$, (3) for small updates $\norm{\Delta \boldsymbol{\theta}_l}_F \leq 1$, and (4) the equivalence of matrix norms, i.e., the fact that any matrix norm can be upperbounded by any other, at the cost of a constant multiplicative factor $c_M$ \cite{golub2013matrix, horn1990}. We therefore have 
\begin{equation}\begin{aligned}
    \Var \Delta \theta_l
    =& \ \E\Delta \theta_l^2-\Big(\E\Delta \theta_l\Big)^2\\
    \leq& \ \E\Delta \theta_l^2\\
    =&\ \frac{1}{m_l}\E[\Delta \boldsymbol{\theta}_l^T\Delta \boldsymbol{\theta}_l] \\
    =& \ \frac{1}{m_l}\E[\norm{\Delta \boldsymbol{\theta}_l}_F^2] && \text{Frobenius definition } \\
    \leq & \ \frac{1}{m_l}\E[\norm{\Delta \boldsymbol{\theta}_l}_F] && \text{small updates } \norm{\Delta \boldsymbol{\theta}_l}_F \leq 1\\
    \leq & \ c_M\frac{1}{m_l}\E[\norm{\Delta \boldsymbol{\theta}_l}_{M}] && \text{norm equivalence}\label{eq:varnorm}
\end{aligned}\end{equation}
We are going to introduce two new indices, $c,k$. As represented in figure \ref{fig:subexp} a), $J$ is the sum of all the derivative chains that go from the last layer and last time step, to the layer and the time step of interest, so the pre-index $c$, will be a reminder of all those chains, and $j$ will represent one of those chains. Then we are going to remind ourselves that each $j$ is composed of the multiplication of the links in the gradient chain. Those links are the transition derivatives of each variable $\boldsymbol{h}_{t,l}$, with respect to every variable it directly depends on, $\boldsymbol{h}_{t-1,l}, \boldsymbol{h}_{t-1,l-1}$. We will call every link in the chain as $M_k$ and the index $k$ will keep track of all the links in the chain $j$. Therefore we define

\begin{equation}\begin{aligned}
    J^{t,L}_{t',l} =& \sum_c \tensor*[_c]{j}{^{t,L}_{t',l}}= \sum_c \prod_k  M_k \label{eq:jasm}\\ 
    M_k\in& \, \bigcup_{t, l}\Big\{\frac{\partial \boldsymbol{h}_{t, l}}{\partial \boldsymbol{h}_{t-1, l}}, \frac{\partial \boldsymbol{h}_{t, l}}{\partial \boldsymbol{h}_{t-1, l-1}}\Big\}
\end{aligned}\end{equation}

% For cleanliness, we take first the matrix norm, and after we will take the expectation, that will count as a function norm, with measure $1$. 
We can proceed taking a general matrix norm $\norm{\cdot}_M$, and use the general properties of norms. All norms are  sub-additive, since they have to satisfy the triangle inequality, and we choose among those that are sub-multiplicative, so, applying these properties to Eq.~\ref{eq:graddecomp}, we have

\begin{equation}\begin{aligned}
    \norm{\Delta\theta_{l} }_M
    =& \frac{\eta}{T}\norm{\sum_t\sum_{t'\leq t}\sum_c\frac{\partial \mathcal{L}}{\partial \hat{\boldsymbol{o}}_t}\frac{\partial\hat{\boldsymbol{o}}_t}{\partial \boldsymbol{h}_{t,L}}
    \prod_k M_k\frac{d\boldsymbol{h}_{t',l}}{d \boldsymbol{\theta}_{l}}}_M
    \\
    \leq& \frac{\eta}{T}\sum_t\sum_{t'\leq t}\sum_c \norm{\frac{\partial \mathcal{L}}{\partial \hat{\boldsymbol{o}}_t}\frac{\partial\hat{\boldsymbol{o}}_t}{\partial \boldsymbol{h}_{t,L}}
    \prod_k M_k\frac{d\boldsymbol{h}_{t',l}}{d \boldsymbol{\theta}_{l}}}_M && \text{sub-additivity}
    \\
    \leq& \frac{\eta}{T}\sum_t\sum_{t'\leq t}\sum_c \norm{\frac{\partial \mathcal{L}}{\partial \hat{\boldsymbol{o}}_t}}_M\norm{\frac{\partial\hat{\boldsymbol{o}}_t}{\partial \boldsymbol{h}_{t,L}}}_M
    \prod_k \norm{ M_k}_M\norm{\frac{d\boldsymbol{h}_{t',l}}{d \boldsymbol{\theta}_{l}}}_M && \text{sub-multiplicativity}
\end{aligned}\end{equation}

Let's call $c_{\mathcal{L}oh}$ the following maximal value:

\begin{equation}\begin{aligned}
    c_{\mathcal{L}oh} = \max_{t,t', *}\norm{\frac{\partial \mathcal{L}}{\partial \hat{\boldsymbol{o}}_t}}_M\norm{\frac{\partial\hat{\boldsymbol{o}}_t}{\partial\boldsymbol{h}_{t,L}}}_M
    \norm{\frac{d\boldsymbol{h}_{t',l}}{d \boldsymbol{\theta}_{l}}}_M
\end{aligned}\end{equation}

\noindent where we denote by * a $\max$ across all the data inputs, if the following expectation is over data, across all the parameter initializations if the following expectation is over initializations, or both if the following expectation is over both.

For cleanliness, we rename each term in the product as

\begin{equation}\begin{aligned}
    a_k\in& \,\bigcup_{t, l}\Big\{\norm{\frac{\partial \boldsymbol{h}_{t, l}}{\partial \boldsymbol{h}_{t-1, l}}}_M, \norm{\frac{\partial \boldsymbol{h}_{t, l}}{\partial \boldsymbol{h}_{t-1, l-1}}}_M\Big\}
\end{aligned}\end{equation}

\noindent and applying the Arithmetic-Geometric Mean Inequality (AGMI) \cite{cauchy1821cours, arnold1993four}, followed by Jensen's inequality \cite{jensen1906fonctions}, if $q\geq1$ we have

\begin{equation}\begin{aligned}
    \norm{\Delta\theta_{l}}_M
    \leq& \ c_{\mathcal{L}oh} \frac{\eta}{T}\sum_t\sum_{t'\leq t}\sum_c \prod_k a_k \\
    \leq&  \  c_{\mathcal{L}oh}\frac{\eta}{T}\sum_t\sum_{t'\leq t}\sum_c \Big(\frac{1}{n}\sum_k  a_k\Big)^n && \text{AGMI}\label{eq:AGMI}\\
    =&  \  c_{\mathcal{L}oh}\frac{\eta}{T}\sum_t\sum_{t'\leq t}\sum_c \Big(\frac{1}{n}\sum_k  a_k\Big)^{qn/q} \\
    \leq&  \  c_{\mathcal{L}oh}\frac{\eta}{T}\sum_t\sum_{t'\leq t}\sum_c \Big(\frac{1}{n}\sum_k  a_k^q\Big)^{n/q}  && \text{Jensen}
\end{aligned}\end{equation}

\noindent where $n=t-t'+L-l= \Delta t + \Delta l$. The Jensen's inequality step was not strictly necessary for the purpose of this article, since only the case $q=1$ is required elsewhere, but we make it for the sake of generality.
Assuming  $\{a_k^q\}_k$ having Decaying Covariance, which means that the covariance of $a_k^q$ with $a_{k'}^q$ decreases as the distance in depth and time increases,  $\Cov(a_k^q, a_{k'}^q)\rightarrow0$ as $|k-k'|\rightarrow\infty$, and  provided that $\Var a_k^q<\infty$, we can apply a Weak Law of Large Numbers for dependent random variables, also known as one of Bernstein's Theorems \cite{cacoullos2012exercises}, and restated below. The theorem states that given the mentioned assumptions, the sample average converges in probability to the expected value $\overline{\mu}$, so $\frac{1}{n}\sum_k  a_k^q\xrightarrow{p}\overline{\mu}$ as $n\rightarrow\infty$, where the $p$ over the arrow is a reminder for the convergence in probability. If we do not want it to vanish or explode when exponentiated by $n/q$, we need $\overline{\mu}=1$. A sufficient condition to achieve that is to require $\E a_k^q =1$ since in that case $\E\frac{1}{n}\sum_k  a_k^q=1$. We call $\E a_k^q =1$ the Local Stability Condition (LSC), since it stabilizes the upper bound to the element-wise variance of the parameter update $\Var \Delta\theta$. Therefore, in probability we have
\begin{equation}\begin{aligned}
    \norm{\Delta\theta_{l}}_M
    \overset{p}{\leq}&  \  c_{\mathcal{L}oh} \frac{\eta}{T}\sum_t\sum_{t'\leq t}\sum_c1 + c_p\\
    \E\norm{\Delta\theta_{l}}_M
    \overset{p}{\leq}&  \  c_{\mathcal{L}oh} \frac{\eta}{T}\sum_t\sum_{t'\leq t}\sum_c1  + \overline{c}_p
\end{aligned}\end{equation}
\noindent where we summarize as $c_p$ the fluctuations around the convergence value for small $n$, as $\overline{c}_p=\E c_p$ the average fluctuation for small $n$, and we use the $p$ over the inequality as a reminder that the bound applies in probability. Connecting variance with matrix norm as we showed in Eq.~\ref{eq:varnorm}, we have
\begin{equation}\begin{aligned}
    \Var \Delta \theta_l
    \overset{p}{\leq}  \  \lambda\frac{1}{T}\sum_t\sum_{t'\leq t}\sum_c1  + \phi \\
    \lambda = \ c_{\mathcal{L}oh}\eta\frac{c_M}{m_l}, \quad \phi = c_{\mathcal{L}oh}\frac{c_M\overline{c}_p}{m_l}
\end{aligned}\end{equation}
% In the main text we rename $c_M\overline{c}_p/n_l\rightarrow c_p$ for cleanliness.
% We show in Lemma~\ref{thm:binomialgrad}, that this quantity grows as a binomial coefficient and therefore exponentially only when both $T,L$ tend to infinity at the same rate, but it is sub-exponential in the cases where only one of them is taken to infinity, leaving the other fixed. 
Substituting the result of Lemma~\ref{thm:binomialgrad} we have that
\begin{equation}\begin{aligned}
    \Var \Delta \theta_l
    \overset{p}{\leq} & \ \lambda\frac{1}{T}\binom{T + \Delta l +2}{T}  + \phi
\end{aligned}\end{equation}
which as we prove in Lemma~\ref{thm:binomialgrad}, is sub-exponential in the two limits (i) $T\rightarrow \infty$, keeping $L$ fixed and (ii) $L\rightarrow \infty$, keeping $T$ fixed, but is exponential in the limit (iii) where both $T, L$ tend to infinity at the same rate. QED
\end{proof}

Instead, the following Theorem studies the effect of setting the expectations of the matrix norms to $0.5$ rather than $1$.

\begin{restatable}[Local Stability Condition, with matrix norms $\E a_k=0.5$]{theorem}{llsc}
\label{thm:llschalfexp} Be the multi-layer RNN in eq. \ref{eq:general_sys2}. A transition derivative with expected matrix norm of $0.5$, results in $\E\norm{\Delta\theta_{l}}_M \leq O(\frac{T+1}{2})$.
\end{restatable}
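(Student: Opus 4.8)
The plan is to re-run the proof of Theorem~\ref{thm:lsc} almost verbatim, changing only the condition on the matrix norms from $\E a_k=1$ to $\E a_k=0.5$ and tracking how this alters the combinatorial count at the end. Concretely, I would start from the inequality already established there,
\[
    \norm{\Delta\theta_{l}}_M \;\leq\; c_{\mathcal{L}oh}\,\frac{\eta}{T}\sum_{t}\sum_{t'\leq t}\sum_c \prod_k a_k ,
\]
apply the Arithmetic--Geometric Mean Inequality (with $q=1$ the subsequent Jensen step is trivial) to obtain $\prod_k a_k\leq\big(\frac{1}{n}\sum_k a_k\big)^n$, where $n=\Delta t+\Delta l$ is the number of links of the chain $c$ with $\Delta t=t-t'$, $\Delta l=L-l$, and then invoke Bernstein's Weak Law of Large Numbers under the Decaying Covariance hypothesis to conclude $\frac{1}{n}\sum_k a_k\xrightarrow{p}0.5$. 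The only change relative to Theorem~\ref{thm:lsc} is that the limiting base is now $0.5<1$, so $\big(\frac1n\sum_k a_k\big)^n$ decays like $(0.5)^n$ in probability; the small-$n$ fluctuations are again absorbed into an additive constant $c_p$, but this time they cannot trigger an explosion, since a sub-unit base keeps the exponentiated quantity bounded, which makes this step strictly easier than in Theorem~\ref{thm:lsc}.

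The key new ingredient is the combinatorial consequence of the $0.5$ weighting. Summing $(0.5)^n$ over the $\binom{\Delta t+\Delta l}{\Delta t}$ gradient chains that fit in a $(\Delta t,\Delta l)$ cell of the time--depth grid gives
\[
    \sum_c (0.5)^n \;=\; \binom{\Delta t+\Delta l}{\Delta t}\Big(\frac12\Big)^{\Delta t+\Delta l} \;\leq\; 1 ,
\]
because this is exactly one term of the binomial expansion $\big(\frac12+\frac12\big)^{\Delta t+\Delta l}=1$ (equivalently, the probability mass of a $\mathrm{Binomial}(\Delta t+\Delta l,\frac12)$ variable at $\Delta t$). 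This is precisely the mechanism that neutralizes the explosion counted in Lemma~\ref{thm:binomialgrad}: the weight $0.5$ renormalizes the exponentially many paths inside each grid cell to an effective count of at most $1$, so the only thing left to sum is the number of index pairs $(t,t')$, which is polynomial.

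Putting the pieces together, in probability we obtain
\[
    \E\norm{\Delta\theta_{l}}_M \;\overset{p}{\leq}\; c_{\mathcal{L}oh}\,\frac{\eta}{T}\sum_{t=0}^{T}\sum_{t'\leq t} 1 \;+\;\overline{c}_p
    \;=\; c_{\mathcal{L}oh}\,\eta\,\frac{(T+1)(T+2)}{2T}\;+\;\overline{c}_p \;=\; O\!\Big(\frac{T+1}{2}\Big),
\]
using $\sum_{t=0}^{T}(t+1)=\binom{T+2}{2}=\frac{(T+1)(T+2)}{2}$ and $\frac{(T+1)(T+2)}{2T}=\frac{T+1}{2}+1+\frac1T$; note in particular that the bound no longer depends on $L$, in contrast with Theorem~\ref{thm:lsc}. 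The main obstacle is the point inherited from Theorem~\ref{thm:lsc}: because $x\mapsto x^n$ is convex, one cannot pass to the limit inside the $n$-th power at the level of expectations ($\E[X^n]\geq(\E X)^n$ runs the wrong way), so the argument must go through convergence in probability and the fluctuation term $c_p$ rather than an exact identity. Here, however, that obstacle is mild, because the sub-unit base $0.5$ forces $\big(\frac1n\sum_k a_k\big)^n$ to decay regardless of the fluctuations; the genuinely load-bearing --- but elementary --- step is the binomial-mass bound $\binom{n}{\Delta t}2^{-n}\leq1$, after which the statement reduces to the hockey-stick count of index pairs $(t,t')$ divided by $T$.
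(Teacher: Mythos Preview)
Your argument is correct and reaches the stated bound, but the paper takes a more direct route that sidesteps most of the machinery you invoke. Rather than expanding $J$ into all its multiplicative chains and then applying AGMI and Bernstein's WLLN as in Theorem~\ref{thm:lsc}, the paper works one step of the Grid Derivative Pascal Triangle at a time: from sub-additivity and sub-multiplicativity it obtains $\norm{J^{t,l}_{t',l'}}_M \leq \norm{J^{t,l}_{t'+1,l'}}_M\,a_{time} + \norm{J^{t,l}_{t'+1,l'+1}}_M\,a_{depth}$, replaces both $\norm{J}$-factors by a deterministic maximum $\hat{J}$ (taken over the same randomness as the expectation), takes expectations, and uses $\E a_{time}+\E a_{depth}=1$ to conclude $\E\norm{J^{t,l}_{t',l'}}_M\leq\hat{J}$ outright; the remaining double sum over $(t,t')$ then delivers $\tfrac{T+1}{2}$ with no path-counting whatsoever. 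This recursive argument needs neither the Decaying Covariance hypothesis nor the ``in probability'' qualifier, and it immediately covers asymmetric splits with $\E a_{time}+\E a_{depth}=1$ rather than requiring both to equal $\tfrac12$. What your route buys in exchange is an explicit mechanistic picture of \emph{why} $0.5$ is the right constant: your binomial-mass observation $\binom{\Delta t+\Delta l}{\Delta t}\,2^{-(\Delta t+\Delta l)}\leq 1$ makes transparent that the weight $\tfrac12$ exactly renormalizes the exponential path count of Lemma~\ref{thm:binomialgrad} down to at most one per grid cell, a cancellation that the paper's one-step recursion leaves implicit.
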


\begin{proof}
Let's start from the Grid Derivative Pascal Triangle

\begin{equation}\begin{aligned}
    J^{t,l}_{t',l'} = J^{t,l}_{t'+1,l'}\frac{\partial \boldsymbol{h}_{t'+1,l'}}{\partial \boldsymbol{h}_{t',l'}} + J^{t,l}_{t'+1,l'+1}\frac{\partial \boldsymbol{h}_{t'+1,l'+1}}{\partial \boldsymbol{h}_{t',l'}} \label{eq:halfupbound}
\end{aligned}\end{equation}

Take the matrix norm

\begin{equation}
\begin{aligned}
    \norm{J^{t,l}_{t',l'}}_M \leq &\norm{J^{t,l}_{t'+1,l'}}_M\norm{\frac{\partial \boldsymbol{h}_{t'+1,l'}}{\partial \boldsymbol{h}_{t',l'}}}_M + \norm{J^{t,l}_{t'+1,l'+1}}_M\norm{\frac{\partial \boldsymbol{h}_{t'+1,l'+1}}{\partial \boldsymbol{h}_{t',l'}}}_M\\
\end{aligned}
\end{equation}

Let us consider $\hat{J} = \max_{*}\Big\{\norm{J^{t,l}_{t'+1,l' + 1}}_M, \norm{J^{t,l}_{t'+1,l'}}_M\Big\}$ where we denote by * a $\max$ across all the data inputs, if the following expectation is over data, across all the parameter initializations if the following expectation is over initializations, or both if the following expectation is over both. Considering a different weight for the time and depth components, as long as $(\E a_{time} + \E a_{depth})/2 = \E a_k=0.5\implies \E a_{time} + \E a_{depth}=1$, then

\begin{equation}
\begin{aligned}
    \E\norm{J^{t,l}_{t',l'}}_M \leq & \ \E\Big[\norm{J^{t,l}_{t'+1,l'}}_M\norm{\frac{\partial \boldsymbol{h}_{t'+1,l'}}{\partial \boldsymbol{h}_{t',l'}}}_M + \norm{J^{t,l}_{t'+1,l'+1}}_M\norm{\frac{\partial \boldsymbol{h}_{t'+1,l'+1}}{\partial \boldsymbol{h}_{t',l'}}}_M\Big]\\
    \leq & \ \E\Big[\hat{J}\norm{\frac{\partial \boldsymbol{h}_{t'+1,l'}}{\partial \boldsymbol{h}_{t',l'}}}_M + \hat{J}\norm{\frac{\partial \boldsymbol{h}_{t'+1,l'+1}}{\partial \boldsymbol{h}_{t',l'}}}_M\Big]\\
    = & \ \hat{J}\E\Big[\norm{\frac{\partial \boldsymbol{h}_{t'+1,l'}}{\partial \boldsymbol{h}_{t',l'}}}_M\Big] + \hat{J}\E\Big[\norm{\frac{\partial \boldsymbol{h}_{t'+1,l'+1}}{\partial \boldsymbol{h}_{t',l'}}}_M\Big]\\
    = & \ \hat{J}\E a_{time} + \hat{J}\E a_{depth}\\
    = & \ \hat{J}(\E a_{time} + \E a_{depth})\\
    = & \ \hat{J}
\end{aligned}
\end{equation}

\noindent so, the bound on the derivative does not grow in expectation with time and depth. Let's call $c_{\mathcal{L}oh}$ the following maximal value:

\begin{equation}\begin{aligned}
    c_{\mathcal{L}oh} = \max_{t,t', *}\norm{\frac{\partial \mathcal{L}}{\partial \hat{\boldsymbol{o}}_t}}_M\norm{\frac{\partial\hat{\boldsymbol{o}}_t}{\partial\boldsymbol{h}_{t,L}}}_M
    \norm{\frac{d\boldsymbol{h}_{t',l}}{d \boldsymbol{\theta}_{l}}}_M
\end{aligned}\end{equation}

Now we can use it on the bound to the variance of the parameter update with matrix norms as we saw in Eq.~\ref{eq:varnorm}, by using the gradient decomposition from Eq.~\ref{eq:graddecomp}, and with $\lambda = c_{\mathcal{L}oh}\eta\hat{J}$:

\begin{equation}\begin{aligned}
    \E\norm{\Delta\theta_{l}}_M \leq
    & \ \E\norm{\frac{\eta}{T}\sum_t\sum_{t'\leq t}\frac{\partial \mathcal{L}}{\partial \hat{\boldsymbol{o}}_t}\frac{\partial\hat{\boldsymbol{o}}_t}{\partial\boldsymbol{h}_{t,L}}
    J^{t,L}_{ t',l}\frac{d\boldsymbol{h}_{t',l}}{d \boldsymbol{\theta}_{l}}}\\
    \leq
    &  \ \frac{\eta}{T}\sum_t\sum_{t'\leq t}\E\Big[\norm{\frac{\partial \mathcal{L}}{\partial \hat{\boldsymbol{o}}_t}}_M\norm{\frac{\partial\hat{\boldsymbol{o}}_t}{\partial\boldsymbol{h}_{t,L}}}_M
    \norm{J^{t,L}_{ t',l}}_M\norm{\frac{d\boldsymbol{h}_{t',l}}{d \boldsymbol{\theta}_{l}}}_M\Big]\\
    \leq & \ c_{\mathcal{L}oh}\frac{\eta}{T}\sum_t\sum_{t'\leq t}\E
    \norm{J^{t,L}_{ t',l}}_M\\
    \leq & \ c_{\mathcal{L}oh}\frac{\eta}{T}\sum_t\sum_{t'\leq t}\hat{J}\\
    = & \ \hat{J}c_{\mathcal{L}oh}\frac{\eta}{T}\sum_t\sum_{t'\leq t}1\\
    =& \ \lambda\frac{1}{T} T\frac{T+1}{2}\\
    =& \ \lambda\frac{T+1}{2}
\end{aligned}\end{equation}

\noindent which finishes the proof. QED
\end{proof}

We use the previous Theorems to prove the following one, which is the main Theorem stated in the article. We omit some of the assumptions, necessary to prove the previous Theorems, see Decaying Covariance, that should go in the statement of the following, in the interest of readability.

\ulscboth*

\begin{proof}

To prove the result for $\E\rho=1$,   take ($i$) Eq.~\ref{eq:AGMI} Theorem \ref{thm:lsc} with the exponent $q=1$, and use the fact that for any $\epsilon$, there is ($ii$) a sub-multiplicative matrix norm that can be upper-bounded by the matrix radius, such as $\norm{A}_M\leq \rho(A) + \epsilon$ , as proven in \cite{matrixn}. Then ($iii$) for small $\epsilon$, for  $\E\rho_k=1$ then $\frac{1}{n}\sum_k  \E\rho_k\rightarrow 1$ we ($iv$) apply Bernstein Theorem, followed by ($v$) Lemma~\ref{thm:binomialgrad}, and we get

\begin{equation}\begin{aligned}
    \Var \Delta \theta_l
    \leq & \   \frac{\eta c_M}{m_l}\frac{1}{T}\sum_t\sum_{t'\leq t}\sum_c \Big(\frac{1}{n}\sum_k  a_k\Big)^n  && i\\
    \leq & \   \frac{\eta c_M}{m_l}\frac{1}{T}\sum_t\sum_{t'\leq t}\sum_c \Big(\frac{1}{n}\sum_k  \rho_k + \epsilon\Big)^n && ii\\
    = & \  \frac{\eta c_M}{m_l} \frac{1}{T}\sum_t\sum_{t'\leq t}\sum_c \Big(\frac{1}{n}\sum_k  \rho_k\Big)^n + O(\epsilon) && iii\\
    \overset{p}{=} & \  \frac{\eta c_M}{m_l} \frac{1}{T}\sum_t\sum_{t'\leq t}\sum_c 1 + \frac{c_M\overline{c}_p}{m_l} + O(\epsilon) && iv\\
    =& \ \frac{\eta c_M}{m_l}\frac{1}{T}\binom{T + \Delta l +2}{T}  + \frac{c_M\overline{c}_p}{m_l} + O(\epsilon) && v
\end{aligned}\end{equation}

To prove the result for $\E\rho=0.5$ we proceed similarly but taking Thm.~\ref{thm:llschalfexp} as a starting point instead. Take Eq.~\ref{eq:halfupbound}, which describes the bound that the matrix norm imposes on the Grid Derivative Pascal Triangle, and applying the inequality $\norm{A}_M\leq \rho(A) + \epsilon$, for small $\epsilon$ we have:
\begin{equation}\begin{aligned}
    \norm{J^{t,l}_{t',l'}}_M \leq & \ \norm{J^{t,l}_{t'+1,l'}}_M\norm{\frac{\partial \boldsymbol{h}_{t'+1,l'}}{\partial \boldsymbol{h}_{t',l'}}}_M + \norm{J^{t,l}_{t'+1,l'+1}}_M\norm{\frac{\partial \boldsymbol{h}_{t'+1,l'+1}}{\partial \boldsymbol{h}_{t',l'}}}_M\\
    \leq \ &\rho(J^{t,l}_{t'+1,l'})\rho\Big(\frac{\partial \boldsymbol{h}_{t'+1,l'}}{\partial \boldsymbol{h}_{t',l'}}\Big) + \rho(J^{t,l}_{t'+1,l'+1})\rho\Big(\frac{\partial \boldsymbol{h}_{t'+1,l'+1}}{\partial \boldsymbol{h}_{t',l'}}\Big) + O(\epsilon)\\
\end{aligned}\end{equation}
Let us consider $\hat{J} = \max_{*}\Big\{\rho(J^{t,l}_{t'+1,l' + 1}), \rho(J^{t,l}_{t'+1,l'})\Big\}$ where we denote by * a $\max$ across all the data inputs, if the following expectation is over data, across all the parameter initializations if the following expectation is over initializations, or both if the following expectation is over both. If $\E \rho=1/2$ then following a similar reasoning as in Thm.~\ref{thm:llschalfexp} proof, we get
\begin{equation}\begin{aligned}
    \E\norm{J^{t,l}_{t',l}}_M 
    \leq & \ \hat{J} + O(\epsilon)
\end{aligned}\end{equation}
and then using Eq.~\ref{eq:varnorm}, which describes the relation of the variance with matrix norms
\begin{equation}\begin{aligned}
    \Var \Delta \theta_l
    \leq & \ c_M\frac{1}{m_l}\E[\norm{\Delta \boldsymbol{\theta}_l}_{M}]\\
    \leq& \ c_M\frac{1}{m_l}c_{\mathcal{L}oh}\eta\hat{J}\frac{T+1}{2} + O(\epsilon)
\end{aligned}\end{equation}
we obtain the desired result. QED
\end{proof}

\newpage
\clearpage
\section{Equivalence of LSC with classical initializations}
\label{sec:equivalence}
\glohe*

\begin{proof}

Be $M$ a  real $n_l\times n_l$ matrix with all its entries $\E m_{ij}=0$ and $\Var m_{ij}=1$, independent, and $\rho$ its largest eigenvalue modulus.
According to the strong circular law for random matrices \cite{tao2010random}, given a  random matrix with mean zero variance one elements, its largest eigenvalue grows like $\sqrt{n_l}$ as $n_l$ increases. We prove in Lemma \ref{thm:kost} that this is true in expectation also for small matrices, when the elements $ m_{ij}$ are sampled from a radially symmetric distribution, such as the centered gaussian or the centered uniform distributions: the spectral norm satisfies $\E\rho=\sqrt{2}\Gamma((n_l+1)/2)/\Gamma(n_l/2)\approx\sqrt{n_l}$. If the matrix $M$ entries have mean zero and variance different from one, we normalize it such that $M = \sqrt{\Var(m_{ij})}\hat{M}$ where the elements of $\hat{M}$ have variance one. Therefore, since the eigenvalues of a scaled matrix, are scaled by the same quantity (homogeneity), we will have $\E\rho(M) =\sqrt{\Var(m_{ij})}\E\rho(\hat{M})  \approx\sqrt{n_l\Var(m_{ij})}$.

For the linear/ReLU feed-forward network $y_l = W_l\sigma_k(y_{l-1})+b_l$, where in this proof we use $k$ both as an index and as an exponent, with $k=0$ to indicate linear and $k=1$ to indicate ReLU activation, we have $m_{ij}= \partial y_{l, i}/\partial y_{l-1,j}= w_{l,i,j}H(y_{l-1, j})^k$, where $H(y_{l-1, j})$ is the Heaviside function, and we consider $H(y_{l-1, j})^0=1$.
% It can be proven \cite{matrixn} that for any $\epsilon$, there is a sub-multiplicative matrix norm that satisfies $\rho(M)\leq\norm{M}\leq \rho(M) + \epsilon$. 
% This allows us to think of Glorot and He initialization as setting $\E\rho(M)=1$, and setting the LSC through the previous inequalities.
If we use the LSC, $\E\rho(M)=1$, square it, and use the independence of weights and activity, and mean zero weights like in \cite{glorot2010understanding, he2015delving}, we have
$(\E\rho(M))^2=n_l\Var(w_{l,i,j}H(y_{l-1, j})^k)=n_l\Var(w_{l,i,j})\E H(y_{l-1, j})^{2k}=n_l\Var(w_{l,i,j})\E H(y_{l-1, j})^k$, where the last step follows from observing that squaring the Heaviside or one, does not change them. 
Notice that for $k=1$ we are going to use the same extra assumptions as done by \cite{he2015delving}: ``we let $w_{l-1,i,j}$ have a symmetric distribution around zero and $b_{l-1} = 0$, then $y_{l-1}$ has zero mean and has a symmetric distribution around zero".  
Then,  $\E H(y_{l-1, j})^k = 1/2^k$, and we recover both Glorot \cite{glorot2010understanding} and He \cite{he2015delving} scaling for respectively linear and ReLU networks

\begin{equation}\begin{aligned}
    \Var w_l = \frac{2^k}{n_l}
\end{aligned}\end{equation}

\vspace{.3cm}

\noindent since Glorot proposes $ \Var w_l = 1/n_l$ and He $ \Var w_l = 2/n_l$, as we wanted.

Instead, for the Orthogonal initializations in linear networks
\cite{orthogonal_initialization}, we only have to note that an orthogonal $\partial y_{l, i}/\partial y_{l-1,j}=w_{l,i,j}$ satisfies that all its eigenvalues have modulus one, and so will the radius, which is the largest eigenvalue modulus. Therefore the Orthogonal initialization for linear networks satisfies $\E\rho(M)=1$, which is the LSC. QED
\end{proof}

We needed to prove the following Lemma in order to prove the previous Corollary.

\kost*

\begin{proof}
Given the assumptions, we can apply Kostlan theorem \cite{kostlan1992spectra, dubach2018powers}, which states that the modulus of each eigenvalue behaves in distribution as a chi variable, so    $|\lambda_k|\sim \chi_k$. Therefore, in expectation
\begin{equation}\begin{aligned}
    \E|\lambda_k|=\sqrt{2}\frac{\Gamma(\frac{k+1}{2})}{\Gamma(\frac{k}{2})}
\end{aligned}\end{equation}
\noindent the mean of the modulus of the $k$-th eigenvalue follows the mean of the chi distribution, as we wanted to prove, with $\Gamma(\cdot)$ the gamma function. To prove the vanishing variance for the larger eigenvalues, as $k$ increases, notice that given that we are drawing from a $\chi_k$ distribution

\begin{equation}\begin{aligned}
    \Var |\lambda_k| =& \,k-\Big(\E|\lambda_k|\Big)^2\\
     =& \,k-\Big(\sqrt{2}\frac{\Gamma(\frac{k+1}{2})}{\Gamma(\frac{k}{2})}\Big)^2\\
     \xrightarrow{k\rightarrow\infty}&\, k-\Big(\sqrt{k}\Big)^2\\
     =& \,0
\end{aligned}\end{equation}

Therefore the larger the matrix is and the larger the eigenvalue modulus, the less likely it is to deviate from its mean.
QED
\end{proof}

\newpage
\section{Counting the number of gradient paths in a $d$-RNN}
\label{app:counting}

\begin{lemma} \label{thm:binomialgrad}
Be $t, t'\in\{0, 1, \cdots T\}$, $l\in\{0, 1, \cdots L\}$, and $c$ an index for all the shortest paths between opposite ends of a rectangular grid of sides $\Delta l=L-l$ and $\Delta t=t-t'$. The quantity $\frac{1}{T}\sum_t\sum_{t'\leq t}\sum_c1$ is equal to $\frac{1}{T}\binom{T + \Delta l +2}{T}$ and is sub-exponential in the two limits (i) $T\rightarrow \infty$, keeping $L$ fixed and (ii) $L\rightarrow \infty$, keeping $T$ fixed, but is exponential in the limit (iii) where both $T, L$ tend to infinity at the same rate.
\end{lemma}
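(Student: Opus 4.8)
The plan is to proceed in two stages: first establish the closed-form identity $\frac{1}{T}\sum_t\sum_{t'\le t}\sum_c 1 = \frac{1}{T}\binom{T+\Delta l+2}{T}$, and then analyze the asymptotics of the right-hand side in the three limits. For the combinatorial identity, recall that the number of shortest (monotone) lattice paths between opposite corners of a rectangular grid with sides $\Delta l$ and $\Delta t = t-t'$ is $\sum_c 1 = \binom{\Delta l + (t-t')}{\Delta l}$. So the quantity to evaluate is $\frac{1}{T}\sum_{t=0}^{T}\sum_{t'=0}^{t}\binom{\Delta l + t - t'}{\Delta l}$. First I would change variables $s = t - t'$, which for fixed $t$ ranges over $0,\dots,t$, giving $\frac{1}{T}\sum_{t=0}^{T}\sum_{s=0}^{t}\binom{\Delta l + s}{\Delta l}$. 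The inner sum is a hockey-stick identity: $\sum_{s=0}^{t}\binom{\Delta l + s}{\Delta l} = \binom{\Delta l + t + 1}{\Delta l + 1}$. Then the outer sum is again hockey-stick: $\sum_{t=0}^{T}\binom{\Delta l + t + 1}{\Delta l + 1} = \binom{\Delta l + T + 2}{\Delta l + 2}$, which equals $\binom{T+\Delta l+2}{T}$ by symmetry of the binomial coefficient. Dividing by $T$ gives the claimed closed form. (A small caveat: one should fix the meaning of $\Delta l$ here — in the statement it is a free side-length, whereas in the application in Thm.~\ref{thm:lsc} the relevant count descends to layer $l$; I would just note that $\Delta l \le L$ so the bound $\frac{1}{T}\binom{T+L+2}{T}$ of Thm.~\ref{thm:ulscboth} follows by monotonicity of the binomial in its arguments.)

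For the asymptotics, write $N(T,L) \mathrel{:=} \frac{1}{T}\binom{T+L+2}{T}$ with $L = \Delta l$ fixed or varying as specified. In limit (i), $T\to\infty$ with $L$ fixed: $\binom{T+L+2}{L+2} = \frac{(T+L+2)!}{(L+2)!\,T!}$ is a polynomial in $T$ of degree $L+2$, so $N(T,L)$ is polynomial in $T$ of degree $L+1$, hence sub-exponential. In limit (ii), $L\to\infty$ with $T$ fixed: $\binom{T+L+2}{T} = \frac{(T+L+2)!}{T!\,(L+2)!}$ is a polynomial in $L$ of degree $T$, so $N(T,L)$ is polynomial in $L$, again sub-exponential. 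In limit (iii), $T,L\to\infty$ with $T/L \to r$ for some fixed ratio $r>0$: I would apply Stirling's formula to $\binom{T+L}{T}$, which gives $\binom{T+L}{T} \sim \frac{1}{\sqrt{2\pi}}\sqrt{\frac{T+L}{TL}}\,\exp\!\big((T+L)H(\tfrac{T}{T+L})\big)$ where $H(p) = -p\log p - (1-p)\log(1-p)$ is the binary entropy. Since $\frac{T}{T+L} \to \frac{r}{r+1} \in (0,1)$, the entropy $H$ converges to a strictly positive constant $H_0$, so $\binom{T+L}{T}$ grows like $e^{c(T+L)}$ with $c = H_0 > 0$; the polynomial prefactor and the extra $+2$ shift and the division by $T$ do not affect the exponential rate, so $N(T,L)$ is exponential in $T+L$. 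On a semi-log plot this is a straight line, matching Fig.~\ref{fig:subexp}(b)(iii).

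The main obstacle is really just bookkeeping rather than conceptual difficulty: making sure the two hockey-stick summations are applied with the correct index ranges and the correct "top" argument, and being careful that the empty-grid / zero-side-length cases ($\Delta l = 0$ or $t = t'$) are consistent with $\binom{m}{0}=1$ so the identity holds at the boundary. For the asymptotics, the only subtle point is justifying that the ratio $T/L$ being bounded away from $0$ and $\infty$ is exactly what makes the binary entropy $H$ bounded away from $0$, which is what separates limit (iii) from limits (i) and (ii); I would state this explicitly since it is the crux of the "additive explosion" phenomenon the paper is built around.
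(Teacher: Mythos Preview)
Your proof is correct. The combinatorial identity is established exactly as in the paper: count lattice paths as $\binom{\Delta l+\Delta t}{\Delta l}$ and apply the hockey-stick (parallel summation) identity twice; the paper's computation is line-for-line the same as yours.

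Where you differ is in the asymptotics. For limits (i) and (ii) the paper does not use the direct polynomial-degree observation; instead it invokes the inequalities $\binom{n}{k}\le (en/k)^k$ and the factorial sandwich $e(n/e)^n\le n!\le en(n/e)^n$ together with $(1+1/x)^x<e$, and grinds out explicit upper bounds $O(\Delta l^T)$ and $(T+\Delta l+2)^{\Delta l+3}/\big(T(\Delta l+2)!\big)$. Your one-line argument that $\binom{T+L+2}{L+2}$ is a degree-$(L+2)$ polynomial in $T$ (and symmetrically in $L$) is more elementary and transparent, and loses nothing since only sub-exponentiality is claimed. For limit (iii) the paper avoids Stirling entirely: it uses the elementary lower bound $\binom{n}{k}\ge(n/k)^k$ and $(1+1/x)^{x+1}>e$, plus a small auxiliary lemma $e^{x/e}\ge x$, to produce an explicit exponential lower bound $e^{2/a}a^{k\Delta l/2}(1+2/(ak))^{-2/a}$ with $a=1+1/k$. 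Your Stirling/entropy route is the standard asymptotic-combinatorics argument and makes the mechanism (binary entropy bounded away from zero exactly when $T/L$ is bounded away from $0$ and $\infty$) more visible, at the cost of importing Stirling rather than keeping everything elementary. Either approach suffices; yours is shorter and more conceptual, the paper's is self-contained with explicit constants.
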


\begin{proof}

The number of shortest paths from one vertex to the opposite in a grid, is given by the binomial coefficient as follows

\begin{equation}\begin{aligned}
    \frac{1}{T}\sum_t\sum_{t'\leq t}\sum_c1 
    =&  \frac{1}{T}\sum_{t=0}^T\sum_{\Delta t=0}^t \sum_c1\\
    =&  \frac{1}{T}\sum_{t=0}^T\sum_{\Delta t=0}^t \binom{\Delta l + \Delta t}{\Delta t}
\end{aligned}\end{equation}

Using the parallel summation property of binomial coefficients \cite{spivey2019art} we have that

\begin{equation}\begin{aligned}
    \sum_{\Delta t=0}^t \binom{\Delta l + \Delta t}{\Delta t}=&\binom{\Delta l + t+1}{t}\\\
    \sum_{t=0}^T \binom{\Delta l + t+1}{t}=&\binom{\Delta l + T+2}{T}
\end{aligned}\end{equation}

Therefore

\begin{equation}\begin{aligned}
    \frac{1}{T}\sum_t\sum_{t'\leq t}\sum_c1 =&  \ \frac{1}{T}\binom{T + \Delta l +2}{T}
\end{aligned}\end{equation}

Now we can understand how this bound behaves in the limit of depth and time going to infinity. 
We test 3 limits: (i) $T\rightarrow\infty$, with $L$ fixed (ii) $\Delta l\rightarrow\infty$ with $T$ fixed and (iii) $T, \Delta l\rightarrow\infty$ with $T/\Delta l$ fixed.

(i) First $\Delta l\rightarrow\infty$, using the upper bound to the binomial coefficient $\binom{n}{k}\leq(e\cdot n/k)^k$ \cite{binomineq}

\begin{equation}\begin{aligned}
    \frac{1}{T}\binom{T + \Delta l +2}{T}
    \leq \frac{e^T}{T} \Big(\frac{T + \Delta l +2}{T}\Big)^T = O(\Delta l^T)
\end{aligned}\end{equation}

\noindent which is polynomial in $\Delta l$ if $T$ is fixed, therefore sub-exponential, and therefore the bounded formula also has to be sub-exponential.

(ii) Second, we prove the limit $T\rightarrow\infty$ using ($\star$) the binomial coefficient definition, ($\star\star$) using the  $e(n/e)^n\leq n!\leq en(n/e)^n$ upper bound for the numerator, and lower bound for the denominator  \cite{binomineq}, and ($\star\star\star$) considering that $(1 + 1/x)^x<e$ for every $x$ \cite{dorrie2013100, binomineq}, and some arithmetics otherwise

\begin{equation}\begin{aligned}
    \frac{1}{T}\binom{T + \Delta l +2}{T}
    = & \ \frac{1}{T}\frac{(T+\Delta l + 2)!}{(\Delta l + 2)!T!} && \star\\
     \leq& \ \frac{1}{T}\frac{1}{(\Delta l + 2)!}\frac{e(T+\Delta l + 2)\Big(\frac{(T+\Delta l + 2)^{(T+\Delta l + 2)}}{e^{(T+\Delta l + 2)}}\Big)}{eT^Te^{-T}} && \star\star\\
    =& \ \frac{1}{T}\frac{1}{(\Delta l + 2)!}\frac{(T+\Delta l + 2)(T+\Delta l + 2)^{(T+\Delta l + 2)}}{T^Te^{-T}e^{(T+\Delta l + 2)}}\\
    =& \ \frac{1}{T}\frac{1}{(\Delta l + 2)!}\frac{(T+\Delta l + 2)(T+\Delta l + 2)^{(T+\Delta l + 2)}}{T^Te^{(\Delta l + 2)}}\\
    =& \ \frac{1}{T}\frac{1}{(\Delta l + 2)!}\frac{(T+\Delta l + 2)(T+\Delta l + 2)^{T}(T+\Delta l + 2)^{(\Delta l + 2)}}{T^Te^{(\Delta l + 2)}}\\
    =& \ \frac{1}{T}\frac{1}{(\Delta l + 2)!}\frac{(T+\Delta l + 2)T^T(1+\frac{\Delta l + 2}{T})^{T}(T+\Delta l + 2)^{(\Delta l + 2)}}{T^Te^{(\Delta l + 2)}}\\
    =& \ \frac{1}{T}\frac{1}{(\Delta l + 2)!}\frac{(T+\Delta l + 2)(1+\frac{\Delta l + 2}{T})^{\frac{T}{\Delta l + 2}\Delta l + 2}(T+\Delta l + 2)^{(\Delta l + 2)}}{e^{(\Delta l + 2)}}\\
    <& \ \frac{1}{T}\frac{1}{(\Delta l + 2)!}\frac{(T+\Delta l + 2)e^{\Delta l + 2}(T+\Delta l + 2)^{(\Delta l + 2)}}{e^{(\Delta l + 2)}} && \text{$\star\star\star$} \\
    <& \ \frac{1}{T}\frac{1}{(\Delta l + 2)!}(T+\Delta l + 2)^{(\Delta l + 3)}\\
\end{aligned}\end{equation}

\noindent which also is polynomial in $T$ if $\Delta l$ is fixed.
These two limits are sub-exponential. However, we show in the following that if we grow both time and depth together to infinity, the growth is exponential.

(iii) Therefore, we study the third case where both $T, \Delta l\rightarrow\infty$ with $T/\Delta l=k$ fixed. In this case we search for a lower bound since if the lower bound explodes exponentially, nothing above can explode slower than exponentially. If we use ($\star$) the lower bounds $\binom{n}{k}\geq(n/k)^k$ \cite{binomineq} and ($\star\star$) that $(1+ 1/x)^{x+1}>e$ for every $x$ \cite{dorrie2013100, binomineq}, defining the constant $a= 1+ 1/k$  and using some arithmetics

\begin{equation}\begin{aligned}
    \frac{1}{T}\binom{T + \Delta l +2}{T}
    \geq & \ \frac{1}{T} \Big(\frac{T + \Delta l +2}{T}\Big)^T  && \star\\
    = & \ \frac{1}{k\Delta l} \Big(\frac{k\Delta l + \Delta l +2}{k\Delta l}\Big)^{k\Delta l}\\
    = & \ \frac{1}{k\Delta l} \Big(\frac{k +  1}{k}+\frac{2}{k\Delta l}\Big)^{k\Delta l}\\
    = & \ \frac{1}{k\Delta l}a^{k\Delta l}\Big(1+\frac{2}{ak\Delta l}\Big)^{k\Delta l}\\
    = & \ \frac{1}{k\Delta l}a^{k\Delta l}\Big(1+\frac{2}{ak\Delta l}\Big)^{\frac{ak\Delta l}{2}\frac{2}{a}}\\
    = & \ \frac{1}{k\Delta l}a^{k\Delta l}\Big(1+\frac{2}{ak\Delta l}\Big)^{\Big(\frac{ak\Delta l}{2}+1-1\Big)\frac{2}{a}} && \text{preparing for } \star\star\\
    = & \ \frac{1}{k\Delta l}a^{k\Delta l}\Big(1+\frac{2}{ak\Delta l}\Big)^{\Big(\frac{ak\Delta l}{2}+1\Big)\frac{2}{a}}\Big(1+\frac{2}{ak\Delta l}\Big)^{-\frac{2}{a}}\\
    > & \ \frac{e^\frac{2}{a}}{k\Delta l}a^{k\Delta l}\Big(1+\frac{2}{ak\Delta l}\Big)^{-\frac{2}{a}}\label{eq:tlinfstart} && \star\star\\
\end{aligned}\end{equation}

Now, for $x\geq 0$ we have $x/2 > x/e$, and therefore  $\exp(x/2) > \exp(x/e)$, and using Thm.~\ref{thm:exgeqx} proven below, we have that $\exp(x/2) > \exp(x/e) \geq x$. Restating for convenience, this means that $\exp(x/2) >  x$, and if we multiply both sides by $\exp(x/2)/x$ we have that $\exp(x)/x >  \exp(x/2)$. If we substitute in \ref{eq:tlinfstart}, and noticing that the smallest $\Delta l$ is one, we arrive at

\begin{equation}\begin{aligned}
    \frac{1}{T}\binom{T + \Delta l +2}{T}
    > & \ \frac{e^\frac{2}{a}}{k\Delta l}a^{k\Delta l}\Big(1+\frac{2}{ak\Delta l}\Big)^{-\frac{2}{a}}\\
    > & \ e^\frac{2}{a}a^{\frac{k\Delta l}{2}}\Big(1+\frac{2}{ak\Delta l}\Big)^{-\frac{2}{a}}\\
    > & \ e^\frac{2}{a}a^{\frac{k\Delta l}{2}}\Big(1+\frac{2}{ak}\Big)^{-\frac{2}{a}}\\
\end{aligned}\end{equation}

Notice also that $a^k = (1+1/k)^k>1^k=1$ is always greater than one for $k>0$.
Therefore, since $\frac{1}{T}\binom{T + \Delta l +2}{T}$ is lower bounded by a function that grows exponentially with $\Delta l$, it has to grow at least exponentially itself.
We have therefore shown that limits (i) and (ii) are sub-exponential and (iii) is exponential, as we wanted to show. See figure \ref{fig:subexp} for experimental confirmation. QED
\end{proof}

The following small Theorem was necessary to finalize the proof above.

\begin{theorem} \label{thm:exgeqx}
For all real $x$ the following holds: $e^{x/e}\geq x$.
\end{theorem}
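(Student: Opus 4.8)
The plan is to reduce the statement to the elementary tangent-line inequality $e^{u} \geq 1 + u$, which holds for every real $u$ (it expresses that the convex graph of $\exp$ lies above its tangent line at $u = 0$, and can be justified in one line if needed). The key observation is the substitution $u = x/e - 1$: this gives $e^{x/e - 1} \geq 1 + (x/e - 1) = x/e$, and multiplying both sides by $e > 0$ yields exactly $e^{x/e} \geq x$. That is the entire argument, and there is essentially no obstacle; the only point to be careful about is keeping the direction of the inequality straight after multiplying through by $e$.

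An alternative, fully self-contained route is a short calculus argument. Define $f(x) = e^{x/e} - x$ on $\mathbb{R}$. For $x \leq 0$ the claim is trivial since $f(x) = e^{x/e} - x > 0$. For $x > 0$, I would compute $f'(x) = e^{-1} e^{x/e} - 1$, which vanishes only at $x = e$, and $f''(x) = e^{-2} e^{x/e} > 0$ everywhere, so $x = e$ is the unique global minimizer of $f$. Since $f(e) = e^{1} - e = 0$, it follows that $f(x) \geq 0$ for all real $x$, with equality precisely at $x = e$.

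I expect neither approach to present any real difficulty; the first is a two-line derivation from a standard inequality, and the second only requires noting that the critical point $x = e$ is a minimum (via the positivity of $f''$) rather than a maximum. I would present the substitution argument as the main proof and perhaps mention the calculus version as a remark.
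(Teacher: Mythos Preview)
Your proposal is correct. Your alternative calculus route is essentially the paper's own proof: the paper defines $f(x)=e^{x/e}-x$, solves $f'(x)=0$ to find the unique critical point $x=e$ with $f(e)=0$, and then argues it is a minimum by checking that $f(x)\to+\infty$ as $x\to\pm\infty$ (where you instead invoke $f''>0$; both work).

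Your primary argument, however, is genuinely different and shorter: the substitution $u=x/e-1$ into the standard convexity inequality $e^{u}\geq 1+u$ gives $e^{x/e-1}\geq x/e$, and multiplying by $e$ finishes in one line. This avoids any explicit optimization of $f$ and exposes that the theorem is just a rescaling of the tangent-line bound for $\exp$ at the point $u=0$ (equivalently at $x=e$), with equality exactly there. The paper's calculus argument is more self-contained (it does not appeal to a prior inequality), but yours is cleaner and makes the equality case $x=e$ transparent from the outset.
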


\begin{proof}

To prove that $e^{x/e}\geq x$ is true, we will show that $e^{x/e}-x \geq0$ is true, by showing that $f(x) = e^{x/e}-x $ only has one minimum at $f(x)=0$, and the values around are positive. Let's find the extreme values

\begin{align}
    f'(x) =& \frac{1}{e}e^{x/e}-1 = 0\\
    e^{x/e} = e
\end{align}

\noindent which is true only for $x=e$. Since the exponential is monotonic increasing it can only meet the constant $e$ once, so, there is only one solution to $f'(x)=0$. Now, the extreme value is $f(e) = e^{e/e}-e=0$, and going to both infinities $f(x\rightarrow+\infty) \rightarrow+\infty$, $f(x\rightarrow-\infty) \rightarrow+\infty$, both positive, so $x=e$ is a minimum and $f(x)\geq0$. QED
\end{proof}

\newpage
\clearpage
\section{Lower-bounds with anti-norms}
\label{app:lowerbounds}

Our main result is  Thm.~\ref{thm:ulscboth}, which provides an upper-bound to the variance of the parameter update, as it is often done in the $1$-RNN literature. It was only possible because we used two properties of most convenient matrix norms $\norm{\cdot}_M$, sub-additivity and sub-multiplicativity, defined as follows
\begin{align}
    \norm{A+B}_M \leq& \ \norm{A}_M + \norm{B}_M && \text{sub-additivity}\\
    \norm{AB}_M \leq& \ \norm{A}_M\norm{B}_M && \text{sub-multiplicativity}
\end{align}
We noticed that if we reversed these properties we could find a way to a lower bound Theorem. This would mean using an anti-norm \cite{bourin2011norm, ameli2022interpolating, bourin2015anti}, sometimes denoted by $\norm{\cdot}_{!}$, that should satisfy super-additivity and super-multiplicativity, such as 
\begin{align}
    \norm{A+B}_! \geq& \ \norm{A}_! + \norm{B}_! && \text{super-additivity}\\
    \norm{AB}_! \geq& \ \norm{A}_!\norm{B}_! && \text{super-multiplicativity}
\end{align}
Super-additive norms are known as anti-norms \cite{bourin2011norm, ameli2022interpolating, bourin2015anti}, but their study is often confined to Hermitian, positive semi-definite matrices. We prove in Theorem \ref{thm:ssn} below, that anti-norms that are super-multiplicative do not exist in general, but in the restricted case of positive semi-definite matrices, determinants act as super-multiplicative anti-norms. First we prove that with a super-multiplicative anti-norm, a lower bound to the parameter update is possible. Ideally, the anti-norm would capture a notion of dispersion that would correlate with the variance. 
For the sake of generality, we are going to assume an anti-norm has only the properties that we need, super-additivity, super-multiplicativity and homogeneity, meaning $\norm{\alpha A}_! = \alpha \norm{A}_!$ for $\alpha$ a positive scalar. Therefore we do not assume they satisfy anything else, even if for example, they are often defined as positive \cite{guglielmi2020antinorm}.
We name

\begin{equation}\begin{aligned}
    b_k\in& \,\bigcup_{t, l}\Big\{\norm{\frac{\partial \boldsymbol{h}_{t, l}}{\partial \boldsymbol{h}_{t-1, l}}}_!, \norm{\frac{\partial \boldsymbol{h}_{t, l}}{\partial \boldsymbol{h}_{t-1, l-1}}}_!\Big\} \label{eq:bnotation}
\end{aligned}\end{equation}

\noindent which are the anti-norm analogue of the $a_k$ we defined in App.~\ref{app:lsc} for matrix norms.

\begin{restatable}[lower bound Local Stability Condition]{theorem}{llsc}
\label{thm:llsc} Be the multi-layer RNN in eq. \ref{eq:general_sys2}. 
The  lower Local Stability Condition (l-LSC), $\E\log |b_k|=0$, is sufficient for a lower bound to the parameter update anti-norm that in probability composes sub-exponentially, either with time or depth. This result is conditional on $\{\log |b_k|\}_k$ having Decaying Covariance with increasing distance in time and depth.
\end{restatable}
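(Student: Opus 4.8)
The plan is to run the argument behind Thm.~\ref{thm:lsc} in reverse: replace the sub-additive, sub-multiplicative matrix norm $\norm{\cdot}_M$ by an anti-norm $\norm{\cdot}_!$ that is super-additive, super-multiplicative and positively homogeneous, so that every inequality in that proof flips direction and the same skeleton yields a \emph{lower} bound on the parameter update instead of an upper bound. The l-LSC $\E\log|b_k|=0$ will play the role that $\E a_k^q=1$ played before, namely the value that keeps the per-link contribution from vanishing or exploding in probability.

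Concretely, I would start from the gradient decomposition Eq.~\ref{eq:graddecomp} together with the path expansion $J^{t,L}_{t',l}=\sum_c\prod_k M_k$ of Eq.~\ref{eq:jasm}, which follows from the Grid Derivative Pascal Triangle Eq.~\ref{eq:gdpt}. Applying super-additivity of $\norm{\cdot}_!$ across the sums $\sum_t\sum_{t'\le t}\sum_c$, then super-multiplicativity to split each chain $\prod_k M_k$ and the boundary factors $\partial\mathcal{L}/\partial\hat{\boldsymbol{o}}_t$, $\partial\hat{\boldsymbol{o}}_t/\partial\boldsymbol{h}_{t,L}$, $d\boldsymbol{h}_{t',l}/d\boldsymbol{\theta}_l$ into products of per-link anti-norms, and finally homogeneity to pull out $\eta/T$, gives
\begin{equation}
\norm{\Delta\boldsymbol{\theta}_l}_! \ \ge\ c^{!}_{\mathcal{L}oh}\,\frac{\eta}{T}\sum_t\sum_{t'\le t}\sum_c \prod_k b_k ,
\end{equation}
where $b_k$ is the notation of Eq.~\ref{eq:bnotation} and $c^{!}_{\mathcal{L}oh}=\min_{t,t',*}\norm{\partial\mathcal{L}/\partial\hat{\boldsymbol{o}}_t}_!\norm{\partial\hat{\boldsymbol{o}}_t/\partial\boldsymbol{h}_{t,L}}_!\norm{d\boldsymbol{h}_{t',l}/d\boldsymbol{\theta}_l}_!$ is now a \emph{minimum} of the boundary factors, taken so that it still lower-bounds every summand (assumed positive, as anti-norms of the fixed boundary maps).

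Next I would take logarithms inside the product, $\log\bigl|\prod_k b_k\bigr|=\sum_k\log|b_k|=n\cdot\tfrac1n\sum_k\log|b_k|$ with $n=\Delta t+\Delta l$, and use the hypothesis that $\{\log|b_k|\}_k$ has Decaying Covariance with finite variance to apply Bernstein's weak law for dependent variables (Theorem~\ref{thm:bern}): $\tfrac1n\sum_k\log|b_k|\xrightarrow{p}\E\log|b_k|$. Imposing the l-LSC $\E\log|b_k|=0$ then makes $\log\bigl|\prod_k b_k\bigr|=n\cdot o_p(1)$, so $\prod_k b_k$ stays of order one, sub-exponentially in $n$, in probability, up to the same small-$n$ fluctuation constants $c_p$ that appear in the proof of Thm.~\ref{thm:lsc} (here acting as a multiplicative correction). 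Substituting this order-one lower bound into the triple sum and invoking Lemma~\ref{thm:binomialgrad}, the right-hand side becomes proportional to $\tfrac1T\binom{T+\Delta l+2}{T}$, which by that Lemma is sub-exponential in limit (i) $T\to\infty$ with $L$ fixed and in limit (ii) $L\to\infty$ with $T$ fixed — exactly the claimed "composes sub-exponentially, either with time or depth."

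The main obstacle is the very existence of $\norm{\cdot}_!$: by Thm.~\ref{thm:ssn} no matrix function on general matrices is simultaneously super-additive and super-multiplicative, so the argument has content only on the restricted class of positive semi-definite transition derivatives, where the determinant furnishes a (super-multiplicative, multiplicative) anti-norm — meaning in practice the PSD property has to be encouraged during pre-training for the bound to apply. A secondary technical point, inherited from Thm.~\ref{thm:lsc}, is that the weak law controls only the \emph{average} $\tfrac1n\sum_k\log|b_k|$ and not the partial sum, so the step from "average $\to 0$" to "product stays order one" must be phrased in probability and absorbs a fluctuation term; making this watertight — e.g. bounding $\mathrm{Var}\bigl(\sum_k\log|b_k|\bigr)=O(n)$ under Decaying Covariance and carrying the resulting $e^{O_p(\sqrt n)}$ envelope through Lemma~\ref{thm:binomialgrad} — is the part I expect to need the most care.
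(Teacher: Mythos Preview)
Your proposal is correct and follows essentially the same route as the paper: flip every inequality in the proof of Thm.~\ref{thm:lsc} by swapping $\norm{\cdot}_M$ for a super-additive, super-multiplicative $\norm{\cdot}_!$, write $\prod_k|b_k|=\exp\bigl(n\cdot\tfrac1n\sum_k\log|b_k|\bigr)$, apply Bernstein's WLLN under Decaying Covariance to force the average to $\E\log|b_k|=0$, and then feed the resulting triple sum into Lemma~\ref{thm:binomialgrad}. The only cosmetic addition in the paper is an explicit sign flag $(-1)^o$ to cover anti-norms that are not a priori nonnegative, whereas you assume positivity of the boundary factors; your closing caveats about Thm.~\ref{thm:ssn} and the $e^{O_p(\sqrt n)}$ fluctuation envelope are, if anything, more careful than the paper's own treatment, which simply absorbs the small-$n$ fluctuations into a constant $c_p$.
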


\begin{proof}

We can repeat the steps taken to prove Theorem \ref{thm:lsc}, retrieve the jacobian of the multi-layer recurrent network, and this time apply the super-additivity and super-multiplicativity of our anti-norm. Therefore we start again from the chain rule for the parameter update, Eq.~\ref{eq:graddecomp}, develop it in terms of the transition derivatives $M_k$, Eq.~\ref{eq:jasm}, and apply the two anti-norm properties we are interested in, super-additivity and super-multiplicativity:

\begin{equation}\begin{aligned}
    \norm{\Delta\theta_{l} }_!
    =& \ \frac{\eta}{T}\norm{ \sum_t\sum_{t'\leq t}\sum_c-\frac{\partial \mathcal{L}}{\partial \hat{\boldsymbol{o}}_t}\frac{\partial\hat{\boldsymbol{o}}_t}{\partial \boldsymbol{h}_{t,L}}
    \prod_k M_k\frac{d\boldsymbol{h}_{t',l}}{d \boldsymbol{\theta}_{l}}}_!
    \\
    \geq& \ \frac{\eta}{T}\sum_t\sum_{t'\leq t}\sum_c \norm{- \frac{\partial \mathcal{L}}{\partial \hat{\boldsymbol{o}}_t}\frac{\partial\hat{\boldsymbol{o}}_t}{\partial \boldsymbol{h}_{t,L}}
    \prod_k M_k\frac{d\boldsymbol{h}_{t',l}}{d \boldsymbol{\theta}_{l}}}_! && \text{super-additivity}
    \\
    \geq& \ \frac{\eta}{T}\sum_t\sum_{t'\leq t}\sum_c \norm{-1}_!\norm{\frac{\partial \mathcal{L}}{\partial \hat{\boldsymbol{o}}_t}}_!\norm{\frac{\partial\hat{\boldsymbol{o}}_t}{\partial \boldsymbol{h}_{t,L}}}_!
    \prod_k \norm{M_k}_!\norm{\frac{d\boldsymbol{h}_{t',l}}{d \boldsymbol{\theta}_{l}}}_! && \text{super-multiplicativity}
\end{aligned}\end{equation}

Let's call $c_{\mathcal{L}oh}$ the following minimal value:
\begin{equation}\begin{aligned}
    c_{\mathcal{L}oh} = \min_{t,t', *}\Big|\norm{\frac{\partial \mathcal{L}}{\partial \hat{\boldsymbol{o}}_t}}_M\norm{\frac{\partial\hat{\boldsymbol{o}}_t}{\partial\boldsymbol{h}_{t,L}}}_M
    \norm{\frac{d\boldsymbol{h}_{t',l}}{d \boldsymbol{\theta}_{l}}}_M\Big|
\end{aligned}\end{equation}
\noindent where we denote by * a $\min$ across all the data inputs, if the following expectation is over data, across all the parameter initializations if the following expectation is over initializations, or both if the following expectation is over both.

Using the $b_k$ notation from Eq.~\ref{eq:bnotation}, and tag with $o=1$ the case where the antinorm can take negative values, and $o=0$ the case where the antinorm is always positive, to have

\begin{equation}\begin{aligned}
    \norm{\Delta\theta_{l}}_!
    \geq&  \  c_{\mathcal{L}oh}\frac{\eta}{T}\sum_t\sum_{t'\leq t}\sum_c \prod_k b_k \\
    \geq&  \ (-1)^o c_{\mathcal{L}oh}\frac{\eta}{T}\sum_t\sum_{t'\leq t}\sum_c \prod_k |b_k| \\
    =&  \ (-1)^o c_{\mathcal{L}oh}\frac{\eta}{T}\sum_t\sum_{t'\leq t}\sum_c \exp\Big(\sum_k  \log |b_k|\Big)
\end{aligned}\end{equation}

Assuming Decaying Covariance, which means that the covariance of $ \log |b_k|$ with $ \log |b_{k'}|$ decreases as the distance in depth and time increases,  $\Cov( \log |b_k|,  \log |b_{k'}|)\rightarrow0$ as $|k-k'|\rightarrow\infty$, and  provided that $\Var \log |b_k|<\infty$, we can apply a Weak Law of Large Numbers for dependent random variables, also known as one of Bernstein's Theorems \cite{cacoullos2012exercises}, and restated above. The Theorem states that given the mentioned assumptions, the average converges in probability to the expectation, $\frac{1}{n}\sum_k   \log |b_k|\xrightarrow{p}\overline{\mu}$ as $n\rightarrow\infty$. If we do not want it to vanish or explode when inside the exponential, we need $\overline{\mu}=0$. A sufficient condition to achieve that is to require $\E \log |b_k| =0$ since in that case $\E\frac{1}{n}\sum_k  \log |b_k|=0$. Therefore,   $\E \log |b_k| =0$  stabilizes a lower bound to the anti-norm the parameter update $\Delta\theta$. Therefore, in probability we have

\begin{equation}\begin{aligned}
    \norm{\Delta\theta_{l}}_!
    \overset{p}{\geq}&  (-1)^o\frac{\eta}{T}\sum_t\sum_{t'\leq t}\sum_c1 + c_p\\
    \E\norm{\Delta\theta_{l}}_!
    \overset{p}{\geq}&  (-1)^o\frac{\eta}{T}\sum_t\sum_{t'\leq t}\sum_c1  + \overline{c}_p
\end{aligned}\end{equation}

\noindent where we summarize in $c_p$ the fluctuations around the convergence value for small $n$, and $\overline{c}_p=\E c_p$ the average fluctuation for small $n$. 
As we did for the LSC, Lemma \ref{thm:binomialgrad} shows that this quantity grows exponentially only when both $T,L$ tend to infinity at the same rate, but it is sub-exponential in the cases where only one of them is taken to infinity, leaving the other fixed.

QED
\end{proof}

However, in general an anti-norm that is both super-additive and super-multiplicative, doesn't exist, as we prove in the following Theorem, only using  super-additivity and super-multiplicativity as its properties:

\begin{restatable}[non-existence of super-multiplicative and super-additive scalar matrix functions]{theorem}{ssn}\label{thm:ssn}
No super-additive and super-multiplicative scalar function  exists of complex or real matrices, apart from the zero function.
\end{restatable}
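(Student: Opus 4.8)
The plan is to squeeze $f$ from both sides: first show $f\ge 0$ on every matrix, then combine this with $f(A)+f(-A)\le f(0)=0$ to force $f\equiv 0$. I would work with a real-valued $f$ on the $n\times n$ matrices over $\mathbb{R}$ or $\mathbb{C}$ for a fixed $n\ge 2$; the restriction $n\ge 2$ is genuinely needed, since on $1\times 1$ matrices the identity $f(x)=x$ is both super-additive and super-multiplicative, so I would add this hypothesis to the statement. Write $E_{ij}$ for the standard matrix units, so that $E_{ij}E_{kl}=\delta_{jk}E_{il}$.

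First I would normalise. Setting $A=B=0$ in super-additivity gives $f(0)\ge 2f(0)$, hence $f(0)\le 0$; setting it in super-multiplicativity gives $f(0)\ge f(0)^2\ge 0$. Together these force $f(0)=0$. Next, for any $i\ne j$ and any scalar $d$ the matrix $dE_{ij}$ is square-zero, so $0=f(0)=f((dE_{ij})^2)\ge f(dE_{ij})^2\ge 0$, which forces $f(dE_{ij})=0$ for every (scaled) off-diagonal unit.

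The key point is that a diagonal matrix unit factors through two off-diagonal ones: since $E_{ij}E_{ji}=E_{ii}$ we have $dE_{ii}=E_{ij}\,(dE_{ji})$ for any $j\ne i$ (which exists because $n\ge 2$), and super-multiplicativity then gives $f(dE_{ii})\ge f(E_{ij})\,f(dE_{ji})=0$. Now for an arbitrary $A=(a_{ij})$ I would expand $A=\sum_{i,j}a_{ij}E_{ij}$ and apply super-additivity iteratively across these $n^2$ terms, obtaining $f(A)\ge \sum_{i}f(a_{ii}E_{ii})+\sum_{i\ne j}f(a_{ij}E_{ij})\ge 0$. Hence $f\ge 0$ everywhere. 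Finally, from $0=f(0)=f(A+(-A))\ge f(A)+f(-A)$ with both summands nonnegative, I conclude $f(A)=0$ for all $A$; since the zero function is trivially super-additive and super-multiplicative, it is the unique such function.

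I do not expect the computations to be the obstacle — they are all one-line matrix-unit identities. The real content is recognising which three consequences of the axioms to extract: the square-zero trick that annihilates $f$ on off-diagonal units, the factorisation $E_{ii}=E_{ij}E_{ji}$ that makes $f$ nonnegative on diagonal units and hence (via super-additivity) on all matrices, and the pairing $f(A)+f(-A)\le 0$ that closes the sandwich. The only subtlety to watch is the $n=1$ exception noted above, and, if one wants the claim for matrices of all sizes simultaneously, the fact that the $1\times 1$ block would still need to be excluded separately.
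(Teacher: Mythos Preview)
Your argument is correct and takes a genuinely different route from the paper's proof. Both proofs reach the same endgame---show $f\ge 0$ on every matrix and then squeeze with $0=f(0)\ge f(A)+f(-A)$---but they get to $f\ge 0$ by different means. The paper argues that $\|I\|_!=\|-I\|_!=0$ by writing $-I=J^2$ (with $J=iI$ over $\mathbb{C}$, or a block anti-diagonal $J$ in even real dimension), and then obtains $\|A\|_!=\|AI\|_!\ge\|A\|_!\,\|I\|_!=0$; for real matrices this forces a case analysis on the parity of the dimensions and a somewhat delicate reduction of the odd--odd case via rectangular factors. Your matrix-unit approach sidesteps all of that: nilpotency of $dE_{ij}$ kills $f$ on off-diagonal units, the factorisation $dE_{ii}=E_{ij}(dE_{ji})$ gives nonnegativity on diagonal units, and iterated super-additivity over the full expansion $A=\sum a_{ij}E_{ij}$ propagates $f\ge 0$ to every matrix. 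This handles the real and complex cases uniformly and with no parity bookkeeping.

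You are also right to flag the $n=1$ exception: the identity map $f(x)=x$ on $1\times 1$ matrices is both super-additive and super-multiplicative (with equality), so the theorem as literally stated is false there, and your proof genuinely uses $n\ge 2$ in the step $E_{ii}=E_{ij}E_{ji}$ with $j\ne i$. The paper does not note this, so your observation is a useful sharpening rather than a defect in your argument.
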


% \ssn*

\begin{proof}
Let's define $\norm{\cdot}_!$ a super-additive and super-multiplicative scalar function.
Super-additivity implies $\|0+0\|_! \geq \|0\|_!+\|0\|_!$ so $\|0\|_! \geq 2\|0\|_!$,
which is true only if $\|0\|_! =0$. Super-additivity implies also   $0=\|0\|_! = \|I-I\|_!\geq\|I\| + \|-I\|$ where $I$ is the identity matrix.

The proof for complex matrices is easier, since there is always a matrix such that $-I=JJ$ when all the elements in the diagonal are $i$, the imaginary unit. Therefore by super-multiplicativity $\|-I\|_!=\|JJ\|_!\geq\|J\|_!^2\geq0$ and $\|I\|_!=\|II\|_!\geq\|I\|_!^2\geq0$, which is compatible with the previous statements only if $\|-I\|_!=\|I\|_!=0$. This implies that for every rectangular complex matrix $\|A\|_!=\|AI\|_!\geq\|A\|_!\|I\|_!=0$, but considering super-additivity $0 \geq \|A\|_!+\|-A\|_!$ which is consistent with the previous statement only if $\|A\|_!=0$. This completes the proof for complex matrices.

The proof for real matrices $A\in \R^{m\times n}$ follows the same lines with the complication that there is no such $J$ for $m$ odd. When $m$ is even we can construct a matrix that satisfies $-I=JJ$ by having all the elements in the upper anti-diagonal equal to $1$ and all the elements in the lower anti-diagonal equal to $-1$. Following the same steps outlined for complex matrices we arrive at the conclusion that all matrices $A$ with $m=n$ an even number result in $\|A\|_!=0$. 

The same is true if one of the two sides is even, since we can multiply the even side by the identity matrix to have, if $n$ is even,  $\|A\|_!=\|AI\|_!\geq\|A\|_!\|I\|_!=0$ that combined with super-additivity $0 \geq \|A\|_!+\|-A\|_!$ implies $\|A\|_!=0$. Equally if $m$ is even, by multiplying on the left by the identity. Finally, when both $m,n$ are odd, what still holds is that $\|I\|_!=\|II\|_!\geq\|I\|_!^2\geq0$. For $-I$ we can always construct it as $-I=-A^{-1}A$, with $A\in\R^{m'\times n}$ a rectangular matrix with $m'$ even, that we proved to have $\|A\|_!=0$, so $\|-I\|_!=\|-A^{-1}A\|_!\geq\|-A^{-1}\|_!\|A\|_!=0$. Combined with $0\geq\|I\| + \|-I\|$ it implies $\|I\| = \|-I\|=0$. Now we can repeat the argument used before, $\|A\|_!=\|AI\|_!\geq\|A\|_!\|I\|_!=0$, combined with $0 \geq \|A\|_!+\|-A\|_!$ implies $\|A\|_!=0$ for $m,n$ odd also. QED.
\end{proof}

However, when the matrices considered are positive semi-definite, the determinant is super-additive and (super-)multiplicative. In that case the result of Theorem 
\ref{thm:llsc} can be used to lower bound the non-centered second moment of the parameter update by using e.g. $\E\Delta \theta_{l}^2 =\E\det(\Delta \boldsymbol{\theta}_l^T\boldsymbol{\theta}_l )/m_l$. The determinant of rectangular matrices would be a concern, but solutions could be found using generalized determinants or the Cauchy-Binet formula. Given that the determinant of a matrix is the multiplication of all its eigenvalues, satisfying the lower LSC would mean their multiplication needs to be one, while satisfying both upper and lower LSC means that each eigenvalue has to be equal to one.

\newpage

\newpage
\section{Controlling the local derivatives will control both, backward and forward pass}\label{sec:controlpasses}

As it can be seen below, analogous chains of derivatives describe (1) how much the loss will change $ \overline{\mathcal{L}} + \delta \overline{\mathcal{L}}$ under a small change in the input $x_{t',-1} + \delta x_{t',-1}$, (2) how much the latent representations will change $\boldsymbol{h}_{t,l} + \delta \boldsymbol{h}_{t,l}$ with a small change in the input or a small change in a representation below, and (3) how much the parameters have to change $\theta_{l} + \Delta\theta_{l}$ for the network to get closer to the minimum loss. Therefore, the same chain of multiplied transition derivatives, possibly longer or shorter, describes both, the so called forward and backward pass in gradient back-propagation. The analysis we did to avoid the exponential explosion of the variance of the parameters update through the use of local constraints, will therefore avoid the exponential explosion of  (1) the variance of the change in the loss given a change in the input, (2) the variance of the change in the representations given a change in the input or in another representation and evidently, as we have discussed in most of our Theorems, (3) the variance of the parameter update.

\begin{equation}\begin{aligned}
    \delta \overline{\mathcal{L}}
    =& \frac{1}{T}\sum_t\sum_{t'\leq t}\frac{\partial \mathcal{L}}{\partial \hat{\boldsymbol{o}}_t}\frac{\partial\hat{\boldsymbol{o}}_t}{\partial \boldsymbol{h}_{t,L}}
    J^{t,L}_{t',0}\frac{\partial \boldsymbol{h}_{t',0}}{\partial x_{t', -1}} \delta x_{t', -1}\\
    \delta \boldsymbol{h}_{t, l}
    =& \sum_{t'\leq t}
    J^{t,l}_{t',0}\frac{\partial \boldsymbol{h}_{t',0}}{\partial x_{t', -1}} \delta x_{t', -1}\\
    \delta \boldsymbol{h}_{t, l}
    =& \sum_{t'\leq t}
    J^{t,l}_{t',l'} \delta \boldsymbol{h}_{t', l'}\\
    \Delta\theta_{l} 
    =&-\frac{\eta}{T}\sum_t\sum_{t'\leq t}\frac{\partial \mathcal{L}}{\partial \hat{\boldsymbol{o}}_t}\frac{\partial\hat{\boldsymbol{o}}_t}{\partial \boldsymbol{h}_{t,L}}
    J^{t,L}_{t',l}\frac{d\boldsymbol{h}_{t',l}}{d \boldsymbol{\theta}_{l}}
\end{aligned}\end{equation}

These equations are simple applications of the chain rule for the derivatives to the $d$-RNN we have focused on in this work, that we restate here
\begin{equation}
\tag{\ref{eq:general_sys2}}
  \begin{aligned}
    \boldsymbol{h}_{t,l} =& \ g_h( \boldsymbol{h}_{t-1,l}, \boldsymbol{h}_{t-1,l-1},  \boldsymbol{\theta}_l) \\
    \hat{\boldsymbol{o}}_t =& \ g_o(\boldsymbol{h}_{t,L},  \boldsymbol{\theta}_o)
  \end{aligned}
\end{equation}
\noindent and $\overline{\mathcal{L}}(\boldsymbol{o}, \hat{\boldsymbol{o}}) = \frac{1}{T}\sum_t \mathcal{L}(\boldsymbol{o}_t, \hat{\boldsymbol{o}}_t)$ with $t\in [0, \cdots, T]$, where $\mathcal{L}$ can be the mean squared error, cross-entropy or any loss choice. We define $J^{t,L}_{t',l}=\partial \boldsymbol{h}_{t,L}/\partial \boldsymbol{h}_{t',l}$.

\section{Known results used in this work}

For completeness, we restate the Bernstein Theorem that we use in Theorem \ref{thm:lsc}, as stated in \cite{cacoullos2012exercises}. The original proof is in \cite{bernshtein1918loi, kozlov2005weighted}, and a translation can be found in \footnote{\url{https://math.stackexchange.com/questions/245327/weak-law-of-large-numbers-for-dependent-random-variables-with-bounded-covariance}}. Essentially this Theorem allowed us to make a statement about an expectation of a function $\E f(x)$, using something that was easier to compute, the function of the expectation $f(\E x)$. Notice also that Decaying Covariance was defined at the beginning of App.~\ref{app:lsc} to be able to use this Bernstein Theorem.

\begin{restatable}[Bernstein Theorem]{theorem}{bern}
\label{thm:bern}
Let $\{X_n\}$ be a sequence of random variables so that $\Var(X_i)<\infty$, $i=1, 2, \cdots$ and $Cov(X_i, X_j)\rightarrow0$ when $|i-j|\rightarrow\infty$, then the Weak Law of Large Numbers (WLLN) holds.
\end{restatable}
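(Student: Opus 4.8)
The plan is to reduce the weak law to a second-moment estimate and then invoke Chebyshev's inequality. Write $S_n=\sum_{i=1}^n X_i$ and $\mu_n=\E S_n=\sum_{i=1}^n \E X_i$; the conclusion we want is $\frac1n (S_n-\mu_n)\xrightarrow{p}0$. Since $\Pr(|S_n/n-\mu_n/n|>\delta)\le \Var(S_n/n)/\delta^2$ for every $\delta>0$, it suffices to prove that $\Var(S_n/n)=\frac{1}{n^2}\Var(S_n)\to 0$ as $n\to\infty$.

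First I would expand $\Var(S_n)=\sum_{i=1}^n\Var(X_i)+\sum_{i\ne j}\Cov(X_i,X_j)$, splitting $\frac{1}{n^2}\Var(S_n)$ into a diagonal and an off-diagonal part. Here one uses (as the standard formulation of the theorem requires, reading ``$\Var(X_i)<\infty$'' as a uniform bound $\sup_i\Var(X_i)\le C<\infty$) Cauchy--Schwarz in the form $|\Cov(X_i,X_j)|\le\sqrt{\Var(X_i)\Var(X_j)}\le C$. The diagonal part is then at most $\frac{1}{n^2}\cdot nC=C/n\to 0$, so all the work is in the off-diagonal sum.

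For the off-diagonal part I would do an $\epsilon$-split on the index gap. Fix $\epsilon>0$; by the hypothesis $\Cov(X_i,X_j)\to 0$ as $|i-j|\to\infty$ there is an $m=m(\epsilon)$ with $|\Cov(X_i,X_j)|<\epsilon$ whenever $|i-j|>m$. Partition the pairs $(i,j)$ with $i\ne j$ into the ``near'' pairs $|i-j|\le m$ and the ``far'' pairs $|i-j|>m$. There are at most $2mn$ near pairs, each of covariance at most $C$ in absolute value, contributing at most $2mC/n\to 0$; and at most $n^2$ far pairs, each contributing at most $\epsilon$, for a total of at most $\epsilon$. Hence $\limsup_{n\to\infty}\frac{1}{n^2}\Var(S_n)\le\epsilon$, and letting $\epsilon\to0$ gives $\Var(S_n/n)\to 0$, which by Chebyshev is exactly the WLLN.

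I expect no deep obstacle here: the argument is a one-line Chebyshev reduction followed by a routine splitting of a double sum. The only points requiring care are (i) invoking the uniform variance bound so that the $O(mn)$ near-diagonal covariances --- which individually need not be small --- are still negligible after dividing by $n^2$, and (ii) the order of limits, taking $n\to\infty$ with the window $m$ fixed before sending $\epsilon\to 0$. No mixing, martingale, or characteristic-function machinery is needed.
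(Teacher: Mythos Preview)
Your proof is correct and is in fact the standard Chebyshev-plus-window argument for Bernstein's theorem. Note, however, that the paper does not supply its own proof of this statement: it appears in the section ``Known results used in this work'' and is merely restated with citations to \cite{bernshtein1918loi, kozlov2005weighted, cacoullos2012exercises}, so there is no in-paper proof to compare against. Your argument matches the classical one found in those references.

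One point worth flagging explicitly (which you already do in passing): the hypothesis as literally written, ``$\Var(X_i)<\infty$ for each $i$'', is too weak for the argument and for the theorem itself --- a uniform bound $\sup_i\Var(X_i)\le C$ (or at least $\frac{1}{n^2}\sum_{i=1}^n\Var(X_i)\to 0$) is needed both for the diagonal term and for the Cauchy--Schwarz control of the near-diagonal covariances. You are right to read the statement this way; the cited sources state it with the uniform bound.
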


\noindent where the WLLN simply states that the sample average converges in probability towards the expected value \cite{loeve1977elementary}: $\overline {X}_{n}\ {\xrightarrow {p}}\ \mu$ when $n\to \infty$. Equivalently, for any positive number $\varepsilon$,

\begin{equation}\begin{aligned}
    \lim _{n\to \infty }\Pr \Big(\,|{\overline {X}}_{n}-\mu |<\varepsilon\,\Big)=1
\end{aligned}\end{equation}

\newpage
\section{More Training Details}
\label{app:trainingdetails}

The FFNs we use have a width of $128$, and we train until convergence on the validation set with a batch size of $32$.
We collect the training hyper-parameters for the RNNs in Tab.~\ref{tab:traindetails}.
The learning rates were chosen after a grid search over the set $\{10^{-2},3.16 \cdot10^{-3}, 10^{-3}, 3.16 \cdot10^{-4}, 10^{-4}, 3.16 \cdot10^{-5},10^{-5 }\}$ with the ALIF and LSTM networks on each task, shown in Fig.~\ref{fig:lrs}. We used the LSTM optimal learning rate on all the differentiable networks, while we used the ALIF optimal on the non-differentiable networks.
On the PTB task, the input passes through an embedding layer before passing to the first layer, and the output of the last layer is multiplied by the embedding to produce the output, removing the need for the readout
\cite{wozniak2020deep, radford2018improving}. We show in Tab.~\ref{tab:traindetails} the hyper-parameters used for a depth of $L=2$, where the widths are chosen to keep the number of learnable parameters comparable across neuron models. Whenever we report different depths, the width is kept as in the $L=2$ case.
\begin{table}[h]
\centering
\begin{tabular}{rccc}
& sl-MNIST & SHD &  PTB \\ \hline
\\
batch size & 256 & 100 & 32 \\
weight decay & None & None &  None \\
gradient clipping & None & None & None \\
\shortstack{train/val/test \\ \ } & \shortstack{45k/5k/10k\\ samples } & \shortstack{8k/1k/2k \\ samples }  &   \shortstack{930k/74k/82k \\ words} \\
% learning rate $\eta$ & $3.16 \cdot 10^{-4}$ &  $3.16 \cdot 10^{-4}$ &  $3.16 \cdot 10^{-5}$ \\
% label smoothing & 0.1 & 0.1 & 0.1 \\
time step repeat & 2 & 2 & 2 \\
\\
LSTM\\
layers width $n_l$ & 42/42 & 94/94 & 500/300 \\
parameters & 153,646 & 371,884 &  5,563,200 \\
learning rate $\eta$ & $10^{-2}$ &  $10^{-3}$ &  $3.16 \cdot 10^{-4}$ \\
\\
GRU\\
layers width $n_l$ & 53/53 & 117/117 & 625/300 \\
parameters & 151,113 & 372,665 &  5,572,425 \\
\\
RNN $\sigma/ReLU$\\
layers width $n_l$ & 128/128 & 256/256 &  1300/300 \\
parameters & 151,050 & 381,460 &  5,561,600 \\
\\
ALIF$_{+/\pm}$\\
layers width $n_l$ & 128/128 & 256/256 &  1300/300 \\
parameters & 151,820 & 382,998 &  5,566,400 \\
$\hat{\boldsymbol{\tau}}_y$ & 0.1 & 0.1 & 0.1\\  $\hat{\boldsymbol{\tau}}_\vartheta$  & 100 & 250 & 100\\
$\hat{\boldsymbol{b}}^{\vartheta}_l$  & 0.01 & 0.01 & 0.01\\
$\hat{\boldsymbol{\beta}}_l$  & 1.8 & 1.8 & 1.8\\
learning rate $\eta$ & $10^{-3}$ &  $10^{-3}$ &  $10^{-3}$ \\
\end{tabular}
\caption{\textbf{Hyper-parameters used for the experiments.} Different hyper-parameters are used for different tasks. Importantly, widths are chosen to have a comparable number of parameters across architectures. Learning rates for optimal performance on LSTM and ALIF, after the grid search shown in Fig.~\ref{fig:lrs}. No gradient clipping nor weight decay are applied, to better show the effect of gradient explosion. }\label{tab:traindetails}
\end{table}
\begin{figure}[h]
    \centering
    \includegraphics[width=\textwidth]{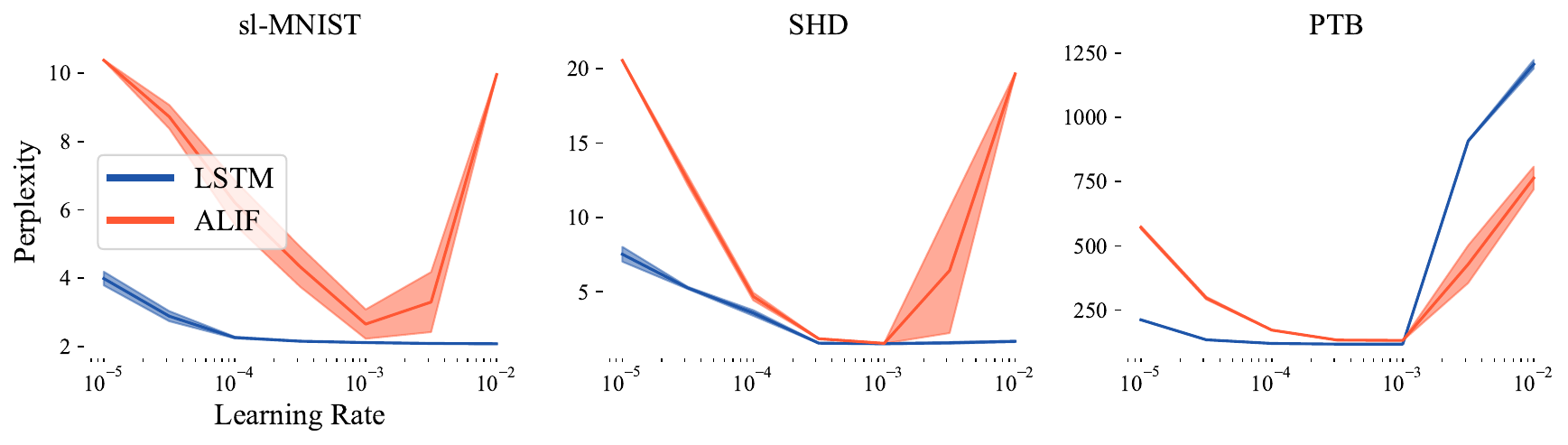}
    \caption{\textbf{Grid Search for best learning rate.} We show in the plots perplexity against learning rate, where perplexity is the exponentiation of the cross-entropy loss, to have more homogeneous plots. Grid search for optimal learning rate was performed on LSTM and ALIF. Optimal LSTM learning rate was used on the differentiable architectures, and optimal ALIF learning rate was used on the non-differentiable architectures.}
    \label{fig:lrs}
\end{figure}

\newpage
% \clearpage
\section{Neural Networks Details}
\label{app:morenets}

We pre-train six different recurrent networks to satisfy the LSC that we describe in more detail in the following.

\textbf{LSTM.} The LSTM implementation we used is the following
\begin{equation}\begin{aligned}
    \boldsymbol{i}_t =& \, \sigma_g(W_i\boldsymbol{x}_t + U_i\boldsymbol{h}_{t-1}+\boldsymbol{b}_i)\\
    \boldsymbol{f}_t =& \,  \sigma_g(W_f\boldsymbol{x}_t + U_f\boldsymbol{h}_{t-1}+\boldsymbol{b}_f)\\
    \boldsymbol{o}_t =& \,  \sigma_g(W_o\boldsymbol{x}_t + U_o\boldsymbol{h}_{t-1}+\boldsymbol{b}_o)\\
    \tilde{\boldsymbol{c}}_t =& \,  \sigma_c(W_c\boldsymbol{x}_t + U_c\boldsymbol{h}_{t-1}+\boldsymbol{b}_c)\\
    \boldsymbol{c_t} =& \,  \boldsymbol{f}_t\circ \boldsymbol{c}_{t-1}+ \boldsymbol{i}_t\circ \tilde{\boldsymbol{c}}_t\\
    \boldsymbol{h}_t =& \,  \boldsymbol{o}_t\circ\sigma_h(\boldsymbol{c}_t)
\end{aligned}\end{equation}
The dynamical variables $\boldsymbol{i}_t, \boldsymbol{f}_t, \boldsymbol{o}_t$ represent the input, forget and output gates, while $\boldsymbol{c}_t, \boldsymbol{h}_t$ represent the two hidden layers of the LSTM, that act as a working memory. The matrices $W_j$ are initialized as Glorot Uniform, $U_j$ as Orthogonal, and the biases $\boldsymbol{b}_j$ as zeros, with $j\in\{i, f, o, c\}$. The activations are the sigmoid for $\sigma_g$ and hyperbolic tangent for $\sigma_c, \sigma_h$.

\textbf{GRU.} The GRU implementation we used is the following
\begin{equation}\begin{aligned}
    \boldsymbol{z}_t =& \, \sigma_g(W_z\boldsymbol{x}_t + U_z\boldsymbol{h}_{t-1}+\boldsymbol{b}_z)\\
    \boldsymbol{r}_t =& \,  \sigma_g(W_r\boldsymbol{x}_t + U_r\boldsymbol{h}_{t-1}+\boldsymbol{b}_r)\\
    \hat{\boldsymbol{h}}_t =& \,  \phi_h(W_h\boldsymbol{x}_t + U_h(\boldsymbol{r}_t\circ\boldsymbol{h}_{t-1})+\boldsymbol{b}_h)\\
    \boldsymbol{h_t} =& \, (1-\boldsymbol{z}_t)\circ \boldsymbol{h}_{t-1}+ \boldsymbol{z}_t\circ \hat{\boldsymbol{h}}_t
\end{aligned}\end{equation}
The dynamical variables $\boldsymbol{r}_t, \boldsymbol{z}_t$ represent the  gates, while $\boldsymbol{h}_t$ represents the hidden layer, that acts as a working memory. The matrices $W_j$ are initialized as Glorot Uniform, $U_j$ as Orthogonal, and the biases $\boldsymbol{b}_j$ as zeros, with $j\in\{z,r,h\}$. The activations are $\phi_h$ the hyperbolic tangent and $\sigma_g$ the sigmoid.

\textbf{RNN $\sigma/ReLU$.} The RNN implementation we used is $\boldsymbol{h}_{t,l} = a(W_{rec,l}\boldsymbol{h}_{t-1,l} + W_{in,l}\boldsymbol{h}_{t-1,l-1}+ \boldsymbol{b}_l)$ with either $a$ being the sigmoid activation or ReLU. On the PTB task, we use a smooth ReLU instead of ReLU itself, Swish \cite{hendrycks2016gaussian, ramachandran2017searching}, since ReLU gave errors pre-training. The matrices $W_{in,l}$ are initialized as Glorot Uniform, $W_{rec,l}$ as Orthogonal, and the biases $\boldsymbol{b}_j$ as  Glorot Uniform, to make sure the neurons are not off when the input is zero.

\textbf{Biologically Plausible Adaptive Spikes (ALIF).} Computationally, the simplest model of a spiking biological neuron is the  Leaky-Integrate-and-Fire (LIF) neuron \cite{lapique1907recherches}. When extended with a dynamic threshold \cite{brette2005adaptive}, it is known to be able to reproduce qualitatively all major classes of neurons, as defined electrophysiologically in vitro  \cite{izhikevich2003simple}, and to reproduce up to 96\% precision the spiking time of more detailed models of neurons \cite{brette2005adaptive}. A common definition of an Adaptive LIF (ALIF) \cite{lsnn, yin2021accurate, gerstner2014neuronal} is:
\begin{equation}\label{eq:alif}
\begin{aligned}
    \boldsymbol{y}_{t,l} =& \, \boldsymbol{\alpha}^y_{l} \boldsymbol{y}_{t-1,l} \\&+ W_{rec,l}\boldsymbol{x}_{t-1,l}  + W_{in,l}\boldsymbol{x}_{t-1,l-1}\\
    &+\boldsymbol{b}^y_l-\boldsymbol{\vartheta}_{t-1,l}\boldsymbol{x}_{t-1,l}  \\
    \boldsymbol{\vartheta}_{t,l} =& \, \boldsymbol{\alpha}^{\vartheta}_{l} \boldsymbol{\vartheta}_{t-1,l} +\boldsymbol{b}^{\vartheta}_l + \boldsymbol{\beta}_l \boldsymbol{x}_{t-1,l} \\
    \boldsymbol{x}_{t,l} =& \, H(\boldsymbol{y}_{t,l} - \boldsymbol{\vartheta}_{t,l}) \\
    \hat{\boldsymbol{o}}_t =& \, W_R \boldsymbol{x}_{t,L}
\end{aligned}
\end{equation}
\noindent where $\boldsymbol{y}_{t,l}, \boldsymbol{\vartheta}_{t,l}$ are the dynamical variables that describe the voltage and the threshold of the neuron. A spike $\boldsymbol{x}_{t,l}$ is generated when the voltage surpasses the threshold, and the term $-\boldsymbol{\vartheta}_{t-1,l}\boldsymbol{x}_{t-1,l}$ represents a soft reset mechanism. Neurons across layers are connected by $W_{in,l}\in\mathbb{R}^{n_l\times n_{l-1}}$ while within by $W_{rec,l}\in\mathbb{R}^{n_l\times n_l}$. The parameters, $\boldsymbol{b}^y, \boldsymbol{b}^\vartheta$ set the equilibrium voltage and threshold,  $\boldsymbol{\alpha}^y, \boldsymbol{\alpha}^\vartheta$ set the speed of change of both dynamical variables, and the spike frequency adaptation $\boldsymbol{\beta}$ determines how fast the threshold rises after the emission of a spike \cite{sfa_darjan, language_spike_adaptation}. For both $j\in\{y, \vartheta\}$ we computed $\boldsymbol{\alpha}^j=\exp(-1/\boldsymbol{\tau}_j)$. At initialization $W_{rec,l}, W_{in,l}$, are sampled as Glorot Uniform, $\boldsymbol{b}^y_l$ is set equal to zero and non trainable, while $\mu\in\{\boldsymbol{\tau}_y, \boldsymbol{\tau}_\vartheta, \boldsymbol{b}^{\vartheta}_l,\boldsymbol{\beta}_l \}$ take values in line with \cite{lsnn}, are trainable, and each sampled at initialization from a truncated Gaussian of mean $\hat{\mu}$ and standard deviation $3\hat{\mu}/7$, that guarantees positive values, see App.~\ref{app:trainingdetails}. We call ALIF$_{\pm}$ an alternative initialization with the only difference that $\beta$ is initialized as a gaussian centered at zero, with standard deviation $\hat{\mu}/n_{in}$. During pre-training $\{\boldsymbol{b}^{\vartheta}_l,\boldsymbol{\beta}_l \}$ are adjusted with the input $\kappa$ multiplier, while the $\{\boldsymbol{\tau}_y, \boldsymbol{\tau}_\vartheta\}$ are pre-trained with the recurrent $\kappa$ multiplier.

The $H$ in the ALIF equations is a Heaviside function, and therefore non differentiable, unless distributional derivatives are considered, in which case the delta function represents its derivative. However the delta function is zero almost everywhere, so the derivative of the fast-sigmoid is used in the backward pass as a surrogate gradient \cite{zenke2018superspike}. The surrogate gradient  is defined as $d\tilde{H}(v)/dv = \gamma f(\omega\cdot v)$, where $\omega$ is the sharpness, $\gamma$ the dampening, $f$ is the shape of choice and $\cdot$ the scalar product. We use a $\omega=1$ and  $\gamma=0.5$, in line with \cite{lsnn} and because we saw better performances. Our theory still applies, since, even if we approximate the gradient with the surrogate gradient, the surrogate gradient is the exact formula that we use to propagate information in the backward pass and update the weights. In terms of biological plausibility, the surrogate gradient gives us a higher quality estimate of a good direction to update the parameters of the network than the true gradient, that is zero almost always. Therefore, a desirable biologically plausible update rule, should make similar choices about the update direction as the surrogate gradient update.

\textbf{LRU.} The stem of the LRU is defined as

\begin{align}
    x_k =& \ \Lambda x_{k-1} + \exp(\gamma^{\log}) \odot (Bu_k)\\
    y_k =& \ \Re(Cx_k) + Du_k
\end{align}

\noindent where we call $n_h$ the width of the hidden state, $u_k\in\mathbb{R}^{n_l}$ is the input, $B\in\mathbb{C}^{n_l\times n_h}$ where the real and imaginary components are initialized as gaussians with $1/\sqrt{n_l}$ standard deviation, $C\in\mathbb{C}^{n_l\times n_h}$ where the real and imaginary components are initialized as gaussians with $1/\sqrt{2n_h}$ standard deviation, $D\in\mathbb{R}^{n_l}$ initialized as gaussians with $1$ standard deviation. The matrix $\Lambda\in\mathbb{C}^{n_h\times n_h}$ only has diagonal learnable parameters, of the form $\lambda_j=\exp(-\exp(\nu^{\log}) + i\theta_j)$ with the radius $\exp(-\exp(\nu^{\log}))$ uniformly distributed between $[0.5, 1]$ and $\theta_j$ uniformly distributed between $[0, \pi/10]$. To improve performance, the input is passed through a layer normalization \cite{ba2016layer}, then fed to the LRU stem, that goes through a GeLU activation \cite{hendrycks2016gaussian}, followed by a dropout layer \cite{srivastava2014dropout} of $0.1$ dropout probability, a GLU layer \cite{dauphin2017language}, another dropout, and added back to the input. We add an additional dropout layer before the layer norm and the skip connection, that we call the exterior dropout in the next paragraph.

When we mention the grid search on the learning rate in Sec.~\ref{sec:lruresults},
for a width of 128, 100 time steps per batch, and an exterior dropout of $0$, the best learning rate found was $1e^{-2}$. When we run the Gaussian Process hyper-parameter optimization, the best combination found was an exterior dropout of $0.1$ for a depth of 6, and of $0.2$ for a depth of 3, a width of $256$, a learning rate of $3e^{-2}$ with cosine decay to a tenth of that and no restarts, batch size of $64$, $300$ time steps per batch, AdaBelief optimizer \cite{zhuang2020adabelief}  as a Lookahead Optimizer \cite{zhang2019lookahead}
and with Stochastic Weight Averaging \cite{Izmailov2018AveragingWL}. We also found that a modification of the Masked Language Model technique \cite{bertmodel} improved learning. For the initial 3 epochs of learning, we randomly switch 5\% of words at the input to another word, but the target words remain the correct ones in the dataset.

\newpage
\section{Decaying covariance of ALIF and LSTM in more tasks}
\label{app:deccov}

To prove Theorem \ref{thm:lsc}, we need to assume that $\{a_k^q\}_k$ has decaying covariance with increasing distance in time and depth, where $q$ is a real number such that $q\geq 1$. Since we only use a depth of 2, the more significant multiplicative exponential composition will come from the increment in time steps. Here we show that experimentally $\{a_k^q\}_k$ has decaying covariance, for $q=1$ and with the matrix norm induced by the vector $p$-norm with $p=2$. We show how the decaying covariance condition is satisfied by the ALIF and LSTM  networks, with their default initializations in Fig.~\ref{fig:dc1} and after pre-training to satisfy the LSC in Fig.~\ref{fig:dc2}, on all the tasks considered in this study. 

\begin{figure}[h]
    \centering
    \includegraphics[width=.45\textwidth]{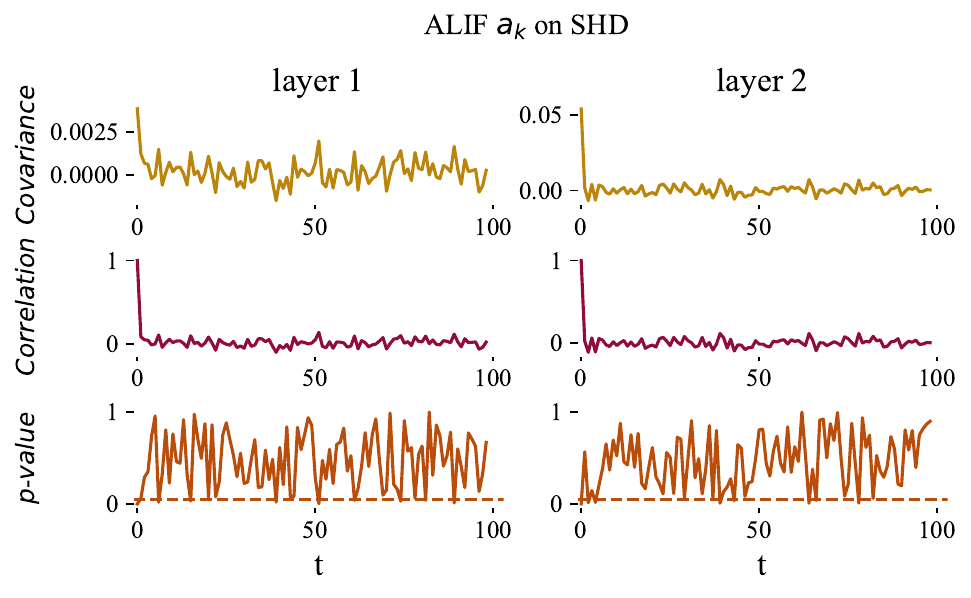}
    \includegraphics[width=.43\textwidth]{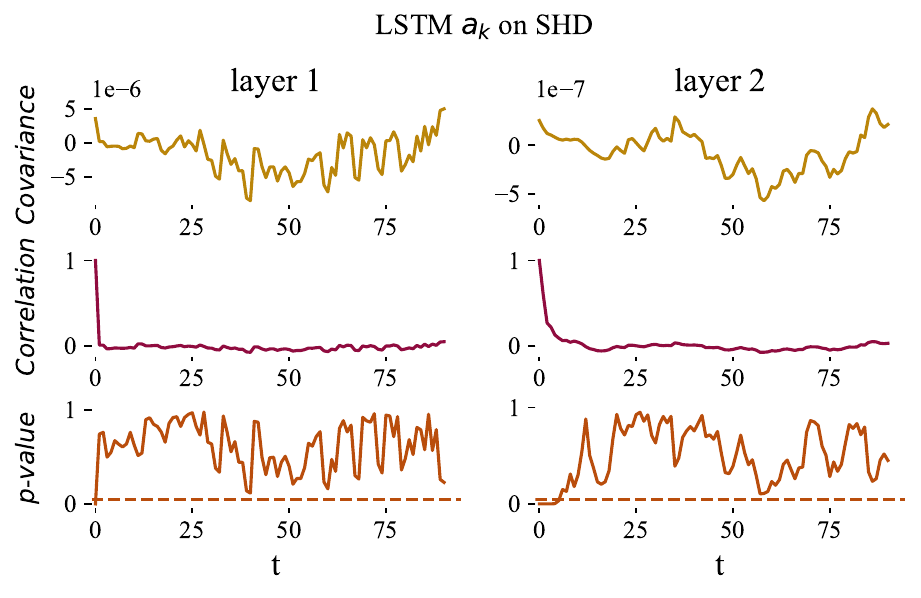}
    \includegraphics[width=.43\textwidth]{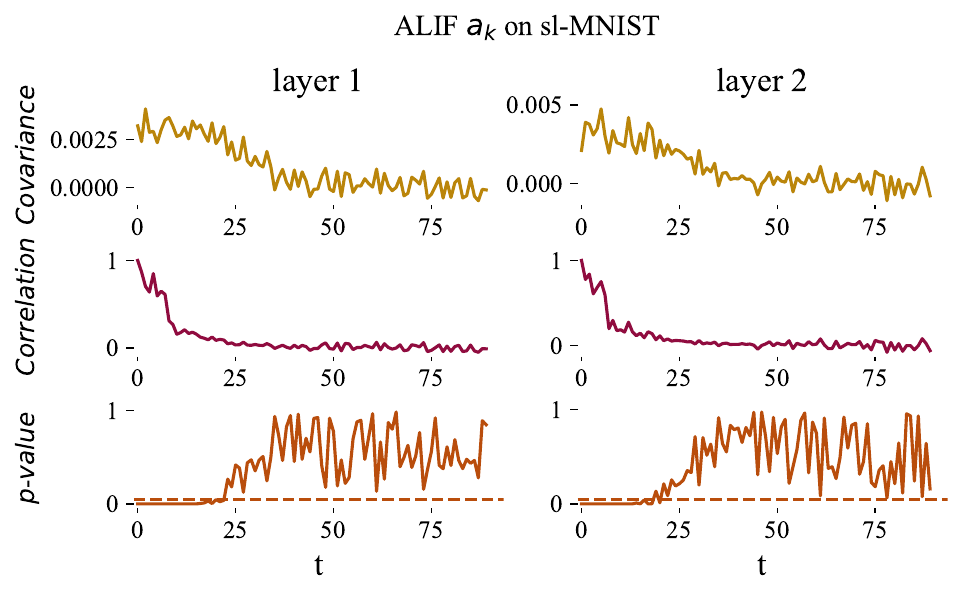}
    \includegraphics[width=.43\textwidth]{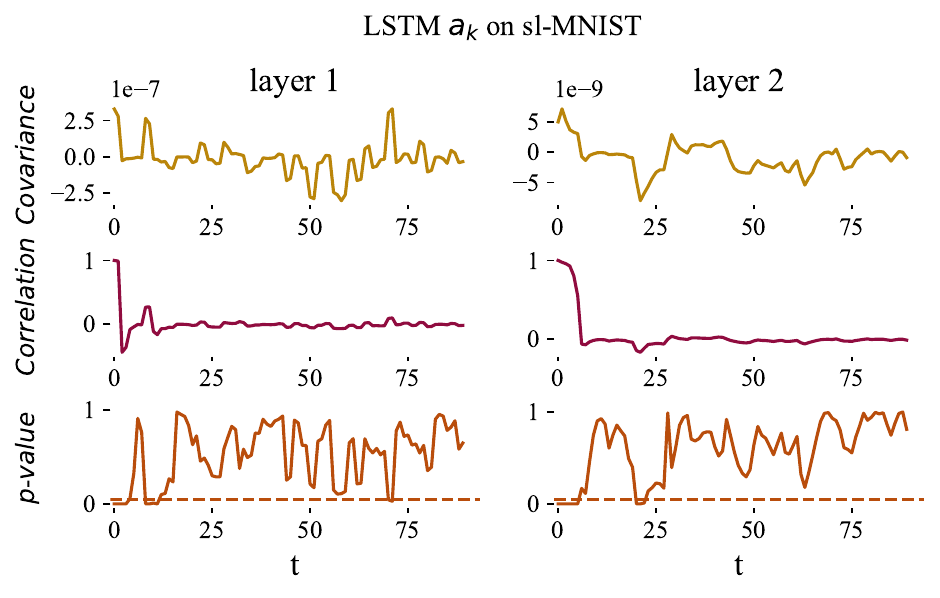}
    \includegraphics[width=.45\textwidth]{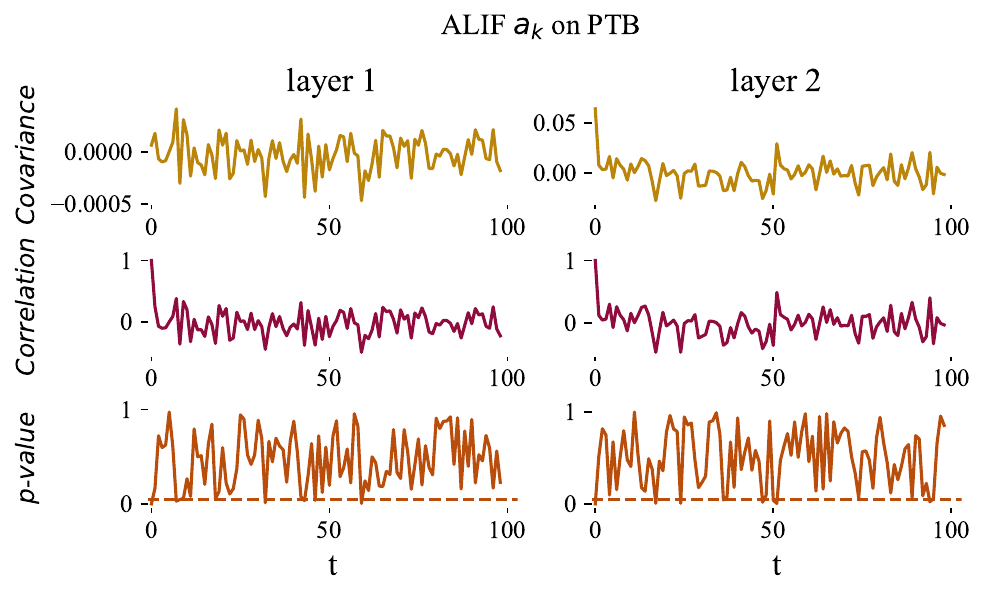}
    \includegraphics[width=.43\textwidth]{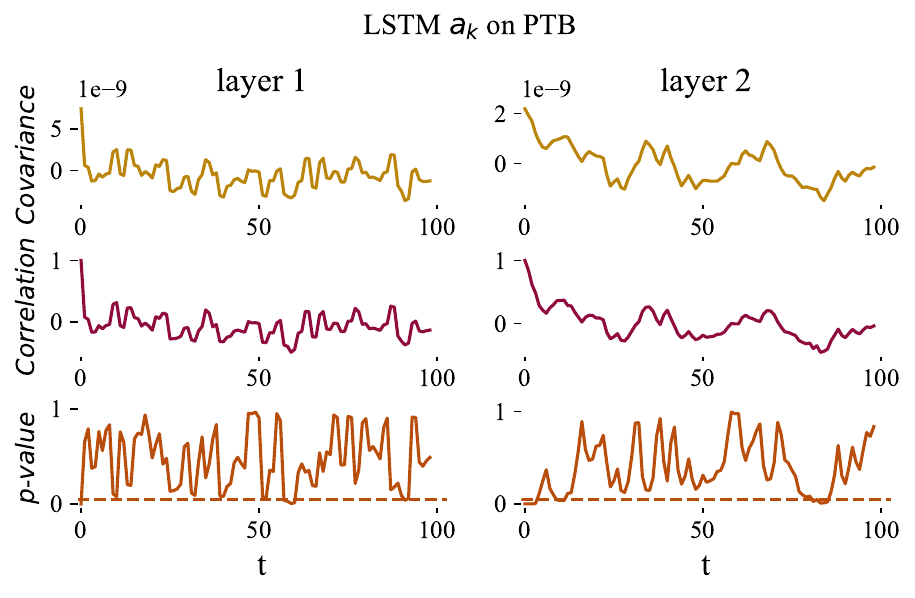}
    \caption{\textbf{Covariance of  $a_k$ decays with time or remains very close to zero at initialization.} The three statistics, covariance, correlation and $p$-value of $a_k$, are computed at each data time step $t$ with respect to $t=0$, with \textit{numpy} \cite{harris2020array}. In this case, the $a_k$ chosen is the matrix norm induced by the vector $p$-norm with $p=2$. We can see that the covariances are very small. We can also see that the correlation quickly decays to zero, and the $p$-values above $0.05$ indicate that the small correlations are not significant. In this plot LSTM and ALIF were initialized with their default initialization. This shows that both ALIF and LSTM have Decaying Covariance at initialization, as defined in App.~\ref{app:lsc} and as necessary to prove Thm.~\ref{thm:lsc}.}
    \label{fig:dc1}
\end{figure}

\begin{figure}
    \centering
    \includegraphics[width=.45\textwidth]{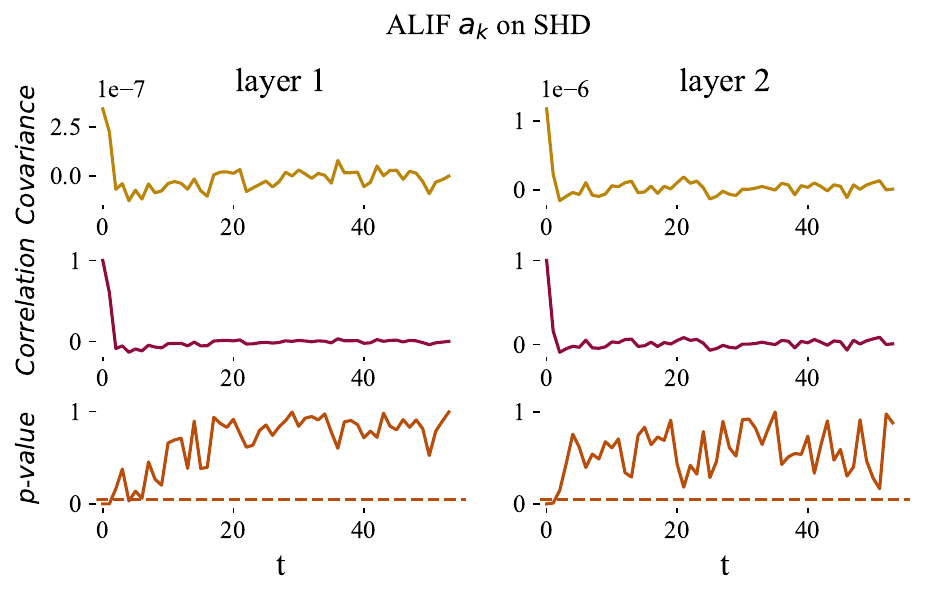}
    \includegraphics[width=.43\textwidth]{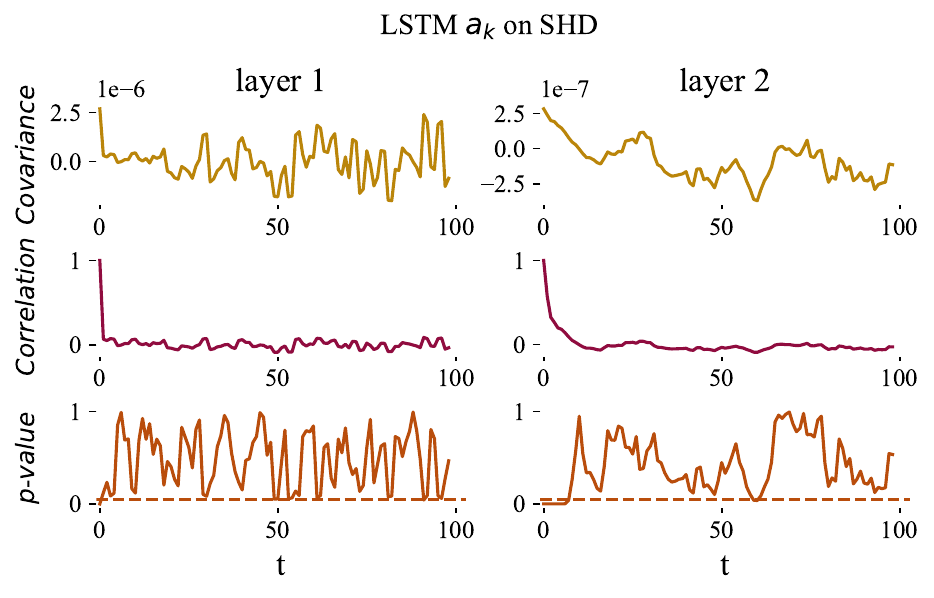}
    \includegraphics[width=.43\textwidth]{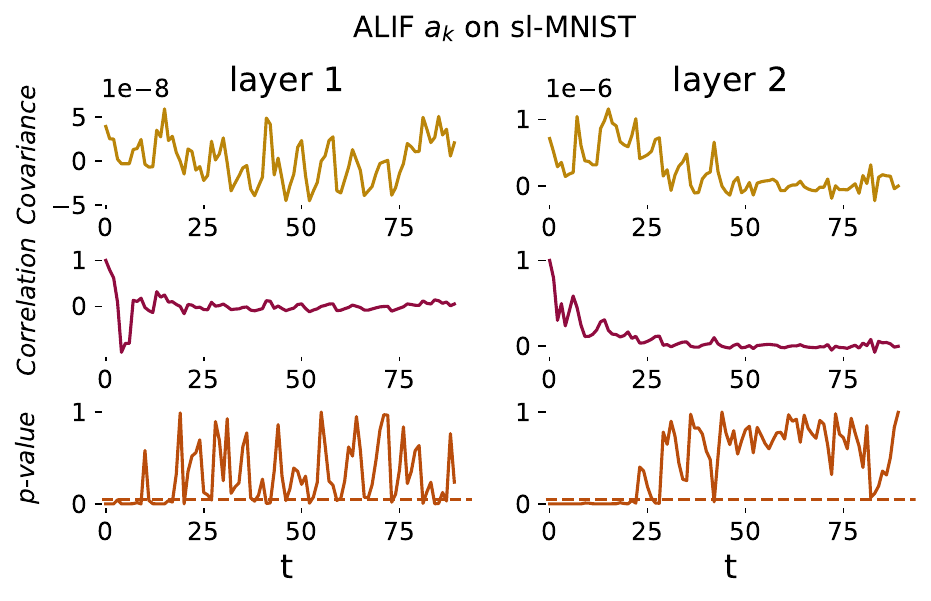}
    \includegraphics[width=.43\textwidth]{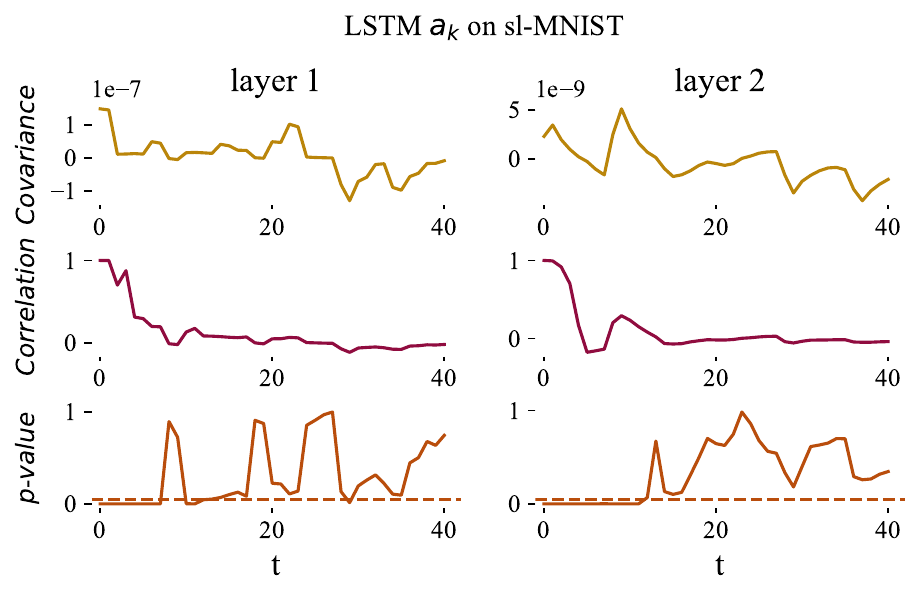}
    \includegraphics[width=.45\textwidth]{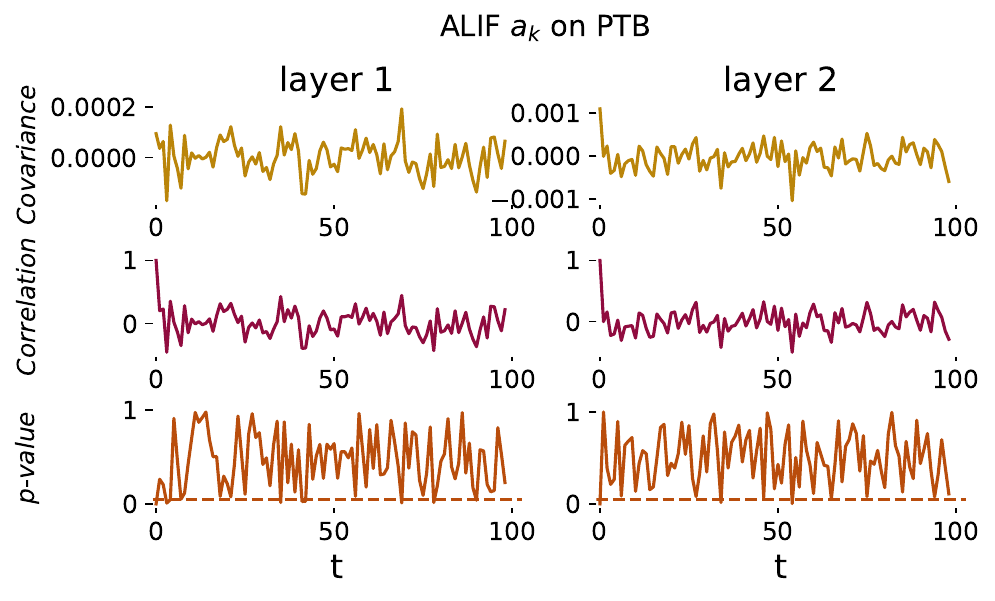}
    \includegraphics[width=.43\textwidth]{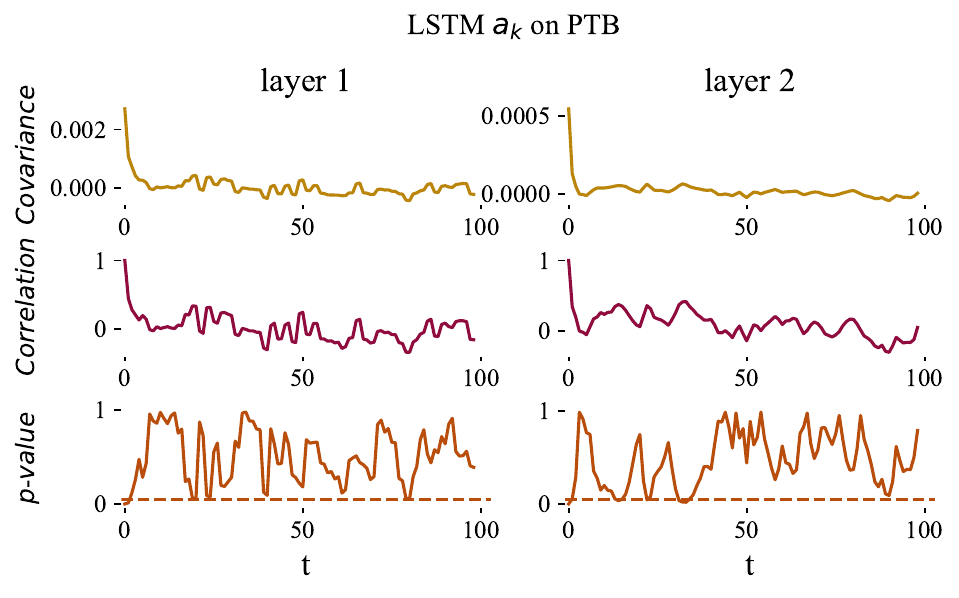}
    \caption{\textbf{Covariance of  $a_k$ decays with time or remains very close to zero after LSC pre-training.} Similar to the previous plot but this time, the networks were pre-trained to satisfy the LSC with the matrix norm induced by the vector $p$-norm with $p=2$, such that $\E a_k=1$. We can see that the covariances are very small. We can also see that the correlation quickly decays to zero, and the $p$-values above $0.05$ indicate that the small correlations are not significant. This shows that both ALIF and LSTM retain Decaying Covariance after pre-training, as defined in App.~\ref{app:lsc} and as necessary to prove Thm.~\ref{thm:lsc}.}
    \label{fig:dc2}
\end{figure}

\end{document}